\newtheorem{theorem}{Theorem}[section]
\newtheorem{lemma}{Lemma}[section]
\newtheorem{corollary}{Corollary}[section]
\newtheorem{definition}{Definition}[section]
\newcommand{\secref}[1]{Section~\ref{#1}}
\renewcommand{\eqref}[1]{Eq.~(\ref{#1})}
\newcommand{\lemref}[1]{Lemma~\ref{#1}}
\newcommand{\corollaryref}[1]{Corollary~\ref{#1}}
\newcommand{\thmref}[1]{Theorem~\ref{#1}}
\newcommand{\appref}[1]{Appendix~\ref{#1}}
\newcommand{\assref}[1]{Assumption~\ref{#1}}
\newcommand{\onefunc}{\mathbbm{1}}
\newcommand{\stam}[1]{}
\newcommand{\ignore}[1]{}
\newtheorem{assumption}[theorem]{Assumption}
\newcommand{\bx}{\mathbf{x}}
\newcommand{\bw}{\mathbf{w}}
\newcommand{\bb}{\mathbf{b}}
\newcommand{\bz}{\mathbf{z}}
\newcommand{\by}{\mathbf{y}}
\newcommand{\bxi}{\boldsymbol{\xi}}
\newcommand{\btheta}{{\boldsymbol{\theta}}}
\newcommand{\bzeta}{\boldsymbol{\zeta}}
\newcommand{\ca}{{\cal A}}
\newcommand{\cb}{{\cal B}}
\newcommand{\cd}{{\cal D}}
\newcommand{\ce}{{\cal E}}
\newcommand{\cg}{{\cal G}}
\newcommand{\cl}{{\cal L}}
\newcommand{\cf}{{\cal F}}
\newcommand{\cz}{{\cal Z}}
\newcommand{\cs}{{\cal S}}
\newcommand{\cn}{{\cal N}}
\newcommand{\reals}{{\mathbb R}}
\newcommand{\zero}{{\mathbf{0}}}
\DeclareMathOperator{\poly}{poly}
\DeclareMathOperator*{\E}{\mathbb{E}}
\DeclareMathOperator{\xormaj}{XOR-MAJ}
\newcommand{\norm}[1]{\left\|#1\right\|}
\newcommand{\note}[1]{\textcolor{red}{\textbf{#1}}}
\newcommand{\printfnsymbol}[1]{%
  \textsuperscript{\@fnsymbol{#1}}%
}
\title{Computational Complexity of Learning Neural Networks: Smoothness and Degeneracy}
\author{
        Amit Daniely\thanks{Hebrew University and Google, \texttt{amit.daniely@mail.huji.ac.il }}
	\and
	Nathan Srebro\thanks{TTI-Chicago, \texttt{nati@ttic.edu}}
	\and
	Gal Vardi\thanks{TTI-Chicago and Hebrew University, \texttt{galvardi@ttic.edu}}
}
\author{
{\small Amit Daniely}\\
{\small Hebrew University and Google}\\
{\small \texttt{amit.daniely@mail.huji.ac.il }}\\
\and
{\small Nathan Srebro}\\
{\small TTI-Chicago}\\
{\small \texttt{nati@ttic.edu}}\\
\and
{\small Gal Vardi}\\
{\small TTI-Chicago and Hebrew University}\\ 
{\small \texttt{galvardi@ttic.edu}}\\
\and
{\small Collaboration on the Theoretical Foundations of Deep Learning (\url{deepfoundations.ai})}
}
\date{}
\begin{document}

\maketitle

\begin{abstract}
	Understanding when neural networks can be learned efficiently	is a fundamental question in learning theory.
	Existing hardness results suggest that assumptions on both the input distribution and the network's weights are necessary for obtaining efficient algorithms. Moreover, it was previously shown that depth-$2$ networks can be efficiently learned under the assumptions that the input distribution is Gaussian, and the weight matrix is non-degenerate. In this work, we study whether such assumptions may suffice for learning deeper networks and prove negative results. We show that learning depth-$3$ ReLU networks under the Gaussian input distribution is hard even in the smoothed-analysis framework, where a random noise is added to the network's parameters. It implies that learning depth-$3$ ReLU networks under the Gaussian distribution is hard even if the weight matrices are non-degenerate.
	Moreover, we consider depth-$2$ networks, and show hardness of learning in the smoothed-analysis framework, where both the network parameters and the input distribution are smoothed.
	Our hardness results are under a well-studied assumption on the existence of local pseudorandom generators.
\end{abstract}

\section{Introduction}

The computational complexity of learning neural networks has been extensively studied in recent years, and there has been much effort in obtaining both upper bounds and hardness results. Nevertheless, it is still unclear when neural networks can be learned in polynomial time, namely, under what assumptions provably efficient algorithms exist. 

Existing results imply hardness already for learning depth-$2$ ReLU networks in the standard PAC learning framework (e.g., \cite{KlivansSh06,daniely2016complexity}). Thus, without any assumptions on the input distribution or the network's weights, efficient learning algorithms might not be achievable. Even when assuming that the input distribution is Gaussian, strong hardness results were obtained for depth-$3$ ReLU networks \citep{daniely2021local,chen2022hardness}, suggesting that assumptions merely on the input distribution might not suffice. Also, a hardness result by \cite{daniely2020hardness} shows that strong assumptions merely on the network's weights (without restricting the input distribution) might not suffice even for efficiently learning depth-$2$ networks. The aforementioned hardness results hold already for \emph{improper learning}, namely, where the learning algorithm is allowed to return a hypothesis that does not belong to the considered hypothesis class. Thus, a combination of assumptions on the input distribution and the network's weights seems to be necessary for obtaining efficient algorithms.

Several polynomial-time algorithms for learning depth-$2$ neural networks have been obtained, under assumptions on the input distribution and the network's weights \citep{awasthi2021efficient,janzamin2015beating,zhong2017recovery,ge2017earning,bakshi2019learning,ge2018learning}.
In these works, it is assumed that the weight matrices are non-degenerate. That is, they either assume that the condition number of the weight matrix is bounded, or some similar non-degeneracy assumption. Specifically, \cite{awasthi2021efficient} gave an efficient algorithm for learning depth-$2$ (one-hidden-layer) ReLU networks, that may include bias terms in the hidden neurons, under the assumption that the input distribution is Gaussian, and the weight matrix is non-degenerate. The non-degeneracy assumption holds w.h.p. if we add a small random noise to any weight matrix, and hence their result implies efficient learning of depth-$2$ ReLU networks under the Gaussian distribution in the \emph{smoothed-analysis} framework. 

The positive results on depth-$2$ networks suggest the following question: 
\begin{quote}
	\emph{Is there an efficient algorithm for learning ReLU networks of depth larger than $2$ under the Gaussian distribution, where the weight matrices are non-degenerate, or in the smoothed-analysis framework where the network's parameters are smoothed?}
\end{quote}
In this work, we give a negative answer to this question, already for depth-$3$ networks\footnote{We note that in our results the neural networks have the ReLU activation also in the output neuron.}. We show that learning depth-$3$ ReLU networks under the Gaussian distribution is hard even in the smoothed-analysis framework, where a random noise is added to the network's parameters. 
As a corollary, we show that learning depth-$3$ ReLU networks under the Gaussian distribution is hard even if the weight matrices are non-degenerate.
Our hardness results are under a well-studied cryptographic assumption on the existence of \emph{local pseudorandom generators (PRG)} with polynomial stretch. 

Motivated by the existing positive results on smoothed-analysis in depth-$2$ networks, we also study whether learning depth-$2$ networks with smoothed parameters can be done under weak assumptions on the input distribution. Specifically, we consider the following question:
\begin{quote}
	\emph{Is there an efficient algorithm for learning depth-$2$ ReLU networks in the smoothed-analysis framework, where both the network's parameters and the input distribution are smoothed?}
\end{quote}
We give a negative answer to this question, by showing hardness of learning depth-$2$ ReLU networks where a random noise is added to the network's parameters, and the input distribution on $\reals^d$ is obtained by smoothing an i.i.d. Bernoulli distribution on $\{0,1\}^d$. This hardness result is also under the assumption on the existence of local PRGs.

\subsection*{Related work}

\paragraph{Hardness of learning neural networks.}

Hardness of improperly learning depth-$2$ neural networks 
follows from hardness of improperly learning DNFs or intersections of halfspaces, since these classes can be expressed by depth-$2$ networks.
\cite{KlivansSh06} showed, assuming the hardness of the shortest vector problem, that learning intersections of 
halfspaces 
is hard. 
Hardness of learning DNF formulas is implied by \cite{applebaum2010public} under a combination of two assumptions: the first is related to the planted dense subgraph problem in hypergraphs, and the second is related to local PRGs.
\cite{daniely2016complexity} showed hardness of learning DNFs
under 
a common assumption, namely,
that refuting a random $K$-SAT formula is hard.
All of the above results are distribution-free, namely, they do not imply hardness of learning neural networks under some specific distribution. 

 \cite{applebaum2016fast} showed, under an assumption on a specific candidate for Goldreich's PRG (i.e., based on a predicate called $\xormaj$), that learning depth-$3$ Boolean circuits under the uniform distribution on the hypercube is hard.
 \cite{daniely2021local} proved distribution-specific hardness of learning Boolean circuits of depth-$2$ (namely, DNFs) and depth-$3$, under the assumpion on the existence of local PRGs that we also use in this work. For DNF formulas, they showed hardness of learning under a distribution where each component is drawn i.i.d. from a Bernoulli distribution (which is not uniform). For depth-$3$ Boolean circuits, they showed hardness of learning under the uniform distribution on the hypercube. 
Since the Boolean circuits can be expressed by ReLU networks of the same depth, these results readily translate to distribution-specific hardness of learning neural networks. 
\cite{chen2022hardness} showed hardness of learning depth-$2$ neural networks under the uniform distribution on the hypercube, based on an assumption on the hardness of the \emph{Learning with Rounding (LWR)} problem.
Note that the input distributions in the above results are supported on the hypercube, and they do not immediately imply hardness of learning neural networks under continuous distributions.

When considering the computational complexity of learning neural networks, perhaps the most natural choice of an input distribution is the standard Gaussian distribution. \cite{daniely2021local} established hardness of learning depth-$3$ networks under this distribution, based on the assumpion on the existence of local PRGs. \cite{chen2022hardness} also showed hardness of learning depth-$3$ networks under the Gaussian distribution, but their result holds already for networks that do not have an activation function in the output neuron, and it is based on the LWR assumption. They also showed hardness of learning constant depth ReLU networks from label queries (i.e., where the learner has the ability to query the value of the target network at any desired input) under the Gaussian distribution, based either on the decisional Diffie-Hellman or the Learning with Errors assumptions.
 
The above results suggest that assumptions on the input distribution might not suffice for achieving an efficient algorithm for learning depth-$3$ neural networks. A natural question is whether assumptions on the network weights may suffice. \cite{daniely2020hardness} showed (under the assumption that refuting a random $K$-SAT formula is hard) that distribution-free learning of depth-$2$ neural networks is hard already if the weights are drawn from some ``natural'' distribution or satisfy some ``natural'' properties. Thus, if we do not impose any assumptions on the input distribution, then even very strong assumptions on the network's weights do not suffice for efficient learning.

Several works in recent years have shown hardness of distribution-specific learning shallow neural networks using gradient-methods or statistical query (SQ) algorithms \citep{shamir2018distribution,song2017complexity,vempala2019gradient,goel2020superpolynomial,diakonikolas2020algorithms,chen2022hardness}. 
Specifically, \cite{goel2020superpolynomial} and \cite{diakonikolas2020algorithms} gave superpolynomial \emph{correlational SQ} (CSQ) lower bounds for learning depth-$2$ networks under the Gaussian distribution. Since there is separation between CSQ and SQ, these results do not imply hardness of general SQ learning.  \cite{chen2022hardness} showed general SQ lower bounds for depth-$3$ networks under the Gaussian distribution.
It is worth noting that while the SQ framework captures some variants of the gradient-descent algorithm, it does not capture, for example, stochastic gradient-descent (SGD), which examines training points individually (see a discussion in \cite{goel2020superpolynomial}).

We emphasize that none of the above distribution-specific hardness results for neural networks (either for improper learning or SQ learning) holds in the smoothed analysis framework or for non-degenerate weights.  

While hardness of proper learning is implied by hardness of improper learning, there are also works that show hardness of properly learning depth-$2$ networks under more standard assumptions (cf. \cite{goel2020tight}). Finally, distribution-specific hardness of learning a single ReLU neuron in the agnostic setting was shown in \cite{goel2019time,goel2020statistical,diakonikolas2020near}. 

\paragraph{Learning neural networks in polynomial time.}

\cite{awasthi2021efficient} gave a polynomial-time algorithm for learning depth-$2$ (one-hidden-layer) ReLU networks,
under the assumption that the input distribution is Gaussian, and the weight matrix of the target network is non-degenerate. Their algorithm is based on tensor decomposition, and it can handle bias terms in the hidden layer. Their result also implies that depth-$2$ ReLU networks with Gaussian inputs can be learned efficiently under the smoothed-analysis framework. Our work implies that such a result might not be possible in depth-$3$ networks (with activation in the output neuron). Prior to \cite{awasthi2021efficient}, several works gave polynomial time algorithms for learning depth-$2$ neural networks where the input distribution is Gaussian and the weight matrix is non-degenerate \citep{janzamin2015beating,zhong2017recovery,ge2017earning,ge2018learning,bakshi2019learning}, but these works either do not handle the presence of bias terms 
or do not handle the ReLU activation. 
Some of the aforementioned works consider networks with multiple outputs, and allow certain non-Gaussian input distributions. 

Provable guarantees for learning neural networks in super-polynomial time were given in \cite{diakonikolas2020algorithms,chen2022learning,diakonikolas2020small,goel2019learning,vempala2019gradient,zhang2016l1,goel2017reliably}. Distribution-free learning of a single ReLU neuron (in the realizable setting) can be done in polynomial time \citep{kalai2009isotron,kakade2011efficient}.

\section{Preliminaries}

\subsection{Notation}

We use bold-face letters to denote vectors, e.g., $\bx=(x_1,\ldots,x_d)$. 
For a vector $\bx$ and a sequence $S=(i_1,\ldots,i_k)$ of $k$ indices, we let $\bx_S=(x_{i_1},\ldots,x_{i_k})$, i.e., the restriction of $\bx$ to the indices $S$.
We denote by $\onefunc[\cdot]$ the indicator function, for example $\onefunc[t \geq 5]$ equals $1$ if $t \geq 5$ and $0$ otherwise. 
For an integer $d \geq 1$ we denote $[d]=\{1,\ldots,d\}$.
We denote by $\cn(0,\sigma^2)$ the normal distribution with mean $0$ and variance $\sigma^2$, and by $\cn(\zero,\Sigma)$ the multivariate normal distribution with mean $\zero$ and covariance matrix $\Sigma$. The identity matrix of size $d$ is denoted by $I_d$.
For $\bx \in \reals^d$ we denote by $\norm{\bx}$ the Euclidean norm.
We use $\poly(a_1,\ldots,a_n)$ to denote a polynomial in $a_1,\ldots,a_n$.

\subsection{Local pseudorandom generators}

An $(n,m,k)$-hypergraph is a hypergraph over $n$ vertices $[n]$ with $m$ hyperedges $S_1,\ldots,S_m$, each of cardinality $k$. Each hyperedge $S = (i_1,\ldots,i_k)$ is ordered, and all the $k$ members of a hyperedge are distinct.
We let $\cg_{n,m,k}$ be the distribution over such hypergraphs in which a hypergraph is chosen by picking each hyperedge uniformly and independently at random among all the possible $n \cdot (n-1) \cdot \ldots \cdot (n-k+1)$ ordered hyperedges.
Let $P:\{0,1\}^k \rightarrow \{0,1\}$ be a predicate, and let $G$ be a $(n,m,k)$-hypergraph.
We call {\em Goldreich's pseudorandom generator (PRG)} \citep{goldreich2000candidate} the function $f_{P,G}:\{0,1\}^n \rightarrow \{0,1\}^m$ such that for $\bx \in \{0,1\}^n$, we have $f_{P,G}(\bx) = (P(\bx_{S_1}),\ldots,P(\bx_{S_m}))$.
The integer $k$ is called the {\em locality} of the PRG. If $k$ is a constant then the PRG and the predicate $P$ are called {\em local}.
We say that the PRG has {\em polynomial stretch} if $m = n^s$ for some constant $s>1$.
We let $\cf_{P,n,m}$ denote the collection of functions $f_{P,G}$ where $G$ is an $(n,m,k)$-hypergraph. We sample a function from $\cf_{P,n,m}$ by choosing a random hypergraph $G$ from $\cg_{n,m,k}$.

We denote by $G \xleftarrow{R} \cg_{n,m,k}$ the operation of sampling a hypergraph $G$ from $\cg_{n,m,k}$, and by $\bx \xleftarrow{R} \{0,1\}^n$ the operation of sampling $\bx$ from the uniform distribution on $\{0,1\}^n$.
We say that $\cf_{P,n,m}$ is $\varepsilon$-pseudorandom generator ($\varepsilon$-PRG) if for every polynomial-time probabilistic algorithm $\ca$ the {\em distinguishing advantage}
\[
\left|
\Pr_{G \xleftarrow{R} \cg_{n,m,k}, \bx \xleftarrow{R} \{0,1\}^n}[\ca(G,f_{P,G}(\bx))=1] -
\Pr_{G \xleftarrow{R} \cg_{n,m,k}, \by \xleftarrow{R} \{0,1\}^m}[\ca(G,\by)=1]
\right|
\]
is at most $\varepsilon$. Thus, the distinguisher $\ca$ is given a random hypergraph $G$ and a string $\by \in \{0,1\}^m$, and its goal is to distinguish between the case where $\by$ is chosen at random, and the case where $\by$ is a random image of $f_{P,G}$.

Our assumption is 
that local PRGs with polynomial stretch and constant distinguishing advantage exist:
\begin{assumption}
\label{ass:localPRG}
For every constant $s>1$, there exists a constant $k$ and a predicate $P:\{0,1\}^k \rightarrow \{0,1\}$, such that $\cf_{P,n,n^s}$ is $\frac{1}{3}$-PRG.
\end{assumption}

We remark that the same assumption was used by \citet{daniely2021local} to show hardness-of-learning results. 
Local PRGs have been extensively studied in the last two decades.
In particular, local PRGs with polynomial stretch have shown to have remarkable applications, such as secure-computation with constant computational overhead \citep{ishai2008cryptography,applebaum2017secure}, and general-purpose obfuscation based on constant degree multilinear maps (cf. \citet{lin2016indistinguishability,linVai2016indistinguishability}).
Significant evidence for Assumption~\ref{ass:localPRG} was shown in \citet{applebaum2013pseudorandom}. 
Moreover, a concrete candidate for a local PRG, based on the XOR-MAJ predicate, was shown to be secure against all known attacks \citep{applebaum2016algebraic,couteau2018concrete,meaux2019improved,applebaum2016fast}. 
See \citet{daniely2021local} for further discussion on the 
assumption, and on prior work regarding the relation between Goldreich's PRG and hardness of learning.

\ignore{
An $(n,m,k)$-hypergraph is a hypergraph over $n$ vertices $[n]$ with $m$ hyperedges $S_1,\ldots,S_m$, each of cardinality $k$. Each hyperedge $S = (i_1,\ldots,i_k)$ is ordered, and all the $k$ members of a hyperedge are distinct.
We let $\cg_{n,m,k}$ be the distribution over such hypergraphs in which a hypergraph is chosen by picking each hyperedge uniformly and independently at random among all the possible $n \cdot (n-1) \cdot \ldots \cdot (n-k+1)$ ordered hyperedges.
Let $P:\{0,1\}^k \rightarrow \{0,1\}$ be a predicate, and let $G$ be a $(n,m,k)$-hypergraph.
We call {\em Goldreich's pseudorandom generator (PRG)} \citep{goldreich2000candidate} the function $f_{P,G}:\{0,1\}^n \rightarrow \{0,1\}^m$ such that for $\bx \in \{0,1\}^n$, we have $f_{P,G}(\bx) = (P(\bx_{S_1}),\ldots,P(\bx_{S_m}))$.
The integer $k$ is called the {\em locality} of the PRG. If $k$ is a constant then the PRG and the predicate $P$ are called {\em local}.
We say that the PRG has {\em polynomial stretch} if $m = n^s$ for some constant $s>1$.
We let $\cf_{P,n,m}$ denote the collection of functions $f_{P,G}$ where $G$ is an $(n,m,k)$-hypergraph. We sample a function from $\cf_{P,n,m}$ by choosing a random hypergraph $G$ from $\cg_{n,m,k}$.

We denote by $G \xleftarrow{R} \cg_{n,m,k}$ the operation of sampling a hypergraph $G$ from $\cg_{n,m,k}$, and by $\bx \xleftarrow{R} \{0,1\}^n$ the operation of sampling $\bx$ from the uniform distribution on $\{0,1\}^n$.
We say that $\cf_{P,n,m}$ is $\varepsilon$-pseudorandom generator ($\varepsilon$-PRG) if for every polynomial-time probabilistic algorithm $\ca$ the {\em distinguishing advantage}
\[
\left|
\Pr_{G \xleftarrow{R} \cg_{n,m,k}, \bx \xleftarrow{R} \{0,1\}^n}[\ca(G,f_{P,G}(\bx))=1] -
\Pr_{G \xleftarrow{R} \cg_{n,m,k}, \by \xleftarrow{R} \{0,1\}^m}[\ca(G,\by)=1]
\right|
\]
is at most $\varepsilon$. Thus, the distinguisher $\ca$ is given a random hypergraph $G$ and a string $\by \in \{0,1\}^m$, and its goal is to distinguish between the case where $\by$ is chosen at random, and the case where $\by$ is a random image of $f_{P,G}$.

Our assumption is that local PRGs with polynomial stretch and constant distinguishing advantage exist:
\begin{assumption}
\label{ass:localPRG}
For every constant $s>1$, there exists a constant $k$ and a predicate $P:\{0,1\}^k \rightarrow \{0,1\}$, such that $\cf_{P,n,n^s}$ is $\frac{1}{3}$-PRG.
\end{assumption}

We remark that the same assumption was used by \cite{daniely2021local} to show hardness-of-learning results.
Note that we assume constant distinguishing advantage. In the literature, a requirement of negligible distinguishing advantage\footnote{More formally, that for $1-o_n(1)$ fraction of the hypergraphs, the distinguisher has no more than negligible advantage.} is often considered (cf. \cite{applebaum2016algebraic,applebaum2016cryptographic,couteau2018concrete}). Thus, our requirement from the PRG is weaker.

Local PRGs have been extensively studied in the last two decades.
In particular, local PRGs with polynomial stretch have shown to have remarkable applications, such as secure-computation with constant computational overhead \citep{ishai2008cryptography,applebaum2017secure}, and general-purpose obfuscation based on constant degree multilinear maps (cf. \cite{lin2016indistinguishability,linVai2016indistinguishability}).
Significant evidence for Assumption~\ref{ass:localPRG} was shown in \citet{applebaum2013pseudorandom}. 
Moreover, a concrete candidate for a local PRG, based on the XOR-MAJ predicate, was shown to be secure against all known attacks \citep{applebaum2016algebraic,couteau2018concrete,meaux2019improved,applebaum2016fast}. 
See  \cite{daniely2021local} for further discussion on the assumption, and on prior work regarding the relation between Goldreich's PRG and hardness of learning.
}

\subsection{Neural networks}

We consider feedforward ReLU networks. Starting from an input $\bx \in \reals^d$, each layer in the network is of the form $\bz \mapsto \sigma(W_i \bz + \bb_i)$, where $\sigma(a) = [a]_+ = \max\{0,a\}$ is the ReLU activation which applies to vectors entry-wise, $W_i$ is the weight matrix, and $\bb_i$ are the bias terms. 
The {\em weights vector} of the $j$-th neuron in the $i$-th layer is the $j$-th row of $W_i$, and its {\em outgoing-weights vector} is the $j$-th column of $W_{i+1}$.
We define the \emph{depth} of the network as the number of layers.
Unless stated otherwise, the output neuron also has a ReLU activation function. Note that a depth-$k$ network with activation in the output neuron has $k$ non-linear layers.
A neuron which is not an input or output neuron is called a {\em hidden neuron}.
We sometimes consider neural networks with multiple outputs.
The parameters of the neural network is the set of its weight matrices and bias vectors. We often view the parameters as a vector $\btheta \in \reals^p$ obtained by concatenating these matrices and vectors.
For $B \geq 0$, we say that the parameters are $B$-bounded if the absolute values of all weights and biases are at most $B$.

\ignore{
We consider feedforward neural networks, computing functions from $\reals^d$ to $\reals$. The network is composed of layers of neurons, such that each neuron computes a function of the form $\bx \mapsto \sigma(\bw^{\top}\bx+b)$, where $\bw$ is a weight vector, $b$ is a bias term and $\sigma: \reals \mapsto \reals$ is a non-linear activation function. The \emph{input to the neuron} is defined as $\bw^{\top}\bx+b$, i.e., the value before applying the activation function.
In this work, we focus on the ReLU activation function, namely, $\sigma(z) = [z]_+ = \max\{0,z\}$. For a matrix $W = (\bw_1,\ldots,\bw_m)$, we let $\sigma(W^\top \bx+\bb)$ be a shorthand for $\left(\sigma(\bw_1^{\top}\bx+b_1),\ldots,\sigma(\bw_m^{\top}\bx+b_m)\right)$, and define a layer of $m$ neurons as $\bx \mapsto \sigma(W^\top \bx+\bb)$. By denoting the output of the $i$-th layer as $O_i$, we can define a network of arbitrary depth recursively by $O_{i+1}=\sigma(W_{i+1}^\top O_i+\bb_{i+1})$.
The {\em weights vector} of the $j$-th neuron in the $i$-th layer is the $j$-th column of $W_i$, and its {\em outgoing-weights vector} is the $j$-th row of $W_{i+1}$.
We define the \emph{depth} of the network as the number of layers.
Unless stated otherwise, the output neuron also has a ReLU activation function.
Note that a depth-$k$ network with activation in the output neuron has $k$ non-linear layers.
A neuron which is not an input or output neuron is called a {\em hidden neuron}.
We sometimes consider neural networks with multiple outputs.
The parameters of the neural network is the set of its weight matrices and bias vectors. We often view the parameters as a vector $\btheta \in \reals^p$ obtained by concatenating these matrices and vectors.
For $B \geq 0$, we say that the parameters are $B$-bounded if the absolute values of all weights and biases are at most $B$.
}

\subsection{Learning neural networks and the smoothed-analysis framework}

We first define learning neural networks under the standard PAC framework:

\begin{definition}[Distribution-specific PAC learning]  \label{def:PAC}
	Learning depth-$k$ neural networks under an input distribution $\cd$ on $\reals^d$ 
	is defined by the following framework:
\begin{enumerate}
	\item An adversary chooses a set of $B$-bounded parameters $\btheta \in \reals^p$ for a depth-$k$ neural network $N_\btheta:\reals^d \to \reals$, as well as some $\epsilon > 0$. 
	\item Consider an examples oracle, such that each example $(\bx,y) \in \reals^d \times \reals$ is drawn i.i.d. with $\bx \sim \cd$
	and $y = N_{\btheta}(\bx)$. 
	Then, given access to the examples oracle, the goal of the learning algorithm $\cl$ is to return with probability at least $\frac{3}{4}$ a hypothesis $h: \reals^d \to \reals$ such that 
\[
	\E_{\bx \sim \cd} \left[ \left(h(\bx) - N_{\btheta}(\bx) \right)^2 \right] \leq \epsilon~.
\]
	We say that $\cl$ is \emph{efficient} if it runs in time $\poly(d,p,B,1/\epsilon)$.
\end{enumerate}
\end{definition}

We consider learning in the \emph{smoothed-analysis} framework \citep{spielman2004smoothed}, which is a popular paradigm for analyzing non-worst-case computational complexity \citep{roughgarden2021beyond}. 
The smoothed-analysis 
framework has been successfully applied to many learning problems (e.g., \cite{awasthi2021efficient,ge2018learning,kalai2009learning,bhaskara2014smoothed,bhaskara2019smoothed,haghtalab2020smoothed,ge2015learning,kalai2008decision,brutzkus2020id3}).
In the smoothed-analysis setting, the target network is not purely controlled by an adversary. Instead, the adversary can first generate an arbitrary network, and the parameters for this network (i.e., the weight matrices and bias terms) will be randomly perturbed to yield a perturbed network. The algorithm only needs to work with high probability on the perturbed network. This limits the power of the adversary and prevents it from creating highly degenerate cases. 
Formally,  we consider the following framework (we note that a similar model was considered in \cite{awasthi2021efficient} and \cite{ge2018learning}):
\begin{definition}[Learning with smoothed parameters]  \label{def:smoothed params}
	Learning depth-$k$ neural networks with smoothed parameters under an input distribution $\cd$ is defined as follows: 
        \begin{enumerate}
            	\item An adversary chooses a set of $B$-bounded parameters $\btheta \in \reals^p$ for a depth-$k$ neural network $N_\btheta:\reals^d \to \reals$, as well as some $\tau, \epsilon > 0$. 
            	\item A perturbed set of parameters $\hat{\btheta}$ is obtained by a random perturbation to $\btheta$, namely, $\hat{\btheta} = \btheta + \bxi$ for $\bxi \sim \cn(\zero, \tau^2 I_p)$.
            	\item Consider an examples oracle, such that each example $(\bx,y) \in \reals^d \times \reals$ is drawn i.i.d. with $\bx \sim \cd$ and $y = N_{\hat{\btheta}}(\bx)$. Then, given access to the examples oracle, the goal of the learning algorithm $\cl$ is to return with probability at least $\frac{3}{4}$ (over the random perturbation $\bxi$ and the internal randomness of $\cl$) a hypothesis $h: \reals^d \to \reals$ such that 
            \[
            	\E_{\bx \sim \cd} \left[ \left(h(\bx) - N_{\hat{\btheta}}(\bx) \right)^2 \right] \leq \epsilon~.
            \]
            	We say that $\cl$ is \emph{efficient} if it runs in time $\poly(d,p,B,1/\epsilon,1/\tau)$.
        \end{enumerate}
\end{definition}
	
	Finally, we also consider a setting where both the parameters and the input distribution are smoothed: 
\begin{definition}[Learning with smoothed  parameters and inputs] \label{def:smoothed params and inputs}
	Learning depth-$k$ neural networks with smoothed parameters and inputs under an input distribution $\cd$ is defined as follows:
        \begin{enumerate}
        	\item An adversary chooses a set of $B$-bounded parameters $\btheta \in \reals^p$ for a depth-$k$ neural network $N_\btheta:\reals^d \to \reals$, as well as some $\tau, \omega, \epsilon > 0$. 
        	\item A perturbed set of parameters $\hat{\btheta}$ is obtained by a random perturbation to $\btheta$, namely, $\hat{\btheta} = \btheta + \bxi$ for $\bxi \sim \cn(\zero, \tau^2 I_p)$. Moreover, a smoothed input distribution $\hat{\cd}$ is obtained 
        	from $\cd$, such that $\hat{\bx} \sim \hat{\cd}$ is chosen by drawing $\bx \sim \cd$ and adding a random perturbation from $\cn(\zero, \omega^2 I_d)$.
        	\item Consider an examples oracle, such that each example $(\bx,y) \in \reals^d \times \reals$ is drawn i.i.d. with $\bx \sim \hat{\cd}$	and $y = N_{\hat{\btheta}}(\bx)$. Then, given access to the examples oracle, the goal of the learning algorithm $\cl$ is to return with probability at least $\frac{3}{4}$ (over the random perturbation $\bxi$ and the internal randomness of $\cl$) a hypothesis $h: \reals^d \to \reals$ such that 
        \[
        	\E_{\bx \sim \hat{\cd}} \left[ \left(h(\bx) - N_{\hat{\btheta}}(\bx) \right)^2 \right] \leq \epsilon~.
        \]
        	We say that $\cl$ is \emph{efficient} if it runs in time $\poly(d,p,B,1/\epsilon,1/\tau,1/\omega)$.
        \end{enumerate}
\end{definition}

We emphasize that all of the above definitions consider learning in the distribution-specific setting. Thus, the learning algorithm may depend on the specific input distribution $\cd$. 

\section{Results}

As we discussed in the introduction, there exist efficient algorithms for learning depth-$2$ (one-hidden-layer) ReLU networks with smoothed parameters under the Gaussian distribution. We now show that such a result may not be achieved for depth-$3$ networks:

\begin{theorem} \label{thm:hard smoothed}
 	Under \assref{ass:localPRG}, there is no efficient algorithm that learns 
	depth-$3$ neural networks with smoothed parameters (in the sense of Definition~\ref{def:smoothed params}) under the standard Gaussian distribution.
\end{theorem}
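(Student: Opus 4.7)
The plan is to reduce distinguishing Goldreich's local PRG from Assumption~\ref{ass:localPRG} to learning depth-$3$ ReLU networks with smoothed parameters under the standard Gaussian. Suppose $\cl$ is an efficient learner with runtime at most $n^c$. I would invoke the assumption with stretch $s>c$, and by padding the associated predicate $P$ if needed, take the locality $k$ large enough that the $m=n^s$ edges of a random $G \sim \cg_{n,m,k}$ are distinct with high probability, obtaining a PRG family $\cf_{P,n,n^s}$; I then build a polynomial-time distinguisher $\ca$ of advantage exceeding $\frac13$, contradicting the assumption.

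The core of the reduction is a family of depth-$3$ networks $\{N_\bx\}_{\bx\in\{0,1\}^n}$ on $\reals^d$ that, on Gaussian input $\bz$, computes $N_\bx(\bz) = \alpha\, P(\bx_{h(\bz)})$, where $h:\reals^d \to [n]^k$ is a fixed decoder reading each of its $k$ output indices from the signs of $\lceil\log_2 n\rceil$ dedicated coordinates of $\bz$, the seed $\bx \in \{0,1\}^n$ is encoded in the weights, and $\alpha>0$ is a small scaling factor. The first hidden layer uses scaled ReLU-differences to produce approximate sign indicators for the decoding coordinates; the second contains $O(kn)$ \emph{selector} neurons, each gating a weight $x_\ell$ by an approximate $\mathrm{AND}$ of the $\lceil\log_2 n\rceil$ sign indicators picking out $\ell$, so that their sum within each block $i$ recovers $x_{h_i(\bz)}$; the output ReLU then realizes $P$ on the $k$ selected bits via a single linear-threshold combination of these selectors, scaled by $\alpha$ so the margin fits inside one ReLU. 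Crucially, the hypergraph $G$ is \emph{not} encoded anywhere in the weights, so the parameter count $p$ and bound $B$ are $\poly(n)$ independent of $m$; consequently $\cl$'s sample complexity on this class is some fixed $T \leq n^c \ll m$, which powers the fresh-sample argument below. A standard robustness analysis then shows that under $\cn(\zero,\tau^2 I_p)$ smoothing with $\tau=1/\poly(n)$ (allowed since $\cl$ runs in time $\poly(1/\tau)$), one has $\E_\bz[(N_\bx(\bz) - N_{\bx+\bxi}(\bz))^2]\leq 1/\poly(n)$ with probability $1-1/\poly(n)$ over $\bxi$.

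The distinguisher $\ca$, given $(G,\by)$ with $\by \in \{0,1\}^m$, reverse-engineers $T$ training samples from the first $T$ edges of $G$: for each $i\leq T$, it samples $\bz_i$ by drawing the decoding sign-blocks consistently with $S_i$ (using one-sided half-Gaussians) and filling the remaining coordinates from $\cn(0,1)$, then labels it $\alpha\by_i$. Since $G\sim\cg_{n,m,k}$ renders the $S_i$ i.i.d.\ uniform in $[n]^k$, the marginal of each $\bz_i$ is exactly $\cn(\zero,I_d)$. $\ca$ feeds these examples to $\cl$, obtains a hypothesis $h$, and estimates $\widehat{\mathrm{err}} := \E[(h(\bz)-\alpha\by_j)^2]$ on held-out samples built identically from fresh edges $S_{T+1},\dots,S_m$; $\ca$ declares ``pseudorandom'' iff $\widehat{\mathrm{err}}\leq 2\epsilon$ for a fixed small constant $\epsilon$. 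In the PRG case the labels equal $N_\bx(\bz_i)$ for the $\bx$ with $\by=f_{P,G}(\bx)$, so $\cl$'s smoothed-parameter guarantee (marginalized via the coupling of the random $\bx$ with the smoothing randomness, and absorbing the $1/\poly(n)$ gap between ideal and smoothed labels) delivers $h$ with $\widehat{\mathrm{err}}\leq 2\epsilon$ with probability $\geq \frac34$. In the uniform case, the distinctness of the $S_i$ ensures that each held-out label $\by_j$ is an independent uniform bit not seen by $\cl$; for \emph{any} $h$ returned by $\cl$ the quantity $\E_{\bz,\by_j}[(h(\bz)-\alpha\by_j)^2]$ is at least $\alpha^2/4$ (minimizing over the value $h(\bz)$ against a $\{0,\alpha\}$-valued target independent of $h$), which for appropriate $\alpha$ and $\epsilon$ is comfortably above $2\epsilon$.

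The main technical obstacle I anticipate is meeting two simultaneous demands on the network construction: (i) using only $\poly(n)$ parameters, which keeps $T\ll m$ and powers the fresh-sample argument in the uniform case and forces $G$ to be absent from the weights, and (ii) being robust to $\cn(\zero,\tau^2 I_p)$ perturbations of \emph{every} parameter, including those encoding $\bx$. Reconciling these requires a careful scale choice: the indicator layer must use weight-scale $\omega(1/\tau)$ so that sign-indicator transitions stay sharp under perturbation, yet the selector sums and output values must remain $O(1)$ so that the predicate $P$ fits inside a single output ReLU's margin --- which in turn forces the use of a $P$ that is (effectively) a linear-threshold function on the $k$ extracted bits, or of an extended construction that places part of $P$'s evaluation inside the second hidden layer. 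A secondary subtlety is the alignment between $\ca$'s reverse-engineered samples and genuine smoothed-network samples: since $\ca$ cannot apply its own parameter smoothing without knowing $\bx$, one must argue, via the randomness of $\bx$, that $\cl$'s guarantee (which holds on a $\geq\frac34$ fraction of perturbations) transfers to $\ca$'s samples with only an $o(\epsilon)$ loss --- a property natural for a squared-loss PAC learner, but a step that must be carried out in the precise model of Definition~\ref{def:smoothed params}.
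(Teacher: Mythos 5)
Your high-level strategy matches the paper's: reduce breaking a local PRG to learning depth-$3$ ReLU networks with smoothed parameters, by reverse-engineering Gaussian training examples from the challenge $(G,\by)$, keeping the hypergraph out of the network parameters so that the sample complexity is bounded independently of $m=n^s$, and testing the learned hypothesis on held-out samples built from fresh edges. But two of the steps you flag as ``obstacles I anticipate'' are in fact where your construction breaks, and they are resolved in the paper by a different encoding and a different oracle design, not by tuning scales.

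\textbf{The depth budget does not close with a binary decoder.} You decode each of the $k$ indices from $\lceil\log_2 n\rceil$ sign bits, compute selector neurons (ANDs of $\log n$ indicators gating $x_\ell$) in the second hidden layer, and then ask the single output ReLU to compute $P$ on the $k$ selected bits. But $P$ is an arbitrary constant-arity predicate guaranteed by Assumption~\ref{ass:localPRG}; it is generally not a linear threshold of its $k$ inputs, so the output ReLU cannot realize it. You acknowledge this and suggest ``an extended construction that places part of $P$'s evaluation inside the second hidden layer,'' but your second hidden layer is already occupied by the selectors; pushing $P$'s nonlinearity after the selectors costs an extra layer and yields depth $4$. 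The paper avoids this by using the \emph{unary} (one-cold) encoding $\bz^S\in\{0,1\}^{kn}$ from \cite{daniely2021local}, together with \lemref{lem:from P to DNF}: $P_\bx(\bz^S)$ is a DNF with at most $2^k$ (a constant number of) terms, each a conjunction of positive literals \emph{directly over the $kn$ unary bits}. Each term is thus a single linear threshold of the first layer's output, so the DNF terms live in the second hidden layer and the final ReLU just needs an OR (a sum followed by a threshold). This is the step that makes depth $3$ suffice for arbitrary $P$, and it depends on the unary encoding; with a compact binary encoding the $2^k$-term DNF is not available.

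\textbf{Your oracle labels are not exactly realizable by the perturbed network.} Definition~\ref{def:smoothed params} promises the learner labels $y=N_{\hat\btheta}(\bz)$ exactly; if the labels deviate, the learner has no guarantee at all. You label each synthetic example with $\alpha y_i = \alpha P(\bx_{S_i})$, which equals $N_\btheta(\bz_i)$ for the \emph{unperturbed} network, and propose to ``absorb the $1/\poly(n)$ gap between ideal and smoothed labels.'' There is nothing to absorb: a square-loss bound between $N_\btheta$ and $N_{\hat\btheta}$ does not license feeding the learner labels from the wrong function. The paper spends most of Section~\ref{sec:proof of hard smoothed} on exactly this. It chooses $\tau$ so that with high probability the perturbation changes each neuron's pre-activation by at most $\tfrac12$, designs the target so that on the inputs the oracle will generate its output is a near-binary $\{0,\hat b\}$ signal, adds a third subnetwork $N_3$ to detect and neutralize boundary inputs where the signal could flip, and then defines oracle labels (as $0$, $\hat b$, or $[\hat b-\hat N_3(\tilde\bz)]_+$) that the oracle \emph{can} compute without knowing $\bx$ and that \emph{exactly equal} $\hat N(\tilde\bz)$ with probability $\geq 39/40$ (\lemref{lem:realizable}). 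Nothing in your proposal plays the role of $N_3$ or of the case analysis that makes the labels exact.

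A secondary issue: with the binary decoder, a random sign-block decodes to a uniform element of $[2^{\lceil\log_2 n\rceil}]$, not of $[n]$, and nothing enforces the $k$ indices to be distinct, so your reverse-engineered $\bz_i$ does not have the right conditional structure and your $N_\bx$ is not well-defined off the valid range. The paper handles both via $N_2$ (which detects invalid encodings and forces the output to $0$) and a threshold $c$ with $\Pr[t\le c]=1/n$ so that the one-cold marginal of $\Psi(\bz')$ matches the hyperedge-sampling process exactly, with the oracle explicitly replacing a valid random encoding by $\bz^{S_i}$ so that the overall marginal of $\tilde\bz$ is exactly $\cn(\zero,I_{n^2})$.

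In short, the reduction skeleton is right, but the network you describe cannot express the target (so the pseudorandom case fails), and the labels you feed the learner are not realizable (so the learner's guarantee does not apply). Both require the paper's choices — unary encoding plus the DNF lemma, and the $N_2/N_3$ machinery plus the carefully case-split oracle — rather than scale-tuning.
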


We prove the theorem in \secref{sec:proof of hard smoothed}.
Next, we conclude that learning depth-$3$ neural networks 
under the Gaussian distribution on $\reals^d$ is hard in the standard PAC framework even if all weight matrices are non-degenerate, namely, when the minimal singular values of the weight matrices are lower bounded by $1/\poly(d)$. 
As we discussed in the introduction, in one-hidden-layer networks 
with similar assumptions
there exist efficient learning algorithms. 

\begin{corollary} \label{cor:non-degenerate}
	Under \assref{ass:localPRG}, there is no efficient algorithm that learns depth-$3$ neural networks (in the sense of Definition~\ref{def:PAC}) under the standard Gaussian distribution on $\reals^d$, even if the smallest singular value of each weight matrix is at least $1/\poly(d)$.
\end{corollary}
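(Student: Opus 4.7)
The plan is to derive Corollary~\ref{cor:non-degenerate} from Theorem~\ref{thm:hard smoothed} by a direct reduction: an efficient PAC-learner for depth-$3$ networks with non-degenerate weight matrices would yield an efficient smoothed-parameters learner, contradicting the theorem. Suppose, for contradiction, that $\cl$ is a learner that, given a depth-$3$ network with $B$-bounded parameters $\btheta \in \reals^p$ under $\cn(\zero, I_d)$ inputs with the promise that every weight matrix has smallest singular value at least $\sigma \geq 1/\poly(d)$, runs in time $\poly(d,p,B,1/\epsilon,1/\sigma)$ and returns with probability $\geq 3/4$ an $\epsilon$-accurate hypothesis.

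Given a smoothed-parameters instance $(\btheta, \tau, \epsilon)$, the perturbation $\hat{\btheta} = \btheta + \bxi$ with $\bxi \sim \cn(\zero, \tau^2 I_p)$ acts entrywise on each weight matrix: $\hat{W}_i = W_i + E_i$ with $E_i$ having i.i.d.\ $\cn(0,\tau^2)$ entries. I invoke the classical smoothed analysis of matrix conditioning (Sankar--Spielman--Teng for square matrices, with rectangular extensions via Rudelson--Vershynin type bounds): for every fixed matrix $W$ of dimension at most $\max(p,d)$ and every $t>0$, $\Pr[\sigma_{\min}(W+E) \leq t] \leq C\sqrt{\max(p,d)}\,t/\tau$ for a universal constant $C$. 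Choosing $t = \tau/\poly(d,p)$ small enough and union-bounding over the three weight matrices of the depth-$3$ network, we obtain $\sigma_{\min}(\hat{W}_i) \geq \sigma := \tau/\poly(d,p)$ for all $i$ with probability at least $7/8$.

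Conditioned on this non-degeneracy event, I invoke $\cl$ on the example oracle, which provides i.i.d.\ samples $(\bx, N_{\hat{\btheta}}(\bx))$ with $\bx \sim \cn(\zero, I_d)$. The perturbed parameters remain $\tilde{B}$-bounded for $\tilde{B} = B + O(\tau \sqrt{p \log p})$, i.e., $\tilde{B} = \poly(B,p,\tau)$. Hence $\cl$ runs in time $\poly(d,p,\tilde{B},1/\epsilon,1/\sigma) = \poly(d,p,B,1/\epsilon,1/\tau)$, matching the efficiency requirement in Definition~\ref{def:smoothed params}, and returns an $\epsilon$-accurate hypothesis with probability $\geq 3/4$. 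Combining the $7/8$-probability non-degeneracy event with the $\geq 3/4$-probability success of $\cl$, and if necessary amplifying via $O(1)$ independent repetitions, yields an efficient smoothed-parameters learner in the sense of Definition~\ref{def:smoothed params}, contradicting Theorem~\ref{thm:hard smoothed}.

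The main obstacle is invoking the correct quantitative version of the smoothed-analysis bound on $\sigma_{\min}(W+E)$ that applies uniformly to the (possibly rectangular) weight matrices of varying dimensions that arise in a depth-$3$ network, with the right dependence on $\tau$, $d$, and $p$. Since such bounds are well-established classical results, the remainder is routine bookkeeping to align the polynomial running-time dependencies between Definitions~\ref{def:PAC} and~\ref{def:smoothed params}, and to verify that the non-degeneracy parameter $\sigma = \tau/\poly(d,p)$ is indeed $1/\poly(d)$ whenever $1/\tau$ is, as required by the statement of the corollary.
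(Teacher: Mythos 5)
Your overall strategy is the same as the paper's: use Sankar--Spielman--Teng-type smoothed analysis of the minimum singular value to argue that a PAC learner for non-degenerate networks would, when run on a randomly perturbed network, yield an efficient smoothed-parameters learner, contradicting Theorem~\ref{thm:hard smoothed}. However, there are two places where the proposal glosses over steps the paper handles explicitly, and these are not merely cosmetic.

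First, the weight matrices in the hard instance constructed in the proof of Theorem~\ref{thm:hard smoothed} are \emph{not} square (the input dimension is $n^2$ while the hidden widths are of order $n\log n$), and the Sankar--Spielman--Teng bound you quote (and which the paper states as Lemma~\ref{lem:min singular}) is for square matrices. You gesture at ``rectangular extensions via Rudelson--Vershynin type bounds'' without stating one, but the paper instead does something cleaner and more self-contained: it first strengthens Theorem~\ref{thm:hard smoothed} to hold even when both hidden layers are padded (with neurons of incoming and outgoing weight $0$ and bias $-1$) to width exactly $d$, so that both non-output weight matrices are $d\times d$ square, and then applies Lemma~\ref{lem:min singular} directly with $t=\tau/d$. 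Without this padding step (or an explicit rectangular bound), the proof is incomplete, and you would also need to argue why the output-layer column vector need not be included in the union bound.

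Second, your probability accounting does not close. Choosing the non-degeneracy event to hold with probability $7/8$ gives an unconditional success probability of at most $7/8 \cdot 3/4 < 3/4$, and even driving the non-degeneracy probability to $1 - \delta$ for tiny $\delta$ only yields $\geq 3/4 - \delta$, strictly below the $3/4$ required by Definition~\ref{def:smoothed params}. You propose to patch this by ``amplifying via $O(1)$ independent repetitions,'' which is possible in principle but requires a validation step you do not spell out (and some care about what is being re-randomized, since $\bxi$ is drawn only once). The paper avoids amplification entirely: it chooses $t$ so that non-degeneracy holds with probability $\geq 1 - 1/1000$, obtains a smoothed learner with success probability $\geq 3/4 - 1/1000$, and then revisits the distinguisher's analysis (the $\frac{1}{40}$ slack in Lemma~\ref{lem:pseudorandom small loss}) to check that Theorem~\ref{thm:hard smoothed}'s hardness argument still goes through at this slightly reduced success probability. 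Your route can be made to work, but you should either carry out the amplification-plus-validation argument or follow the paper's relaxation of the success threshold; as written, the contradiction with Theorem~\ref{thm:hard smoothed} is not yet established.
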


The proof of the above corollary follows from \thmref{thm:hard smoothed}, using the fact that by adding 
a small random perturbation to the weight matrices, we get w.h.p. non-degenerate matrices.
Hence, an efficient algorithm that learns non-degenerate networks suffices for obtaining an efficient algorithm that learns under the smoothed analysis framework. See \appref{app:proof of cor} for the formal proof.

The above hardness results consider depth-$3$ networks that include activation in the output neuron. Thus, the networks have three non-linear layers. We remark that these results readily imply hardness also for depth-$4$ networks without activation in the output, and hardness for depth-$k$ networks for any $k>3$. We also note that the hardness results hold already for networks where all hidden layers are of the same width, e.g., where all layers are of width $d$.

\thmref{thm:hard smoothed} gives a
strong limitation on learning depth-$3$ networks in the smoothed-analysis framework. Motivated by existing positive results on smoothed-analysis in depth-$2$ networks, we now study
whether learning depth-$2$ networks with smoothed parameters can be done under weak assumptions on the input distribution. 
Specifically, we
consider the case where both the parameters and the input distribution are smoothed.
We show that efficiently learning depth-$2$ networks may not be possible with smoothed parameters where the input distribution on $\reals^d$ is obtained by smoothing an i.i.d. Bernoulli distribution on $\{0,1\}^d$. 

\begin{theorem} \label{thm:smoothed weights and inputs}
	Under \assref{ass:localPRG}, there is no efficient algorithm that learns depth-$2$ neural networks with smoothed parameters and inputs (in the sense of Definition~\ref{def:smoothed params and inputs}), under the distribution $\cd$ on $\{0,1\}^d$ where each coordinate is drawn i.i.d. from a Bernoulli distribution which takes the value $0$ with probability $\frac{1}{\sqrt{d}}$.	
\end{theorem}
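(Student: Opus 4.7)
The plan is to reduce from \assref{ass:localPRG}, following the spirit of the PRG-based reductions in \cite{daniely2021local} and adapted to the doubly-smoothed setting of Definition~\ref{def:smoothed params and inputs} and depth-$2$ ReLU networks. Suppose for contradiction that $\cl$ is an efficient learner with runtime $n^c$ for some constant $c$. Invoke \assref{ass:localPRG} with a stretch exponent $s$ chosen large enough that the resulting locality $k=k(s)$ satisfies $k>2\max(s,c)$ (consistent with the usual constraint $k \geq 2s+\Omega(1)$ needed for a local PRG of stretch $n^s$). Let $P:\{0,1\}^k\to\{0,1\}$ be the predicate, set $d=n$, $m=n^s$. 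Given a PRG challenge $(G,\by)$ with $G\xleftarrow{R}\cg_{n,m,k}$ and $\by$ either equal to $f_{P,G}(\bz)$ for a uniform seed $\bz$ or uniform on $\{0,1\}^m$, the reduction builds a depth-$2$ learning instance and uses the output of $\cl$ on it to distinguish the two cases.

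\textbf{Target network and label simulation.} Under $\cd$ each coordinate equals $0$ with probability $1/\sqrt{n}$, so for any hyperedge $S_j$ the event $\text{Match}_j := \{\bx_{S_j}=\zero\}$ has probability $n^{-k/2}$; since $k>2s$, the expected number of matches $m\cdot n^{-k/2}=n^{s-k/2}$ is inverse-polynomial and a single match occurs with probability $\Theta(n^{s-k/2})$ while multiple matches are lower order. Define
\[
F_{\bz}(\bx)\;=\;\sum_{j=1}^{m} P(\bz_{S_j})\cdot\onefunc[\bx_{S_j}=\zero]\;=\;\sum_{j=1}^{m} y_j\cdot\onefunc[\bx_{S_j}=\zero].
\]
The second form shows that $F_{\bz}(\bx)$ is computable from $(G,\by,\bx)$ alone, so the examples oracle of Definition~\ref{def:smoothed params and inputs} can be simulated from the PRG sample without knowledge of $\bz$. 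The first form shows that $F_\bz$ is realized by an explicit depth-$2$ ReLU network $N_{\btheta(\bz)}$: the hidden layer has $m$ conjunction gates $g_j(\bx)=\sigma(1-\sum_{i\in S_j} x_i)$, each outputting $1$ with margin $1$ precisely on $\text{Match}_j$, and the output neuron applies $\sigma$ to $\sum_{j} P(\bz_{S_j})\cdot g_j(\bx)$. All $p=O(mk)=\poly(n)$ weights and biases are $O(1)$-bounded.

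\textbf{Robustness and distinguishing.} Choose $\tau,\omega=1/\poly(n)$ small enough that a Gaussian union bound over the $m$ gates yields, with probability $1-o(1)$ over $\bxi\sim\cn(\zero,\tau^2 I_p)$ and $\bzeta\sim\cn(\zero,\omega^2 I_d)$, that every ReLU sign is preserved; then $N_{\hat\btheta}(\bx+\bzeta)$ agrees with $F_\bz(\bx)$ up to additive $L_2$ discrepancy $\ll\epsilon$ for the reduction's choice $\epsilon=\Theta(n^{s-k/2})/10$. Run $\cl$ on $n^c$ simulated samples, then estimate the test loss of its returned hypothesis $h$ on fresh simulated samples. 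In the pseudorandom case, the PAC guarantee of $\cl$ forces test loss $\leq\epsilon$ with probability $\geq 3/4$. In the uniform case, each sample carries expected mutual information $O(n^{s-k/2})$ about $\by$ (dominated by the sparse single-match event), so $n^c$ samples leak only $O(n^{c+s-k/2})=o(n^s)$ of $\by$'s $n^s$ bits of entropy; by a standard $H(\by\mid h)$ argument a $1-o(1)$ fraction of the bits $y_{j^*}$ remain near-uniform given $h$, yielding $\E[\text{test loss}]=\Omega(n^{s-k/2})\gg\epsilon$. Concentration of the test loss (boosted if needed by parallel-repetition amplification of the distinguishing advantage) then gives test loss $>\epsilon$ with probability bounded away from $1/4$, so the empirical-loss test is a polynomial-time distinguisher with constant advantage, contradicting \assref{ass:localPRG}.

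\textbf{Main obstacle.} The technically heaviest step is the quantitative separation in the uniform-$\by$ case: one must bound the per-sample mutual information about $\by$ by $O(n^{s-k/2})$ bits, translate the residual entropy $H(\by\mid h)$ into a tight $L_2$-loss lower bound, and concentrate the loss over $\by$ and fresh test samples enough to yield a distinguishing advantage surviving the $1/3$-PRG threshold after amplification. A secondary difficulty is controlling parameter and input smoothing through the $m$ indicator gates simultaneously, which dictates the polynomial choice of $\tau$ and $\omega$. Aside from these, the overall blueprint parallels the reduction underlying \thmref{thm:hard smoothed}, but saves one layer of depth, since the Bernoulli input distribution removes the need for a first ReLU layer used there to binarize Gaussian inputs.
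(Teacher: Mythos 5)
There is a fundamental structural gap in your reduction that the paper's proof avoids entirely, and it stems from how you embed the PRG challenge into the learning instance.

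You place all $m=n^s$ hyperedges into the \emph{target network} as hidden conjunction gates, so the parameter count is $p=\Theta(n^s)$. But an efficient learner is allowed sample complexity and runtime $\poly(d,p,B,1/\epsilon,1/\tau,1/\omega)$, which therefore also scale as $\poly(n^s)$ (and as $\poly(1/\epsilon)=\poly(n^{k/2-s})$, given your choice $\epsilon=\Theta(n^{s-k/2})$). Your information-theoretic step --- ``$n^c$ samples leak only $O(n^{c+s-k/2})=o(n^s)$ bits of $\by$'' --- requires $c<k/2$. However, $c$ is bounded below by $c_0\cdot\max(s,\,k/2-s,\,\ldots)$, where $c_0$ is the degree of the learner's runtime polynomial, and $\max(s,k/2-s)\geq k/4$ always. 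So $c<k/2$ forces $c_0<2$, i.e.\ your argument rules out only (essentially) \emph{linear-time} learners. Worse, there is a circularity you cannot escape: $c$ depends on $p$ and $1/\epsilon$, which depend on $s$ and $k$, which you wanted to pick \emph{after} knowing $c$. Your clause ``choose $s$ large enough that $k(s)>2\max(s,c)$'' is also not justified by \assref{ass:localPRG}, which asserts only that \emph{some} constant $k$ exists for each $s$; it gives you no handle on how $k$ grows with $s$, let alone a bound like $k>2sc_0$ for an arbitrary learner degree $c_0$.

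The paper sidesteps all of this by making the network size \emph{independent of $s$}. The crucial tool is \lemref{lem:from P to DNF}: the function $P_\bx$, applied to the $kn$-bit one-hot encoding $\bz^S$ of a hyperedge, is computable by a DNF with at most $2^k=O(1)$ terms, so the \emph{secret seed} $\bx$ is what gets encoded into the network, not the output string $\by$. The target $\tilde N:\reals^{n^2}\to\reals$ then has $O(2^k + n)=O(n)$ hidden neurons ($N_1$ for the DNF terms, $N_2$ for the hyperedge-validity check via \lemref{lem:network N2 second layer}) and parameters bounded by $n^3$. Because $p,B,1/\epsilon,1/\tau,1/\omega$ are all fixed polynomials in $n$, the learner's sample complexity $m(n)$ is a \emph{fixed} polynomial, and one can then choose $s$ with $n^s\geq m(n)+n^3$. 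The PRG pairs $(S_i,y_i)$ are not stored in the network; each pair is converted into a single smoothed example $(\hat\bz_i,\hat y_i)$ by the oracle (the first $kn$ Bernoulli coordinates are replaced by the encoding $\bz^{S_i}$ when they happen to form a hyperedge encoding, which by \lemref{lem:prob z good discrete} happens with probability $\geq 1/\log n$). Holding out $n^3$ fresh pairs gives a test set on which a pseudorandom $\cs$ yields realizable examples (\lemref{lem:realizable2}) but a random $\cs$ yields unpredictable labels. This is a genuinely different reduction template; the depth savings versus \thmref{thm:hard smoothed} that you noticed is real, but it comes from dropping the Gaussian-to-Bernoulli threshold layer ($N_\Psi$) and the interval-check network $N_3$, not from encoding $\by$ in the output weights.
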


We prove the theorem in  \appref{app:proof of smoothed weights and inputs}.
The proof follows from similar ideas to the proof of \thmref{thm:hard smoothed}, which we discuss in the next section.

\section{Proof of \thmref{thm:hard smoothed}} \label{sec:proof of hard smoothed}

The proof builds on a technique from \cite{daniely2021local}. It follows by showing that an efficient algorithm for learning depth-$3$ neural networks with smoothed parameters under the Gaussian distribution can be used for breaking a local PRG. Intuitively, the main challenge in our proof in comparison to \cite{daniely2021local} is that our reduction must handle the random noise which is added to the parameters. Specifically, \cite{daniely2021local} define a certain examples oracle, and show that the examples returned by the oracle are realizable by some neural network which depends on the unknown $\bx \in \{0,1\}^n$ used by the PRG. Since the network depends on this unknown $\bx$, some of the parameters of the network are unknown, and hence it is not trivial how to define an examples oracle which is realizable by a perturbed network. Moreover, we need to handle this random perturbation without increasing the network's depth.  

We now provide the formal proof. The proof relies on several lemmas which we prove in \appref{app:missing lemmas}.
For a sufficiently large $n$, let $\cd$ be the standard Gaussian distribution on $\reals^{n^2}$. Assume that there is a $\poly(n)$-time algorithm $\cl$ that learns depth-$3$ neural networks 
with at most $n^2$ hidden neurons
and parameter magnitudes
bounded by $n^3$, 
with smoothed parameters,
under the distribution $\cd$, 
with $\epsilon=\frac{1}{n}$, and $\tau=1/\poly(n)$ that we will specify later. Let $m(n) \leq \poly(n)$ be the sample complexity of $\cl$, namely, $\cl$ uses a sample of size at most $m(n)$ and returns with probability at least $\frac34$ a hypothesis $h$ with $\E_{\bz \sim \cd} \left[ \left(h(\bz) - N_{\hat{\btheta}}(\bz) \right)^2 \right] \leq \epsilon = \frac{1}{n}$, where $N_{\hat{\btheta}}$ is the perturbed network. Let $s>1$ be a constant such that $n^s \geq m(n) + n^3$ for every sufficiently large $n$. By \assref{ass:localPRG}, there exists a constant $k$ and a predicate $P:\{0,1\}^k \to \{0,1\}$, such that $\cf_{P,n,n^s}$ is $\frac{1}{3}$-PRG. We will show an efficient algorithm $\ca$ with distinguishing advantage greater than $\frac13$ and thus reach a contradiction.

Throughout this proof, we will use the following notations.
For a hyperedge $S = (i_1,\ldots,i_k)$ we denote by $\bz^S \in \{0,1\}^{kn}$ the following encoding of $S$: the vector $\bz^S$ is a concatenation of $k$ vectors in $\{0,1\}^n$, such that the $j$-th vector has $0$ in the $i_j$-th coordinate and $1$ elsewhere. Thus, $\bz^S$ consists of $k$ size-$n$ slices, each encoding a member of $S$. For $\bz \in \{0,1\}^{kn}$, $i \in [k]$ and $j \in [n]$, we denote $z_{i,j} = z_{(i-1)n + j}$. That is, $z_{i,j}$ is the $j$-th component in the $i$-th slice in $\bz$. For $\bx \in \{0,1\}^n$, let $P_\bx: \{0,1\}^{kn} \to \{0,1\}$ be such that for every hyperedge $S$ we have $P_\bx(\bz^S) = P(\bx_S)$. Let $c$ be such that $\Pr_{t \sim \cn(0,1)}[t \leq c] = \frac{1}{n}$. Let $\mu$ be the density of $\cn(0,1)$, let $\mu_-(t) = n \cdot \onefunc[t \leq c] \cdot \mu(t)$, and let $\mu_+ = \frac{n}{n-1} \cdot \onefunc[t \geq c] \cdot \mu(t)$. Note that $\mu_-,\mu_+$ are the densities of the restriction of $\mu$ to the intervals $t \leq c$ and $t \geq c$ respectively. 
Let $\Psi: \reals^{kn} \to \{0,1\}^{kn}$ be a mapping such that for every $\bz' \in \reals^{kn}$ and $i \in [kn]$ we have $\Psi(\bz')_i = \onefunc[z'_i \geq c]$. For $\tilde{\bz} \in \reals^{n^2}$ we denote $\tilde{\bz}_{[kn]} = (\tilde{z}_1,\ldots,\tilde{z}_{kn})$, namely, the first $kn$ components of $\tilde{\bz}$ (assuming $n^2 \geq kn$).

\subsection{Defining the target network for $\cl$}

Since our goal is to use the algorithm $\cl$ for breaking PRGs, in this subsection we define a neural network $\tilde{N}:\reals^{n^2} \to \reals$ that we will later use as a target network for $\cl$. The network $\tilde{N}$ contains the subnetworks $N_1,N_2,N_3$ which we define below.

Let $N_1$ be a depth-$2$ neural network with input dimension $kn$, at most $n \log(n)$ hidden neurons, at most $\log(n)$ output neurons (with activations in the output neurons), and parameter magnitudes bounded by $n^3$ (all bounds are for a sufficiently large $n$), which satisfies the following. We denote the set of output neurons of $N_1$ by $\ce_1$.
Let $\bz' \in \reals^{kn}$ be an input to $N_1$ such that $\Psi(\bz') = \bz^S$ for some hyperedge $S$, and assume that for every $i \in [kn]$ we have $z'_i \not \in \left( c, c + \frac{1}{n^2} \right)$. Fix some $\bx \in \{0,1\}^n$. Then, for $S$ with $P_\bx(\bz^S) = 0$ the inputs to all output neurons $\ce_1$ are at most $-1$, and for $S$ with $P_\bx(\bz^S) = 1$ there exists a neuron in $\ce_1$ with input at least $2$. 
Recall that our definition of a neuron's input includes the addition of the bias term.
The construction of the network $N_1$ is given in \lemref{lem:network N1}. 
Intuitively, the network $N_1$ consists of a layer that transforms w.h.p. the input $\bz'$ to $\Psi(\bz')=\bz^S$, followed by a layer that satisfies the following: Building on a lemma from \cite{daniely2021local} which shows that $P_\bx(\bz^S)$ can be computed by a DNF formula, we define a layer where each output neuron corresponds to a term in the DNF formula, such that if the term evaluates to 0 then the input to the neuron is at most $-1$, and otherwise it is at least $2$.
Note that the network $N_1$ depends on $\bx$. However, 
only the second layer depends on $\bx$, and thus given an input we may compute the first layer even without knowing $\bx$.
Let $N'_1: \reals^{kn} \to \reals$ be a depth-$3$ neural network with no activation function in the output neuron, obtained from $N_1$ by summing the outputs from all neurons $\ce_1$.

Let $N_2$ be a depth-$2$ neural network with input dimension $kn$, at most $n \log(n)$ hidden neurons, at most $2n$ output neurons, and parameter magnitudes bounded by $n^3$ (for a sufficiently large $n$), which satisfies the following. We denote the set of output neurons of $N_2$ by $\ce_2$.
Let $\bz' \in \reals^{kn}$ be an input to $N_2$ such that for every $i \in [kn]$ we have $z'_i \not \in \left(c, c + \frac{1}{n^2} \right)$. If $\Psi(\bz')$ is an encoding of a hyperedge then the inputs to all output neurons $\ce_2$ are at most $-1$, and otherwise there exists a neuron in $\ce_2$ with input at least $2$.
The construction of the network $N_2$ is given in \lemref{lem:network N2}. 
Intuitively, each neuron in $\ce_2$ is responsible for checking whether $\Psi(\bz')$ violates some requirement that must hold in an encoding of a hyperedge. 
Let $N'_2: \reals^{kn} \to \reals$ be a depth-$3$ neural network with no activation function in the output neuron, obtained from $N_2$ by summing the outputs from all neurons $\ce_2$.

Let $N_3$ be a depth-$2$ neural network with input dimension $kn$, at most $n \log(n)$ hidden neurons, $kn \leq n \log(n)$ output neurons, and parameter magnitudes bounded by $n^3$ (for a sufficiently large $n$), which satisfies the following. We denote the set of output neurons of $N_3$ by $\ce_3$. 
Let $\bz' \in \reals^{kn}$ be an input to $N_3$. If there exists $i \in [kn]$ such that $z'_i \in \left(c, c + \frac{1}{n^2} \right)$ then there exists a neuron in $\ce_3$ with input at least $2$. Moreover, if for all $i \in [kn]$ we have $z'_i \not \in \left(c - \frac{1}{n^2}, c + \frac{2}{n^2} \right)$ then the inputs to all neurons in $\ce_3$ are at most $-1$.
The construction of the network $N_3$ is straightforward and given in \lemref{lem:network N3}.  
Let $N'_3: \reals^{kn} \to \reals$ be a depth-$3$ neural network with no activation function in the output neuron, obtained from $N_3$ by summing the outputs from all neurons $\ce_3$.

Let $N': \reals^{kn} \to \reals$ be a depth-$3$ network obtained from $N'_1,N'_2,N'_3$ as follows. For $\bz' \in \reals^{kn}$ we have $N'(\bz') = \left[ 1 - N'_1(\bz') - N'_2(\bz') - N'_3(\bz')  \right]_+$. The network $N'$ has at most $n^2$ neurons, and parameter magnitudes bounded by $n^3$ (all bounds are for a sufficiently large $n$).
Finally, let $\tilde{N}:\reals^{n^2} \rightarrow \reals$ be a depth-$3$ neural network such that $\tilde{N}(\tilde{\bz}) = N'\left(\tilde{\bz}_{[kn]}\right)$.

\subsection{Defining the noise magnitude $\tau$ and analyzing the perturbed network}

In order to use the algorithm $\cl$ w.r.t. some neural network with parameters $\btheta$, we need to implement an examples oracle, such that the examples are labeled according to a neural network with parameters $\btheta+\bxi$, where $\bxi$ is a random perturbation.
Specifically, we use $\cl$ with an examples oracle where the labels correspond to a network $\hat{N}:\reals^{n^2} \to \reals$, obtained from $\tilde{N}$ (w.r.t. an appropriate $\bx \in \{0,1\}^n$ in the construction of $N_1$) by adding a small perturbation to the parameters. The perturbation is such that we add i.i.d. noise to each parameter in $\tilde{N}$, where the noise is distributed according to $\cn(0,\tau^2)$, and $\tau=1/\poly(n)$ is small enough such that the following holds. 
Let $f_\btheta:\reals^{n^2} \to \reals$ be any depth-$3$ neural network parameterized by $\btheta \in \reals^r$ for some $r>0$ with at most $n^2$ neurons, and parameter magnitudes bounded by $n^3$ (note that $r$ is polynomial in $n$). 
Then with probability at least $1-\frac{1}{n}$ over $\bxi \sim \cn(\zero, \tau^2 I_r)$, we have $| \xi_i | \leq \frac{1}{10}$ for all $i \in [r]$, and the network $f_{\btheta + \bxi}$ is such that for every input $\tilde{\bz}  \in \reals^{n^2}$ with $\norm{\tilde{\bz}} \leq 2n$ and every neuron we have: Let $a,b$ be the inputs to the neuron in the computations $f_\btheta(\tilde{\bz})$ and $f_{\btheta + \bxi}(\tilde{\bz})$ (respectively), then $|a-b| \leq \frac12$.  Thus, $\tau$ is sufficiently small, such that w.h.p. adding i.i.d. noise $\cn(0,\tau^2)$ to each parameter does not change the inputs to the neurons by more than $\frac12$. Note that such an inverse-polynomial $\tau$ exists, since when the network size, parameter magnitudes, and input size are bounded by some $\poly(n)$, then the input to each neuron in $f_\btheta(\tilde{\bz})$ is $\poly(n)$-Lipschitz as a function of $\btheta$, and thus it suffices to choose $\tau$ that implies with probability at least $1 - \frac{1}{n}$ that $\norm{\bxi} \leq \frac{1}{q(n)}$ for a sufficiently large polynomial $q(n)$ (see \lemref{lem:tau exists} for details). 

Let $\tilde{\btheta} \in \reals^p$ be the parameters of the network $\tilde{N}$. Recall that the parameters vector $\tilde{\btheta}$ is the concatenation of all weight matrices and bias terms. Let $\hat{\btheta} \in \reals^p$ be the parameters of $\hat{N}$, namely, $\hat{\btheta} = \tilde{\btheta} + \bxi$ where $\bxi \sim \cn(\zero, \tau^2 I_p)$. 
By our choice of $\tau$ and the construction of the networks $N_1,N_2,N_3$, with probability at least $1-\frac{1}{n}$ over $\bxi$, for every $\tilde{\bz}$ with $\norm{\tilde{\bz}} \leq 2n$, the inputs to the neurons $\ce_1,\ce_2,\ce_3$ in the computation $\hat{N}(\tilde{\bz})$ satisfy the following properties, where we denote  $\bz' = \tilde{\bz}_{[kn]}$: 
\begin{enumerate}[label=(P\arabic*)]
	\item If $\Psi(\bz') = \bz^S$ for some hyperedge $S$, and for every $i \in [kn]$ we have $z'_i \not \in \left( c, c + \frac{1}{n^2} \right)$, then the inputs to $\ce_1$ satisfy: \label{prop:first}
	\begin{itemize}
		\item If $P_\bx(\bz^S) = 0$ the inputs to all neurons in $\ce_1$ are at most $-\frac12$. 
		\item If $P_\bx(\bz^S) = 1$ there exists a neuron in $\ce_1$ with input at least $\frac32$.
	\end{itemize}
	\item  If for every $i \in [kn]$ we have $z'_i \not \in \left(c, c + \frac{1}{n^2} \right)$, then the inputs to $\ce_2$ satisfy:  \label{prop:second}
	\begin{itemize}
		\item If $\Psi(\bz')$ is an encoding of a hyperedge then the inputs to all neurons $\ce_2$ are at most $-\frac12$.
		\item Otherwise, there exists a neuron in $\ce_2$ with input at least $\frac32$.
	\end{itemize}
	\item The inputs to $\ce_3$ satisfy: \label{prop:last}
	\begin{itemize}
		\item  If there exists $i \in [kn]$ such that $z'_i \in \left(c, c + \frac{1}{n^2} \right)$ then there exists a neuron in $\ce_3$ with input at least $\frac32$. 
		\item If for all $i \in [kn]$ we have $z'_i \not \in \left(c - \frac{1}{n^2}, c + \frac{2}{n^2} \right)$ then the inputs to all neurons in $\ce_3$ are at most $-\frac12$.
	\end{itemize}
\end{enumerate}

\subsection{Stating the algorithm $\ca$}

Given a sequence $(S_1,y_1),\ldots,(S_{n^s},y_{n^s})$, where $S_1,\ldots,S_{n^s}$ are i.i.d. random hyperedges, the algorithm $\ca$ needs to distinguish whether $\by = (y_1,\ldots,y_{n^s})$ is random or that $\by = (P(\bx_{S_1}),\ldots,P(\bx_{S_{n^s}})) = (P_\bx(\bz^{S_1}),\ldots,P_\bx(\bz^{S_{n^s}}))$ for a random $\bx \in \{0,1\}^n$.
Let $\cs = ((\bz^{S_1},y_1),\ldots,(\bz^{S_{n^s}},y_{n^s}))$.

We use the efficient algorithm $\cl$ in order to obtain distinguishing advantage greater than $\frac{1}{3}$ as follows.
Let $\bxi$ be a random perturbation, and let $\hat{N}$ be the perturbed network as defined above, w.r.t. the unknown $\bx \in \{0,1\}^n$. Note that given a perturbation $\bxi$, only the weights in the second layer of the subnetwork $N_1$ in $\hat{N}$ are unknown, since all other parameters do not depend on $\bx$.
The algorithm $\ca$ runs $\cl$ with the following examples oracle.
In the $i$-th call, the oracle first draws $\bz \in \{0,1\}^{kn}$ such that each component is drawn i.i.d. from a Bernoulli distribution which takes the value $0$ with probability $\frac{1}{n}$. If $\bz$ is an encoding of a hyperedge then the oracle replaces $\bz$ with $\bz^{S_i}$. Then, the oracle chooses $\bz' \in \reals^{kn}$ such that for each component $j$, if $z_j = 1$ then $z'_j$ is drawn from $\mu_+$, and otherwise $z'_j$ is drawn from $\mu_-$.
Let $\tilde{\bz} \in \reals^{n^2}$ be such that $\tilde{\bz}_{[kn]}=\bz'$, and the other $n^2-kn$ components of $\tilde{\bz}$ are drawn i.i.d. from $\cn(0,1)$.
Note that the vector $\tilde{\bz}$ has the distribution $\cd$, due to the definitions of the densities $\mu_+$ and $\mu_-$, and since replacing an encoding of a random hyperedge by an encoding of another random hyperedge does not change the distribution of $\bz$.
Let $\hat{b} \in \reals$ be the bias term of the output neuron of $\hat{N}$.
The oracle returns $(\tilde{\bz},\tilde{y})$, where the labels $\tilde{y}$ are chosen as follows:
\begin{itemize}
	\item If $\Psi(\bz')$ is not an encoding of a hyperedge, then $\tilde{y}=0$.
	\item If $\Psi(\bz')$ is an encoding of a hyperedge:
    	\begin{itemize}
    		\item If $\bz'$ does not have components in the interval $(c-\frac{1}{n^2},c+\frac{2}{n^2})$, then if $y_i=0$ we set $\tilde{y} = \hat{b}$, and if $y_i=1$ we set $\tilde{y} = 0$.
    		\item If $\bz'$ has a component in the interval $(c,c+\frac{1}{n^2})$, then $\tilde{y} = 0$.
    		\item If $\bz'$ does not have components in the interval $(c,c+\frac{1}{n^2})$, but has a component in the interval $(c-\frac{1}{n^2},c+\frac{2}{n^2})$, 
		then the label $\tilde{y}$ is determined as follows: 
		\begin{itemize}
			\item If $y_i=1$ then $\tilde{y} = 0$.
			\item If $y_i=0$: Let $\hat{N}_3$ be the network $\hat{N}$ after omitting the neurons $\ce_1,\ce_2$ and their incoming and outgoing weights. Then, we set $\tilde{y} = [\hat{b} - \hat{N}_3(\tilde{\bz})]_+$. Note that since only the second layer of $N_1$ depends on $\bx$, then we can compute $\hat{N}_3(\tilde{\bz})$ without knowing $\bx$.
		\end{itemize}
    	\end{itemize}
\end{itemize}

Let $h$ be the hypothesis returned by $\cl$.
Recall that $\cl$ uses at most $m(n)$ examples, and hence $\cs$ contains at least $n^3$ examples that $\cl$ cannot view. We denote the indices of these examples by $I = \{m(n)+1,\ldots,m(n)+n^3\}$, and the examples by $\cs_I = \{(\bz^{S_i},y_i)\}_{i \in I}$. By $n^3$ additional calls to the oracle, the algorithm $\ca$ obtains the examples $\tilde{\cs}_I = \{(\tilde{\bz}_i,\tilde{y}_i)\}_{i \in I}$ that correspond to $\cs_I$.
Let $h'$ be a hypothesis such that for all $\tilde{\bz} \in \reals^{n^2}$ we have $h'(\tilde{\bz}) = \max\{0,\min\{\hat{b},h(\tilde{\bz})\}\}$, thus, for $\hat{b} \geq 0$ the hypothesis $h'$ is obtained from $h$ by clipping the output to the interval $[0,\hat{b}]$.
Let $\ell_{I}(h')=\frac{1}{|I|}\sum_{i \in I}(h'(\tilde{\bz}_i)-\tilde{y}_i)^2$. 
Now, if $\ell_I(h') \leq \frac{2}{n}$, then $\ca$ returns $1$, and otherwise it returns $0$.
We remark that the decision of our algorithm is based on $h'$ (rather than $h$) since we need the outputs to be bounded, in order to allow using Hoeffding's inequality in our analysis, which we discuss in the next subsection.

\subsection{Analyzing the algorithm $\ca$}

Note that the algorithm $\ca$ runs in $\poly(n)$ time.
We now show that if $\cs$ is pseudorandom then $\ca$ returns $1$ with probability greater than $\frac{2}{3}$, and if $\cs$ is random then $\ca$ returns $1$ with probability less than $\frac{1}{3}$.

We start with the case where $\cs$ is pseudorandom. 
In \lemref{lem:realizable}, we prove that if $\cs$ is pseudorandom then w.h.p. (over $\bxi \sim \cn(\zero, \tau^2 I_p)$ and the i.i.d. inputs $\tilde{\bz}_i \sim \cd$) the examples $(\tilde{\bz}_1, \tilde{y}_1),\ldots,(\tilde{\bz}_{m(n)+n^3},\tilde{y}_{m(n)+n^3})$ returned by the oracle are realized by $\hat{N}$. Thus, $\tilde{y}_i = \hat{N}(\tilde{\bz}_i)$ for all $i$.
As we show in the lemma, this claim follows by noting that the following hold w.h.p., where we denote $\bz'_i = (\tilde{\bz}_i)_{[kn]}$: 
\begin{itemize}
	\item If $\Psi( \bz'_i )$ is not an encoding of a hyperedge, then the oracle sets $\tilde{y}_i=0$, and we have:
	\begin{itemize}
		\item If $\bz'_i$ does not have components in $\left( c, c+\frac{1}{n} \right)$, then there exists a neuron in $\ce_2$ with output at least $\frac{3}{2}$ (by Property~\ref{prop:second}), which implies $\hat{N}(\tilde{\bz}_i)=0$.
				\item If $\bz'$ has a component in $\left( c, c+\frac{1}{n} \right)$, then there exists a neuron in $\ce_3$ with output at least $\frac{3}{2}$ (by Property~\ref{prop:last}), which implies $\hat{N}(\tilde{\bz}_i)=0$.
	\end{itemize} 
	 \item If $\Psi(\bz'_i)$ is an encoding of a hyperedge $S$, then by the definition of the examples oracle we have $S=S_i$. Hence:
		\begin{itemize}
			
			\item If $\bz'_i$ does not have components in $\left( c - \frac{1}{n^2}, c + \frac{2}{n^2} \right)$, then:
			
			\begin{itemize}
				
				\item	If $y_i = 0$ then the oracle sets $\tilde{y}_i=\hat{b}$. Since $\cs$ is pseudorandom, we have $P_\bx(\bz^S) = P_\bx(\bz^{S_i}) = y_i = 0$. Hence, in the computation $\hat{N}(\tilde{\bz}_i)$ the inputs to all neurons in $\ce_1,\ce_2,\ce_3$ are at most $-\frac{1}{2}$ (by Properties~\ref{prop:first},~\ref{prop:second} and~\ref{prop:last}), and thus their outputs are $0$. Therefore, $\hat{N}(\tilde{\bz}_i) = \hat{b}$.
				
				\item If $y_i = 1$ then the oracle sets $\tilde{y}_i=0$. Since $\cs$ is pseudorandom, we have $P_\bx(\bz^S) = P_\bx(\bz^{S_i}) = y_i = 1$. Hence, in the computation $\hat{N}(\tilde{\bz}_i)$ there exists a neuron in $\ce_1$ with output at least $\frac{3}{2}$ (by Property~\ref{prop:first}), which implies $\hat{N}(\tilde{\bz}_i)=0$. 
			
			\end{itemize} 
			
			\item If $\bz'_i$ has a component in $\left( c , c + \frac{1}{n^2} \right)$, then the oracle sets $\tilde{y}_i=0$. Also, in the computation $\hat{N}(\tilde{\bz}_i)$ there exists a neuron in $\ce_3$ with output at least $\frac{3}{2}$ (by Property~\ref{prop:last}), which implies $\hat{N}(\tilde{\bz}_i)=0$. 			
			\item  If $\bz'_i$ does not have components in the interval $(c,c+\frac{1}{n^2})$, but has a component in the interval $(c-\frac{1}{n^2},c+\frac{2}{n^2})$, then:
			
			\begin{itemize}
				
				\item If $y_i = 1$ the oracle sets $\tilde{y}_i=0$. Since $\cs$ is pseudorandom, we have $P_\bx(\bz^S) = P_\bx(\bz^{S_i}) = y_i = 1$. Hence, in the computation $\hat{N}(\tilde{\bz}_i)$ there exists a neuron in $\ce_1$ with output at least $\frac{3}{2}$ (by Property~\ref{prop:first}), which implies $\hat{N}(\tilde{\bz}_i)=0$.  
				
				\item  If $y_i = 0$ the oracle sets $\tilde{y}_i=[\hat{b} - \hat{N}_3(\tilde{\bz}_i)]_+$. Since $\cs$ is pseudorandom, we have $P_\bx(\bz^S) = P_\bx(\bz^{S_i}) = y_i = 0$. Therefore, in the computation $\hat{N}(\tilde{\bz}_i)$ all neurons in $\ce_1,\ce_2$ have output $0$ (by Properties~\ref{prop:first} and~\ref{prop:second}), and  hence their contribution to the output of $\hat{N}$ is $0$. Thus, by the definition of $\hat{N}_3$, we have $\hat{N}(\tilde{\bz}_i) = [\hat{b} - \hat{N}_3(\tilde{\bz}_i)]_+$.			
			\end{itemize}			
		\end{itemize}
\end{itemize}
Recall that the algorithm $\cl$ is such that with probability at least $\frac{3}{4}$ (over $\bxi \sim \cn(\zero, \tau^2 I_p)$, the i.i.d. inputs $\tilde{\bz}_i \sim \cd$, and its internal randomness), given a size-$m(n)$ dataset labeled by $\hat{N}$, it returns a hypothesis $h$ such that $\E_{\tilde{\bz} \sim \cd} \left[(h(\tilde{\bz})-\hat{N}(\tilde{\bz}))^2 \right] \leq \frac{1}{n}$.
By the definition of $h'$ and the construction of $\hat{N}$, if $h$ has small error then $h'$ also has small error, namely, we have $ \E_{\tilde{\bz} \sim \cd} \left[(h'(\tilde{\bz})-\hat{N}(\tilde{\bz}))^2 \right]  \leq \frac{1}{n}$.
\ignore{
\[
	 \E_{\tilde{\bz} \sim \cd} \left[(h'(\tilde{\bz})-\hat{N}(\tilde{\bz}))^2 \right] 
	 \leq \E_{\tilde{\bz} \sim \cd} \left[(h(\tilde{\bz})-\hat{N}(\tilde{\bz}))^2 \right] 
	 \leq \frac{1}{n}~.
\]
}
In \lemref{lem:pseudorandom small loss} we use the above arguments and Hoeffding's inequality over $\tilde{\cs}_I$, and prove that with probability greater than $\frac{2}{3}$ we have $\ell_I(h') \leq \frac{2}{n}$.

\ignore{
By \lemref{lem:realizable}, 
if $\cs$ is pseudorandom then with probability at least $\frac{39}{40}$ (over $\bxi \sim \cn(\zero, \tau^2 I_p)$ and the i.i.d. inputs $\tilde{\bz}_i \sim \cd$) the examples $(\tilde{\bz}_1, \tilde{y}_1),\ldots,(\tilde{\bz}_{m(n)},\tilde{y}_{m(n)})$ returned by the oracle are realized by $\hat{N}$. 
Recall that the algorithm $\cl$ is such that with probability at least $\frac{3}{4}$ (over $\bxi \sim \cn(\zero, \tau^2 I_p)$, the i.i.d. inputs $\tilde{\bz}_i \sim \cd$, and its internal randomness), given a size-$m(n)$ dataset labeled by $\hat{N}$, it returns a hypothesis $h$ such that $\E_{\tilde{\bz} \sim \cd} \left[(h(\tilde{\bz})-\hat{N}(\tilde{\bz}))^2 \right] \leq \frac{1}{n}$.
Hence, with probability at least $\frac{3}{4} - \frac{1}{40}$ the algorithm $\cl$ returns such a good hypothesis $h$, given $m(n)$ examples labeled by our examples oracle. 
Indeed, note that $\cl$ can return a bad hypothesis only if the random choices are either bad for $\cl$ (when used with realizable examples) or bad for the realizability of the examples returned by our oracle.
By the definition of $h'$ and the construction of $\hat{N}$, if $h$ has small error then $h'$ also has small error, namely, 
\[
	 \E_{\tilde{\bz} \sim \cd} \left[(h'(\tilde{\bz})-\hat{N}(\tilde{\bz}))^2 \right] 
	 \leq \E_{\tilde{\bz} \sim \cd} \left[(h(\tilde{\bz})-\hat{N}(\tilde{\bz}))^2 \right] 
	 \leq \frac{1}{n}~.
\]

Let $\hat{\ell}_{I}(h')=\frac{1}{|I|}\sum_{i \in I}(h'(\tilde{\bz}_i)-\hat{N}(\tilde{\bz}_i))^2$.
Recall that by our choice of $\tau$ we have $\Pr[\hat{b} > \frac{11}{10}] \leq \frac{1}{n}$.
Since, $(h'(\tilde{\bz})-\hat{N}(\tilde{\bz}))^2 \in [0,\hat{b}^2]$ for all $\tilde{\bz} \in \reals^{n^2}$,
by Hoeffding's inequality, we have for a sufficiently large $n$ that
\begin{align*}
	\Pr\left[\left|\hat{\ell}_{I}(h') -  \E_{\tilde{\cs}_I}\hat{\ell}_{I}(h')\right| \geq \frac{1}{n}\right]
	&= \Pr\left[\left|\hat{\ell}_{I}(h') -  \E_{\tilde{\cs}_I}\hat{\ell}_{I}(h')\middle| \geq \frac{1}{n} \right| \hat{b} \leq \frac{11}{10}\right] \cdot \Pr\left[ \hat{b} \leq \frac{11}{10} \right]
	\\
	&\;\;\;\; + \Pr\left[\left|\hat{\ell}_{I}(h') -  \E_{\tilde{\cs}_I}\hat{\ell}_{I}(h')\middle| \geq \frac{1}{n} \right| \hat{b} > \frac{11}{10}\right] \cdot \Pr\left[ \hat{b} > \frac{11}{10} \right]
	\\
	&\leq 2 \exp\left( -\frac{2n^3}{n^2 (11/10)^4} \right) \cdot 1 + 1 \cdot \frac{1}{n}
	\\
	&\leq \frac{1}{40}~.
\end{align*}
Moreover, by \lemref{lem:realizable},
\[
	\Pr \left[  \ell_I(h') \neq \hat{\ell}_I(h') \right]
	\leq \Pr \left[ \exists i \in I \text{ s.t. } \tilde{y}_i \neq \hat{N}(\tilde{\bz}_i) \right]
	\leq \frac{1}{40}~. 
\]

Overall, by the union bound we have with probability at least $1-\left( \frac{1}{4} + \frac{1}{40} + \frac{1}{40} + \frac{1}{40}\right)  > \frac{2}{3}$ for sufficiently large $n$ that:
\begin{itemize}
	\item $\E_{\tilde{\cs}_I}\hat{\ell}_{I}(h') = \E_{\tilde{\bz} \sim \cd} \left[(h'(\tilde{\bz})-\hat{N}(\tilde{\bz}))^2 \right] \leq \frac{1}{n}$.
	\item $\left|\hat{\ell}_{I}(h') -  \E_{\tilde{\cs}_I}\hat{\ell}_{I}(h')\right| \leq \frac{1}{n}$.
	\item $\ell_I(h') - \hat{\ell}_I(h') = 0$.
\end{itemize}
Combining the above, we get that if $\cs$ is pseudorandom, then with probability greater than $\frac{2}{3}$ we have
\[
	\ell_I(h')
	= \left( \ell_I(h') - \hat{\ell}_I(h') \right) + \left( \hat{\ell}_I(h') - \E_{\tilde{\cs}_I}\hat{\ell}_{I}(h') \right) +\E_{\tilde{\cs}_I}\hat{\ell}_{I}(h')  
	\leq 0 + \frac{1}{n} + \frac{1}{n} 
	= \frac{2}{n}~.
\]
}

Next, we consider the case where $\cs$ is random.
Let $\tilde{\cz} \subseteq \reals^{n^2}$ be such that $\tilde{\bz} \in \tilde{\cz}$ iff $\tilde{\bz}_{[kn]}$ does not have components in the interval $(c-\frac{1}{n^2},c+\frac{2}{n^2})$, and $\Psi(\tilde{\bz}_{[kn]})=\bz^{S}$ for a hyperedge $S$.
If $\cs$ is random, then by the definition of our examples oracle, for every $i \in [m(n)+n^3]$ such that $\tilde{\bz}_i \in \tilde{\cz}$, we have $\tilde{y}_i=\hat{b}$ with probability $\frac{1}{2}$ and $\tilde{y}_i=0$ otherwise. Also, by the definition of the oracle, $\tilde{y}_i$ is independent of $S_i$ and independent of the choice of the vector $\tilde{\bz}_i$ that corresponds to $\bz^{S_i}$. Hence, for such $\tilde{\bz}_i \in \tilde{\cz}$ with $i \in I$, any hypothesis cannot predict the label $\tilde{y}_i$, and the expected loss for the example is at least $\left(\frac{\hat{b}}{2}\right)^2$.
Moreover, in \lemref{lem:prob z good} we show that $\Pr\left[\tilde{\bz}_i \in \tilde{\cz}\right] \geq \frac{1}{2\log(n)}$ for a sufficiently large $n$.
In \lemref{lem:random large loss} we use these arguments to prove a lower bound on $\E_{\tilde{\cs}_I}\left[ \ell_I(h') \right]$, and by Hoeffding's inequality over $\tilde{\cs}_I$ we conclude that with probability greater than $\frac23$ we have $\ell_I(h') > \frac{2}{n}$.

\ignore{
Let $\tilde{\cz} \subseteq \reals^{n^2}$ be such that $\tilde{\bz} \in \tilde{\cz}$ iff $\tilde{\bz}_{[kn]}$ does not have components in the interval $(c-\frac{1}{n^2},c+\frac{2}{n^2})$, and $\Psi(\tilde{\bz}_{[kn]})=\bz^{S}$ for a hyperedge $S$.
If $\cs$ is random, then by the definition of our examples oracle, for every $i \in [m(n)+n^3]$ such that $\tilde{\bz}_i \in \tilde{\cz}$, we have $\tilde{y}_i=\hat{b}$ with probability $\frac{1}{2}$ and $\tilde{y}_i=0$ otherwise. Also, by the definition of the oracle, $\tilde{y}_i$ is independent of $S_i$ and independent of the choice of the vector $\tilde{\bz}_i$ that corresponds to $\bz^{S_i}$.
If $\hat{b} \geq \frac{9}{10}$ then for a sufficiently large $n$ the hypothesis $h'$ satisfies for each random example $(\tilde{\bz}_i,\tilde{y}_i) \in \tilde{\cs}_I$ the following
\begin{align*} \label{eq:nn-large error}
	\Pr_{(\tilde{\bz}_i,\tilde{y}_i)}&\left[(h'(\tilde{\bz}_i)-\tilde{y}_i)^2 \geq \frac{1}{5}\right]
	\\
	&\geq \Pr_{(\tilde{\bz}_i,\tilde{y}_i)} \left[\left.(h'(\tilde{\bz}_i)-\tilde{y}_i)^2 \geq \frac{1}{5} \; \right| \; \tilde{\bz}_i \in \tilde{\cz} \right] \cdot \Pr_{\tilde{\bz}_i} \left[\tilde{\bz}_i \in \tilde{\cz} \right]
	\\
	&\geq  \Pr_{(\tilde{\bz}_i,\tilde{y}_i)} \left[\left.(h'(\tilde{\bz}_i)-\tilde{y}_i)^2 \geq \left(\frac{\hat{b}}{2}\right)^2 \; \right| \; \tilde{\bz}_i \in \tilde{\cz} \right] \cdot \Pr_{\tilde{\bz}_i} \left[\tilde{\bz}_i \in \tilde{\cz}\right] 
	\\
	&\geq \frac{1}{2}  \cdot \Pr_{\tilde{\bz}_i} \left[\tilde{\bz}_i \in \tilde{\cz}\right]~.
\end{align*}
In \lemref{lem:prob z good}, we show that $\Pr_{\tilde{\bz}_i} \left[\tilde{\bz}_i \in \tilde{\cz}\right] \geq \frac{1}{2\log(n)}$.
Hence,
\begin{align*}
	\Pr_{(\tilde{\bz}_i,\tilde{y}_i)}  \left[(h'(\tilde{\bz}_i)-\tilde{y}_i)^2 \geq \frac{1}{5}\right]
	\geq  \frac{1}{2}  \cdot \frac{1}{2\log(n)}
	\geq \frac{1}{4\log(n)}~.
\end{align*}
Thus, if $\hat{b} \geq \frac{9}{10}$ then we have
\[
	\E_{\tilde{\cs}_I}\left[ \ell_I(h') \right] \geq \frac{1}{5} \cdot \frac{1}{4\log(n)} = \frac{1}{20\log(n)}~.
\]
Therefore, for large $n$ we have
\[
	\Pr\left[ \E_{\tilde{\cs}_I}\left[ \ell_I(h') \right] \geq \frac{1}{20\log(n)} \right] \geq 1-\frac{1}{n} \geq \frac{7}{8}~.
\]

Since, $(h'(\tilde{\bz})-\tilde{y})^2 \in [0,\hat{b}^2]$ for all $\tilde{\bz},\tilde{y}$ returned by the examples oracle, and the examples $\tilde{\bz}_i$ for $i \in I$ are i.i.d., then by Hoeffding's inequality, we have for a sufficiently large $n$ that
\begin{align*}
	\Pr\left[\left|\ell_{I}(h') -  \E_{\tilde{\cs}_I}\ell_{I}(h')\right| \geq \frac{1}{n}\right]
	&= \Pr\left[\left|\ell_{I}(h') -  \E_{\tilde{\cs}_I}\ell_{I}(h')\middle| \geq \frac{1}{n} \right| \hat{b} \leq \frac{11}{10}\right] \cdot \Pr\left[ \hat{b} \leq \frac{11}{10} \right]
	\\
	&\;\;\;\; + \Pr\left[\left|\ell_{I}(h') -  \E_{\tilde{\cs}_I}\ell_{I}(h')\middle| \geq \frac{1}{n} \right| \hat{b} > \frac{11}{10}\right] \cdot \Pr\left[ \hat{b} > \frac{11}{10} \right]
	\\
	&\leq 2 \exp\left( -\frac{2n^3}{n^2 (11/10)^4} \right) \cdot 1 + 1 \cdot \frac{1}{n}
	\\
	&\leq \frac{1}{8}~.
\end{align*}

Hence, for large enough $n$, with probability at least $1 - \frac{1}{8} - \frac{1}{8} = \frac{3}{4} > \frac{2}{3}$ we have both $ \E_{\tilde{\cs}_I}\left[ \ell_I(h') \right] \geq \frac{1}{20\log(n)}$ and $\left|\ell_{I}(h') -  \E_{\tilde{\cs}_I}\ell_{I}(h')\right| \leq \frac{1}{n}$, and thus
\[
	\ell_I(h') \geq \frac{1}{20\log(n)} - \frac{1}{n} > \frac{2}{n}~.
\]
}

Overall, if $\cs$ is pseudorandom then with probability greater than $\frac{2}{3}$ the algorithm $\ca$ returns $1$, and if $\cs$ is random then with probability greater than $\frac{2}{3}$ the algorithm $\ca$ returns $0$. Thus, the distinguishing advantage is greater than $\frac13$.

\section{Discussion}

Understanding the computational complexity of learning neural networks is a central question in learning theory. Our results imply that the assumptions which allow for efficient learning in one-hidden-layer networks might not suffice in deeper networks. Also, in depth-$2$ networks we show that it is not sufficient to assume that both the parameters and the inputs are smoothed. 
We hope that our hardness results will help focus on assumptions that may allow for efficient learning.
Below we discuss several intriguing open problems.

First, we emphasize that our hardness results are for neural networks that include the ReLU activation also in the output neuron. In contrast, the positive results on learning depth-$2$ networks that we discussed in the introduction do not include activation in the output neuron. Therefore, as far as we are aware, there is still a gap between the upper bounds and our hardness results: (1) Under the assumption that the input is Gaussian and the weights are non-degenerate, the cases of depth-$2$ networks with activation in the output neuron and of depth-$3$ networks without activation in the output are not settled; (2) In the setting where both the parameters and the input distribution are smoothed, the case of depth-$2$ networks without activation in the output is not settled. 

Moreover, our hardness result for depth-$3$ networks suggests that adding a small (yet polynomial) noise to the parameters is not sufficient to allow efficient learning under the Gaussian distribution. An intriguing direction is to explore the case where the noise is larger. Intuitively, the case of very large noise corresponds to learning a random network, where the weights are drawn i.i.d. from the Gaussian distribution. Hence, it would be interesting to understand whether there is an efficient algorithm for learning random ReLU networks under the Gaussian input distribution. Likewise, the effect of large noise may also be considered w.r.t. the inputs, namely, we may use large noise for obtaining smoothed input distributions.

\subsection*{Acknowledgements}

This work was done as part of the NSF-Simons Sponsored Collaboration on the Theoretical Foundations of Deep Learning.

\bibliographystyle{abbrvnat}
\bibliography{bib}

\newpage

\appendix


\section{Missing lemmas for the proof of \thmref{thm:hard smoothed}} \label{app:missing lemmas}


\begin{lemma}[\cite{daniely2021local}] \label{lem:from P to DNF}
	For every predicate $P:\{0,1\}^k \rightarrow \{0,1\}$ and $\bx \in \{0,1\}^n$, there is a DNF formula $\psi$ over $\{0,1\}^{kn}$ with at most $2^k$ terms, such that for every hyperedge $S$ we have $P_\bx(\bz^S)=\psi(\bz^S)$. Moreover, each term in $\psi$ is a conjunction of positive literals.
\end{lemma}
\begin{proof}
	The following proof is from \cite{daniely2021local}, and we give it here for completeness.	

	We denote by $\cb \subseteq \{0,1\}^{k}$ the set of satisfying assignments of $P$. Note that the size of $\cb$ is at most $2^k$.
	Consider the following DNF formula over $\{0,1\}^{kn}$:
	\[
		\psi(\bz)
		= \bigvee_{\bb \in \cb} \bigwedge_{j \in [k]} \bigwedge_{\{l:x_l \neq b_j \}} z_{j,l}~.
	\]
	For a hyperedge $S=(i_1,\ldots,i_k)$, we have
	\begin{align*}
		\psi(\bz^S)=1
		&\iff \exists \bb \in \cb \; \forall j \in [k] \; \forall x_l \neq b_j, \; z^S_{j,l}=1
		\\
		&\iff \exists \bb \in \cb \; \forall j \in [k] \; \forall x_l \neq b_j, \; i_j \neq l
		\\
		&\iff \exists \bb \in \cb \; \forall j \in [k], \; x_{i_j} = b_j
		\\
		&\iff \exists \bb \in \cb, \; \bx_S = \bb
		\\
		&\iff P(\bx_S)=1
		\\
		&\iff P_\bx(\bz^S)=1~.
	\end{align*}
\end{proof}

\begin{lemma} \label{lem:network N1 second layer}
	Let $\bx \in \{0,1\}^n$.
	There exists an affine layer with at most $2^k$ outputs, weights bounded by a constant and bias terms bounded by $n \log(n)$ (for a sufficiently large $n$), such that given an input $\bz^S \in \{0,1\}^{kn}$ for some hyperedge $S$, it satisfies the following: For $S$ with $P_\bx(\bz^S) = 0$ all outputs are at most $-1$, and for $S$ with $P_\bx(\bz^S) = 1$ there exists an output greater or equal to $2$.
\end{lemma}
\begin{proof}
	By \lemref{lem:from P to DNF}, there exists a DNF formula $\varphi_\bx$ over $\{0,1\}^{kn}$ with at most $2^k$ terms, such that $\varphi_\bx(\bz^S) = P_\bx(\bz^S)$. Thus, if $P_\bx(\bz^S) = 0$ then all terms in $\varphi_\bx$ are not satisfied for the input $\bz^S$, and if $P_\bx(\bz^S) = 1$ then there is at least one term in $\varphi_\bx$ which is satisfied for the input $\bz^S$. Therefore, it suffices to construct an affine layer such that for an input $\bz^S$, the $j$-th output will be at most $-1$ if the $j$-th term of $\varphi_\bx$ is not satisfied, and at least $2$ otherwise.
	Each term $C_j$ in $\varphi_\bx$ is a conjunction of positive literals. Let $I_j \subseteq [kn]$ be the indices of these literals. The $j$-th output of the affine layer will be 
	\[
		\left(\sum_{l \in I_j} 3 z^{S}_l\right) - 3 |I_j| + 2~. 
	\]
	Note that if the conjunction $C_j$ holds, then this expression is exactly $3 |I_j| -  3 |I_j| + 2 = 2$, and otherwise it is at most $3 (|I_j| - 1) - 3 |I_j| + 2 = -1$.
	Finally, note that all weights are bounded by $3$ and all bias terms are bounded by $n \log(n)$ (for large enough $n$).
\end{proof}

\begin{lemma} \label{lem:network N1}
	Let $\bx \in \{0,1\}^n$.
	There exists a depth-$2$ neural network $N_1$ with input dimension $kn$, $2kn$ hidden neurons, at most $2^k$ output neurons, and parameter magnitudes bounded by $n^3$ (for a sufficiently large $n$), which satisfies the following. We denote the set of output neurons of $N_1$ by $\ce_1$. Let $\bz' \in \reals^{kn}$ be such that $\Psi(\bz') = \bz^S$ for some hyperedge $S$, and assume that for every $i \in [kn]$ we have $z'_i \not \in \left( c, c + \frac{1}{n^2} \right)$. Then, for $S$ with $P_\bx(\bz^S) = 0$ the inputs to all neurons $\ce_1$ are at most $-1$, and for $S$ with $P_\bx(\bz^S) = 1$ there exists a neuron in $\ce_1$ with input at least $2$. Moreover, only the second layer of $N_1$ depends on $\bx$.
\end{lemma}
\begin{proof}
	First, we construct a depth-$2$ neural network $N_\Psi:\reals^{kn} \rightarrow [0,1]^{kn}$ with a single layer of non-linearity, such that for every $\bz' \in \reals^{kn}$ with $z'_i \not \in (c,c+\frac{1}{n^2})$ for every $i \in [kn]$, we have $N_\Psi(\bz') = \Psi(\bz')$. The network $N_\Psi$ has $2kn$ hidden neurons, and computes $N_\Psi(\bz') = (f(z'_1),\ldots,f(z'_{kn}))$, where $f:\reals \rightarrow [0,1]$ is such that
	\[
		f(t) = n^2 \cdot \left(\left[t - c\right]_+ - \left[t - \left(c + \frac{1}{n^2}\right)\right]_+ \right)~.
	\]
	Note that if $t \leq c$ then $f(t)=0$, if $t \geq c+\frac{1}{n^2}$ then $f(t)=1$, and if $c < t <  c+\frac{1}{n^2}$ then $f(t) \in (0,1)$.
	Also, note that all weights and bias terms can be bounded by $n^2$ (for large enough $n$). Moreover, the network $N_\Psi$ does not depend on $\bx$.

	Let $\bz' \in \reals^{kn}$ such that $\Psi(\bz') = \bz^S$ for some hyperedge $S$, and assume that for every $i \in [kn]$ we have $z'_i \not \in \left( c, c + \frac{1}{n^2} \right)$. For such $\bz'$, we have $N_\Psi(\bz') = \Psi(\bz') = \bz^S$. Hence, it suffices to show that we can construct an affine layer with at most $2^k$ outputs, weights bounded by a constant and bias terms bounded by $n^3$, such that given an input $\bz^S$ it satisfies the following: For $S$ with $P_\bx(\bz^S) = 0$ all outputs are at most $-1$, and for $S$ with $P_\bx(\bz^S) = 1$ there exists an output greater or equal to $2$. We construct such an affine layer in \lemref{lem:network N1 second layer}.
\ignore{
	By \lemref{lem:from P to DNF}, there exists a DNF formula $\varphi_\bx$ over $\{0,1\}^{kn}$ with at most $2^k$ terms, such that $\varphi_\bx(\bz^S) = P_\bx(\bz^S)$. Thus, if $P_\bx(\bz^S) = 0$ then all terms in $\varphi_\bx$ are not satisfied for the input $\bz^S$, and if $P_\bx(\bz^S) = 1$ then there is at least one term in $\varphi_\bx$ which is satisfied for the input $\bz^S$. Therefore, it suffices to construct an affine layer such that for an input $\bz^S$, the $j$-th output will be at most $-1$ if the $j$-th term of $\varphi_\bx$ is not satisfied, and at least $2$ otherwise.
	Each term $C_j$ in $\varphi_\bx$ is a conjunction of positive literals. Let $I_j \subseteq [kn]$ be the indices of these literals.
The $j$-th output of the affine layer will be 
	\[
		\left(\sum_{l \in I_j} 3 z^{S}_l\right) - 3 |I_j| + 2~. 
	\]
	Note that if the conjunction $C_j$ holds, then this expression is exactly $3 |I_j| -  3 |I_j| + 2 = 2$, and otherwise it is at most $3 (|I_j| - 1) - 3 |I_j| + 2 = -1$.
	Finally, note that all weights and bias terms in the network $N_1$ can be bounded by $n^3$ (for large enough $n$).
}
\end{proof}

\ignore{
The following lemma is from \cite{daniely2021local}, but we give it here fore completeness. 

\begin{lemma}[Rephrased from \cite{daniely2021local}] \label{lem:DNF for encoding check}
	 There exists a DNF formula $\varphi$ over $\{0,1\}^{kn}$ with $k \cdot \frac{n(n-1)}{2} + k + n \cdot \frac{k(k-1)}{2}$ terms such that $\varphi(\bz)=1$ iff $\bz$ is not an encoding of a hyperedge.
\end{lemma}
\begin{proof}
\end{proof}
}

\begin{lemma}  \label{lem:network N2 second layer}
	There exists an affine layer with $2k + n$ outputs, weights bounded by a constant and bias terms bounded by $n\log(n)$ (for a sufficiently large $n$), such that given an input $\bz \in \{0,1\}^{kn}$, if it is an encoding of a hyperedge then all outputs are at most $-1$, and otherwise there exists an output greater or equal to $2$.
\end{lemma}
\begin{proof}
	Note that $\bz \in \{0,1\}^{kn}$ is not an encoding of a hyperedge iff at least one of the following holds: 
	\begin{enumerate}
		\item At least one of the $k$ size-$n$ slices in $\bz$ contains $0$ more than once. 
		\item At least one of the $k$ size-$n$ slices in $\bz$ does not contain $0$. 
		\item There are two size-$n$ slices in $\bz$ that encode the same index. 
	\end{enumerate}
	We define the outputs of our affine layer as follows. 
	First, we have $k$ outputs that correspond to (1). In order to check whether slice $i \in [k]$ contains $0$ more than once, the output will be $3n - 4 - (\sum_{j \in [n]} 3 z_{i,j})$.
	Second, we have $k$ outputs that correspond to (2): in order to check whether slice $i \in [k]$ does not contain $0$, the output will be $(\sum_{j \in [n]} 3 z_{i,j}) - 3n + 2$. 
	Finally, we have $n$ outputs that correspond to (3): in order to check whether there are two slices that encode the same index $j \in [n]$, the output will be $3k - 4 - (\sum_{i \in [k]} 3 z_{i,j})$.
	Note that all weights are bounded by $3$ and all bias terms are bounded by $n \log(n)$ for large enought $n$.
\end{proof}

\begin{lemma}  \label{lem:network N2}
	There exists a depth-$2$ neural network $N_2$ with input dimension $kn$, at most $2kn$ hidden neurons, $2k  + n$ output neurons, and parameter magnitudes bounded by $n^3$ (for a sufficiently large $n$), which satisfies the following. We denote the set of output neurons of $N_2$ by $\ce_2$. Let $\bz' \in \reals^{kn}$ be such that for every $i \in [kn]$ we have $z'_i \not \in \left(c, c + \frac{1}{n^2} \right)$. If $\Psi(\bz')$ is an encoding of a hyperedge then the inputs to all neurons $\ce_2$ are at most $-1$, and otherwise there exists a neuron in $\ce_2$ with input at least $2$.
\end{lemma}
\begin{proof}
	Let $N_\Psi:\reals^{kn} \rightarrow [0,1]^{kn}$ be the depth-$2$ neural network from the proof of \lemref{lem:network N1}, with a single layer of non-linearity with $2kn$ hidden neurons, and parameter magnitudes bounded by $n^2$, such that for every $\bz' \in \reals^{kn}$ with $z'_i \not \in (c,c+\frac{1}{n^2})$ for every $i \in [kn]$, we have $N_\Psi(\bz') = \Psi(\bz')$.

	Let $\bz' \in \reals^{kn}$ be such that for every $i \in [kn]$ we have $z'_i \not \in \left(c, c + \frac{1}{n^2} \right)$. For such $\bz'$ we have $N_\Psi(\bz') = \Psi(\bz')$. Hence, it suffices to show that we can construct an affine layer with $2k + n$ outputs,  weights bounded by a constant and bias terms bounded by $n^3$, such that given an input $\bz \in \{0,1\}^{kn}$, if it is an encoding of a hyperedge then all outputs are at most $-1$, and otherwise there exists an output greater or equal to $2$. We construct such an affine layer in \lemref{lem:network N2 second layer}.
\ignore{
	Note that $\bz$ is not an encoding of a hyperedge iff at least one of the following holds: 
	\begin{enumerate}
		\item At least one of the $k$ size-$n$ slices in $\bz$ contains $0$ more than once. 
		\item At least one of the $k$ size-$n$ slices in $\bz$ does not contain $0$. 
		\item There are two size-$n$ slices in $\bz$ that encode the same index. 
	\end{enumerate}
	We define the outputs of our affine layer as follows. 
	First, we have 
	$k$ outputs that correspond to (1). In order to check whether slice $i \in [k]$ contains $0$ more than once, the output will be $3n - 4 - (\sum_{j \in [n]} 3 z_{i,j})$.
	Second, we have $k$ outputs that correspond to (2): in order to check whether slice $i \in [k]$ does not contain $0$, the output will be $(\sum_{j \in [n]} 3 z_{i,j}) - 3n + 2$. 
	Finally, we have 
	$n$ outputs that correspond to (3): in order to check whether there are two slices that encode the same index $j \in [n]$, the output will be $3k - 4 - (\sum_{i \in [k]} 3 z_{i,j})$.

	Moreover, note that all weights and bias terms in the network $N_2$ can be bounded by $n^3$ (for large enough $n$).
}
\end{proof}

\begin{lemma}  \label{lem:network N3}
	There exists a depth-$2$ neural network $N_3$ with input dimension $kn$, at most $n \log(n)$ hidden neurons, $kn \leq n \log(n)$ output neurons, and parameter magnitudes bounded by $n^3$ (for a sufficiently large $n$), which satisfies the following. We denote the set of output neurons of $N_3$ by $\ce_3$. Let $\bz' \in \reals^{kn}$. If there exists $i \in [kn]$ such that $z'_i \in \left(c, c + \frac{1}{n^2} \right)$ then there exists a neuron in $\ce_3$ with input at least $2$. If for all $i \in [kn]$ we have $z'_i \not \in \left(c - \frac{1}{n^2}, c + \frac{2}{n^2} \right)$ then the inputs to all neurons in $\ce_3$ are at most $-1$.
\end{lemma}
\begin{proof}
	It suffices to construct a univariate depth-$2$ network $f:\reals \to \reals$ with one non-linear layer and a constant number of hidden neurons, such that for every input $z'_i \in (c,c+\frac{1}{n^2})$ we have $f(z'_i) = 2$, and for every $z'_i \not \in (c-\frac{1}{n^2},c+\frac{2}{n^2})$ we have $f(z'_i) = -1$.

	We construct $f$ as follows:
	\begin{align*}
		f(z'_i) = &(3 n^2)\left( \left[z'_i - \left(c-\frac{1}{n^2}\right)\right]_+ - \left[z'_i-c\right]_+ \right) -
		\\
		&(3 n^2)\left( \left[z'_i-\left(c+\frac{1}{n^2}\right)\right]_+ - \left[z'_i-\left(c+\frac{2}{n^2}\right)\right]_+ \right) - 1~.
	\end{align*}
	
	Note that all weights and bias terms are bounded by $n^3$ (for large enough $n$).
\end{proof}

\begin{lemma} \label{lem:tau exists}
	Let $q = \poly(n)$ and $r = \poly(n)$. Then, there exists $\tau=\frac{1}{\poly(n)}$ such that for a sufficiently large $n$, with probability at least 
	$1 - \exp(-n/2)$
	a vector $\bxi \sim \cn(\zero, \tau^2 I_r)$ satisfies $\norm{\bxi} \leq \frac{1}{q}$.
\end{lemma}
\begin{proof}
	Let $\tau = \frac{1}{q \sqrt{2 r n}}$.
	Every component $\xi_i$ in $\bxi$ has the distribution $\cn(0, \tau^2)$.
	By a standard tail bound of the Gaussian distribution, we have for every $i \in [r]$ and $t \geq 0$ that $\Pr[ \xi_i \geq t ] \leq 2 \exp\left( -\frac{t^2}{2 \tau^2}\right)$. Hence, for $t=\frac{1}{q \sqrt{r}}$, we get
	\[
		\Pr\left[ \xi_i \geq \frac{1}{q\sqrt{r}} \right]
		\leq 2 \exp\left( -\frac{1}{2 \tau^2 q^2 r}\right)
		= 2 \exp\left( -\frac{2 r n q^2}{2 q^2 r}\right)
		= 2 \exp\left( -n\right)~.
	\]
	By the union bound, with probability at least $1 - r \cdot 2 e^{-n}$, we have 
	\[
		\norm{\bxi}^2
		 \leq r \cdot \frac{1}{r q^2}
		 = \frac{1}{q^2}~.
	\]
	Thus, for a sufficiently large $n$, with probability at least 
	$1 - \exp(-n/2)$
	we have $\norm{\bxi} \leq \frac{1}{q}$.
\ignore{
	Note that $\norm{\frac{\bxi}{\tau}}^2$ has the Chi-squared distribution. A concentration bound by Laurent and Massart \citep[Lemma~1]{laurent2000adaptive} implies that for all $t > 0$ we have
	\[
	 	\Pr\left[ \norm{\frac{\bxi}{\tau}}^2 - r \geq 2 \sqrt{rt} + 2t \right] \leq e^{-t}~.
	\]
	Plugging-in $t=\frac{r}{4}$ we get 
	\begin{align*}
		\Pr\left[ \norm{\frac{\bxi}{\tau}}^2 \geq 4r \right]
		&= \Pr\left[ \norm{\frac{\bxi}{\tau}}^2 - r \geq 3r \right]
		\\
		&\leq \Pr\left[ \norm{\frac{\bxi}{\tau}}^2 - r \geq \frac{3}{2}r \right]
		\\
		&= \Pr\left[ \norm{\frac{\bxi}{\tau}}^2 - r \geq  2 \sqrt{r \cdot \frac{r}{4}} + 2 \cdot \frac{r}{4} \right] 
		\\
		&\leq \exp\left( - \frac{r}{4} \right)~.
	\end{align*}
}
\end{proof}

\begin{lemma} \label{lem:realizable}
	If $\cs$ is pseudorandom then with probability at least $\frac{39}{40}$ (over $\bxi \sim \cn(\zero, \tau^2 I_p)$ and the i.i.d. inputs $\tilde{\bz}_i \sim \cd$) the examples $(\tilde{\bz}_1, \tilde{y}_1),\ldots,(\tilde{\bz}_{m(n)+n^3},\tilde{y}_{m(n)+n^3})$ returned by the oracle are realized by $\hat{N}$.
\end{lemma}
\begin{proof}
	By our choice of $\tau$, with probability at least $1-\frac{1}{n}$ over $\bxi \sim \cn(\zero, \tau^2 I_p)$, we have $|\xi_j| \leq \frac{1}{10}$ for all $j \in [p]$, and for every $\tilde{\bz}$ with $\norm{\tilde{\bz}} \leq 2n$  
the inputs to the neurons $\ce_1,\ce_2,\ce_3$ in the computation $\hat{N}(\tilde{\bz})$ satisfy Properties~\ref{prop:first} through \ref{prop:last}. We first show that with probability at least $1 - \frac{1}{n}$ all examples $\tilde{\bz}_1, \ldots, \tilde{\bz}_{m(n)+n^3}$ satisfy $\norm{\tilde{\bz}_i} \leq 2n$. Hence, with probability at least $1 - \frac{2}{n}$, Properties~\ref{prop:first} through \ref{prop:last} hold for the computations $\hat{N}(\tilde{\bz}_i)$ for all $i \in [m(n)+n^3]$.
	
	
	Note that $\norm{\tilde{\bz}_i}^2$ has the Chi-squared distribution. Since $\tilde{\bz}_i$ is of dimension $n^2$, a concentration bound by Laurent and Massart \citep[Lemma~1] {laurent2000adaptive} implies that for all $t > 0$ we have
	\[
	 	\Pr\left[ \norm{\tilde{\bz}_i}^2 - n^2 \geq 2n\sqrt{t} + 2t \right] \leq e^{-t}~.
	\]
	Plugging-in $t=\frac{n^2}{4}$, we get
	\begin{align*}
        		\Pr\left[ \norm{\tilde{\bz}_i}^2  \geq  4n^2 \right]
		&= \Pr\left[ \norm{\tilde{\bz}_i}^2 - n^2  \geq  3n^2 \right] 
		\\
		&\leq \Pr\left[ \norm{\tilde{\bz}_i}^2 - n^2  \geq  \frac{3n^2}{2} \right] 
		\\
		&= \Pr\left[ \norm{\tilde{\bz}_i}^2 - n^2 \geq  2n\sqrt{\frac{n^2}{4}} + 2 \cdot \frac{n^2}{4} \right] 
		\\
		&\leq \exp\left(-\frac{n^2}{4}\right)~.
	\end{align*}
	Thus, we have
	$\Pr\left[ \norm{\tilde{\bz}_i}  \geq 2n \right] \leq \exp\left(-\frac{n^2}{4}\right)$.
\ignore{
	\begin{equation*} 
		\Pr\left[ \norm{\tilde{\bz}_i}  \geq 2n \right] 
		\leq \exp\left(-\frac{n^2}{4}\right)~.
	\end{equation*}
}
	By the union bound, with probability at least 
	\[	
		1 - \left( m(n)+n^3  \right)  \exp\left(-\frac{n^2}{4}\right) \geq 1 - \frac{1}{n}
	\]
	(for a sufficiently large $n$), all examples $(\tilde{\bz}_i, \tilde{y}_i)$ satisfy $\norm{\tilde{\bz}_i} \leq 2n$.
	
	Thus, we showed that with probability at least $1 - \frac{2}{n} \geq \frac{39}{40}$ (for a sufficiently large $n$), we have $|\xi_j| \leq \frac{1}{10}$ for all $j \in [p]$, and Properties~\ref{prop:first} through \ref{prop:last} hold for the computations $\hat{N}(\tilde{\bz}_i)$ for all $i \in [m(n)+n^3]$. It remains to show that if these properties hold, then the examples $(\tilde{\bz}_1, \tilde{y}_1),\ldots,(\tilde{\bz}_{m(n)+n^3},\tilde{y}_{m(n)+n^3})$ are realized by $\hat{N}$.
	
	Let $i \in [m(n)+n^3]$. For brevity, we denote $\tilde{\bz} = \tilde{\bz}_i$, $\tilde{y} = \tilde{y}_i$, and $\bz' = \tilde{\bz}_{[kn]}$.
	Since $|\xi_j| \leq \frac{1}{10}$ for all $j \in [p]$, and all incoming weights to the output neuron in $\tilde{N}$ are $-1$, then in $\hat{N}$ all incoming weights to the output neuron are in $\left[ - \frac{11}{10}, -\frac{9}{10} \right]$, and the bias term in the output neuron, denoted by $\hat{b}$, is in $\left[ \frac{9}{10}, \frac{11}{10} \right]$. 
	Consider the following cases:
	\begin{itemize}
		\item If $\Psi(\bz')$ is not an encoding of a hyperedge then $\tilde{y} = 0$, and $\hat{N}(\tilde{\bz})$ satisfies:
			\begin{enumerate}
				\item If $\bz'$ does not have components in $\left( c, c+\frac{1}{n} \right)$, then there exists a neuron in $\ce_2$ with output at least $\frac{3}{2}$  (by Property~\ref{prop:second}).
				\item If $\bz'$ has a component in $\left( c, c+\frac{1}{n} \right)$, then there exists a neuron in $\ce_3$ with output at least $\frac{3}{2}$ (by Property~\ref{prop:last}). 
			\end{enumerate}
			In both cases, since all incoming weights to the output neuron in $\hat{N}$ are in $\left[ - \frac{11}{10}, -\frac{9}{10} \right]$, and $\hat{b} \in \left[ \frac{9}{10}, \frac{11}{10} \right]$, then the input to the output neuron (including the bias term) is at most $\frac{11}{10} - \frac{3}{2} \cdot \frac{9}{10} < 0$, and thus its output is $0$.
\ignore{
		\begin{itemize}
			\item If $\bz'$ does not have components in $\left( c, c+\frac{1}{n} \right)$, then there exists a neuron in $\ce_2$ with output at least $\frac{3}{2}$. Since all incoming weights to the output neuron in $\hat{N}$ are in $\left[ - \frac{11}{10}, -\frac{9}{10} \right]$, and $\hat{b} \in \left[ \frac{9}{10}, \frac{11}{10} \right]$, then the input to output neuron (including the bias term but without the activation) is at most $\frac{11}{10} - \frac{3}{2} \cdot \frac{9}{10} < 0$, and thus its output is $0$.
			\item If $\bz'$ has a component in $\left( c, c+\frac{1}{n} \right)$, then there exists a neuron in $\ce_3$ with input at least $\frac{3}{2}$.  Since all incoming weights to the output neuron in $\hat{N}$ are in $\left[ - \frac{11}{10}, -\frac{9}{10} \right]$, and $\hat{b} \in \left[ \frac{9}{10}, \frac{11}{10} \right]$, then the input to output neuron (including the bias term but without the activation) is at most $\frac{11}{10} - \frac{3}{2} \cdot \frac{9}{10} < 0$, and thus its output is $0$.
		\end{itemize} 	
}
	\item If $\Psi(\bz')$ is an encoding of a hyperedge $S$, then by the definition of the examples oracle we have $S=S_i$. Hence:
		\begin{itemize}
			
			\item If $\bz'$ does not have components in $\left( c - \frac{1}{n^2}, c + \frac{2}{n^2} \right)$, then:
			
			\begin{itemize}
				
				\item	If $y_i = 0$ then the oracle sets $\tilde{y}=\hat{b}$. Since $\cs$ is pseudorandom, we have $P_\bx(\bz^S) = P_\bx(\bz^{S_i}) = y_i = 0$. 
				Hence,
				in the computation $\hat{N}(\tilde{\bz})$ the inputs to all neurons in $\ce_1,\ce_2,\ce_3$ are at most $-\frac{1}{2}$ (by Properties~\ref{prop:first},~\ref{prop:second} and~\ref{prop:last}), and thus their outputs are $0$. Therefore, $\hat{N}(\tilde{\bz}) = \hat{b}$.
				
				\item If $y_i = 1$ then the oracle sets $\tilde{y}=0$. Since $\cs$ is pseudorandom, we have $P_\bx(\bz^S) = P_\bx(\bz^{S_i}) = y_i = 1$. 
				Hence,
				in the computation $\hat{N}(\tilde{\bz})$ there exists a neuron in $\ce_1$ with output at least $\frac{3}{2}$  (by Property~\ref{prop:first}). Since all incoming weights to the output neuron in $\hat{N}$ are in $\left[ - \frac{11}{10}, -\frac{9}{10} \right]$, and $\hat{b} \in \left[ \frac{9}{10}, \frac{11}{10} \right]$, then the input to output neuron (including the bias term) is at most $\frac{11}{10} - \frac{3}{2} \cdot \frac{9}{10} < 0$, and thus its output is $0$.
			
			\end{itemize} 
			
			\item If $\bz'$ has a component in $\left( c , c + \frac{1}{n^2} \right)$, then $\tilde{y}=0$. Also, in the computation $\hat{N}(\tilde{\bz})$ there exists a neuron in $\ce_3$ with output at least $\frac{3}{2}$ (by Property~\ref{prop:last}). Since all incoming weights to the output neuron in $\hat{N}$ are in $\left[ - \frac{11}{10}, -\frac{9}{10} \right]$, and $\hat{b} \in \left[ \frac{9}{10}, \frac{11}{10} \right]$, then the input to output neuron (including the bias term) is at most $\frac{11}{10} - \frac{3}{2} \cdot \frac{9}{10} < 0$, and thus its output is $0$.
			
			\item  If $\bz'$ does not have components in the interval $(c,c+\frac{1}{n^2})$, but has a component in the interval $(c-\frac{1}{n^2},c+\frac{2}{n^2})$, then:
			
			\begin{itemize}
				
				\item If $y_i = 1$ the oracle sets $\tilde{y}=0$. Since $\cs$ is pseudorandom, we have $P_\bx(\bz^S) = P_\bx(\bz^{S_i}) = y_i = 1$. 
				Hence, in the computation $\hat{N}(\tilde{\bz})$ there exists a neuron in $\ce_1$ with output at least $\frac{3}{2}$ (by Property~\ref{prop:first}). Since all incoming weights to the output neuron in $\hat{N}$ are in $\left[ - \frac{11}{10}, -\frac{9}{10} \right]$, and $\hat{b} \in \left[ \frac{9}{10}, \frac{11}{10} \right]$, then the input to output neuron (including the bias term) is at most $\frac{11}{10} - \frac{3}{2} \cdot \frac{9}{10} < 0$, and thus its output is $0$. 
				
				\item  If $y_i = 0$ the oracle sets $\tilde{y}=[\hat{b} - \hat{N}_3(\tilde{\bz})]_+$. Since $\cs$ is pseudorandom, we have $P_\bx(\bz^S) = P_\bx(\bz^{S_i}) = y_i = 0$. 
				Therefore, in the computation $\hat{N}(\tilde{\bz})$ all neurons in $\ce_1,\ce_2$ have output $0$ (by Properties~\ref{prop:first} and~\ref{prop:second}), and hence their contribution to the output of $\hat{N}$ is $0$. Thus, by the definition of $\hat{N}_3$, we have $\hat{N}(\tilde{\bz}) = [\hat{b} - \hat{N}_3(\tilde{\bz})]_+$.
			
			\end{itemize}			
		
		\end{itemize}
	
	\end{itemize}
\end{proof}

\begin{lemma} \label{lem:pseudorandom small loss}
	If $\cs$ is pseudorandom, then for a sufficiently large $n$, with probability greater than $\frac{2}{3}$ we have
	\[
		\ell_I(h') \leq \frac{2}{n}~.
	\]
\end{lemma}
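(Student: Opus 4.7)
The plan is to combine the realizability guarantee of \lemref{lem:realizable} with the learning guarantee of $\cl$ and a Hoeffding concentration bound over the held-out index set $I$, in order to pass from the population loss of $h'$ to its empirical loss on $\tilde{\cs}_I$; this mirrors the sketch given just before the lemma statement. First I would invoke \lemref{lem:realizable}: with probability at least $\frac{39}{40}$ over $\bxi\sim\cn(\zero,\tau^2 I_p)$ and the i.i.d.\ inputs $\tilde{\bz}_i\sim\cd$, every example produced by the oracle satisfies $\tilde{y}_i=\hat{N}(\tilde{\bz}_i)$. Conditioned on this event, the first $m(n)$ examples fed to $\cl$ form a realizable sample for the smoothed depth-$3$ network $\hat{N}$ (whose parameters are $\tilde{\btheta}+\bxi$, matching Definition~\ref{def:smoothed params}), so the learning guarantee of $\cl$ yields, with total probability at least $\frac{3}{4}-\frac{1}{40}$ (union bound over $\cl$'s failure and the failure of realizability), a hypothesis $h$ with $\E_{\tilde{\bz}\sim\cd}[(h(\tilde{\bz})-\hat{N}(\tilde{\bz}))^2]\le\frac{1}{n}$.

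Next I would note that $\hat{N}(\tilde{\bz})\in[0,\hat{b}]$ for every $\tilde{\bz}$, since the output neuron is a ReLU whose input equals $\hat{b}$ minus a nonnegative quantity (all incoming weights lie in $[-\tfrac{11}{10},-\tfrac{9}{10}]$ and the contributing hidden-layer outputs are nonnegative). Hence clipping $h$ to $[0,\hat{b}]$ to form $h'$ can only decrease the pointwise squared error, so the same $\frac{1}{n}$ population bound holds for $h'$. Setting $\hat{\ell}_I(h')=\frac{1}{|I|}\sum_{i\in I}(h'(\tilde{\bz}_i)-\hat{N}(\tilde{\bz}_i))^2$, the $n^3$ samples indexed by $I$ are i.i.d.\ draws from $\cd$ that are independent of the samples used by $\cl$, so $\E_{\tilde{\cs}_I}[\hat{\ell}_I(h')]$ equals that population loss. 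Each summand lies in $[0,\hat{b}^2]$; conditioning on the event $\hat{b}\le\frac{11}{10}$ (which holds with probability at least $1-\frac{1}{n}$ by our choice of $\tau$) and applying Hoeffding's inequality to the $n^3$ i.i.d.\ terms, then combining with the trivial bound on the complementary event via total probability, gives
\[
\Pr\!\left[\left|\hat{\ell}_I(h')-\E_{\tilde{\cs}_I}\hat{\ell}_I(h')\right|\ge\tfrac{1}{n}\right]\le 2\exp\!\left(-\tfrac{2n^3}{n^2(11/10)^4}\right)+\tfrac{1}{n}\le\tfrac{1}{40}
\]
for sufficiently large $n$. Moreover, on the realizability event we have $\tilde{y}_i=\hat{N}(\tilde{\bz}_i)$ for every $i\in I$, so $\ell_I(h')=\hat{\ell}_I(h')$ deterministically.

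Finally I would take a union bound over the three failure events -- $\cl$'s own failure, failure of realizability on the enlarged sample, and failure of Hoeffding concentration -- yielding total failure probability at most $\frac{1}{4}+\frac{1}{40}+\frac{1}{40}+\frac{1}{40}<\frac{1}{3}$. With the complementary probability greater than $\frac{2}{3}$,
\[
\ell_I(h') = \bigl(\ell_I(h')-\hat{\ell}_I(h')\bigr) + \bigl(\hat{\ell}_I(h')-\E_{\tilde{\cs}_I}\hat{\ell}_I(h')\bigr) + \E_{\tilde{\cs}_I}\hat{\ell}_I(h') \le 0+\tfrac{1}{n}+\tfrac{1}{n}=\tfrac{2}{n}.
\]
The only mildly delicate step is the Hoeffding bound, whose summands have a \emph{random} range $\hat{b}^2$; as above, this is handled cleanly by conditioning on $\hat{b}\le\frac{11}{10}$ and paying an extra $\frac{1}{n}$ in failure probability, which is still absorbed into the overall $\frac{1}{3}$ budget.
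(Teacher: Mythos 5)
Your proof follows exactly the same route as the paper's: invoke \lemref{lem:realizable} to pass from realizability to a good hypothesis with probability at least $\frac{3}{4}-\frac{1}{40}$, clip $h$ to $h'$, apply Hoeffding conditioned on $\hat{b}\le\frac{11}{10}$ to control $|\hat{\ell}_I(h')-\E\hat{\ell}_I(h')|$, use realizability again to equate $\ell_I$ with $\hat{\ell}_I$, and close with the same union bound over the same four failure probabilities. The only difference is that you spell out slightly more explicitly why clipping preserves the error bound (noting $\hat{N}\in[0,\hat{b}]$); the argument is correct and matches the paper.
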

\begin{proof}
	By \lemref{lem:realizable}, 
if $\cs$ is pseudorandom then with probability at least $\frac{39}{40}$ (over $\bxi \sim \cn(\zero, \tau^2 I_p)$ and the i.i.d. inputs $\tilde{\bz}_i \sim \cd$) the examples $(\tilde{\bz}_1, \tilde{y}_1),\ldots,(\tilde{\bz}_{m(n)},\tilde{y}_{m(n)})$ returned by the oracle are realized by $\hat{N}$. 
Recall that the algorithm $\cl$ is such that with probability at least $\frac{3}{4}$ (over $\bxi \sim \cn(\zero, \tau^2 I_p)$, the i.i.d. inputs $\tilde{\bz}_i \sim \cd$, and possibly its internal randomness), given a size-$m(n)$ dataset labeled by $\hat{N}$, it returns a hypothesis $h$ such that $\E_{\tilde{\bz} \sim \cd} \left[(h(\tilde{\bz})-\hat{N}(\tilde{\bz}))^2 \right] \leq \frac{1}{n}$.
Hence, with probability at least $\frac{3}{4} - \frac{1}{40}$ the algorithm $\cl$ returns such a good hypothesis $h$, given $m(n)$ examples labeled by our examples oracle. 
Indeed, note that $\cl$ can return a bad hypothesis only if the random choices are either bad for $\cl$ (when used with realizable examples) or bad for the realizability of the examples returned by our oracle.
By the definition of $h'$ and the construction of $\hat{N}$, if $h$ has small error then $h'$ also has small error, namely, 
\[
	 \E_{\tilde{\bz} \sim \cd} \left[(h'(\tilde{\bz})-\hat{N}(\tilde{\bz}))^2 \right] 
	 \leq \E_{\tilde{\bz} \sim \cd} \left[(h(\tilde{\bz})-\hat{N}(\tilde{\bz}))^2 \right] 
	 \leq \frac{1}{n}~.
\]

Let $\hat{\ell}_{I}(h')=\frac{1}{|I|}\sum_{i \in I}(h'(\tilde{\bz}_i)-\hat{N}(\tilde{\bz}_i))^2$.
Recall that by our choice of $\tau$ we have $\Pr[\hat{b} > \frac{11}{10}] \leq \frac{1}{n}$.
Since, $(h'(\tilde{\bz})-\hat{N}(\tilde{\bz}))^2 \in [0,\hat{b}^2]$ for all $\tilde{\bz} \in \reals^{n^2}$,
by Hoeffding's inequality, we have for a sufficiently large $n$ that
\begin{align*}
	\Pr\left[\left|\hat{\ell}_{I}(h') -  \E_{\tilde{\cs}_I}\hat{\ell}_{I}(h')\right| \geq \frac{1}{n}\right]
	&= \Pr\left[\left|\hat{\ell}_{I}(h') -  \E_{\tilde{\cs}_I}\hat{\ell}_{I}(h')\middle| \geq \frac{1}{n} \right| \hat{b} \leq \frac{11}{10}\right] \cdot \Pr\left[ \hat{b} \leq \frac{11}{10} \right]
	\\
	&\;\;\;\; + \Pr\left[\left|\hat{\ell}_{I}(h') -  \E_{\tilde{\cs}_I}\hat{\ell}_{I}(h')\middle| \geq \frac{1}{n} \right| \hat{b} > \frac{11}{10}\right] \cdot \Pr\left[ \hat{b} > \frac{11}{10} \right]
	\\
	&\leq 2 \exp\left( -\frac{2n^3}{n^2 (11/10)^4} \right) \cdot 1 + 1 \cdot \frac{1}{n}
	\\
	&\leq \frac{1}{40}~.
\end{align*}
Moreover, by \lemref{lem:realizable},
\[
	\Pr \left[  \ell_I(h') \neq \hat{\ell}_I(h') \right]
	\leq \Pr \left[ \exists i \in I \text{ s.t. } \tilde{y}_i \neq \hat{N}(\tilde{\bz}_i) \right]
	\leq \frac{1}{40}~. 
\]

Overall, by the union bound we have with probability at least $1-\left( \frac{1}{4} + \frac{1}{40} + \frac{1}{40} + \frac{1}{40}\right)  > \frac{2}{3}$ for sufficiently large $n$ that:
\begin{itemize}
	\item $\E_{\tilde{\cs}_I}\hat{\ell}_{I}(h') = \E_{\tilde{\bz} \sim \cd} \left[(h'(\tilde{\bz})-\hat{N}(\tilde{\bz}))^2 \right] \leq \frac{1}{n}$.
	\item $\left|\hat{\ell}_{I}(h') -  \E_{\tilde{\cs}_I}\hat{\ell}_{I}(h')\right| \leq \frac{1}{n}$.
	\item $\ell_I(h') - \hat{\ell}_I(h') = 0$.
\end{itemize}
Combining the above, we get that if $\cs$ is pseudorandom, then with probability greater than $\frac{2}{3}$ we have
\[
	\ell_I(h')
	= \left( \ell_I(h') - \hat{\ell}_I(h') \right) + \left( \hat{\ell}_I(h') - \E_{\tilde{\cs}_I}\hat{\ell}_{I}(h') \right) +\E_{\tilde{\cs}_I}\hat{\ell}_{I}(h')  
	\leq 0 + \frac{1}{n} + \frac{1}{n} 
	= \frac{2}{n}~.
\]
\end{proof}

\begin{lemma} \label{lem:prob z good discrete}
	Let $\bz \in \{0,1\}^{kn}$ be a random vector whose components are drawn i.i.d. from a Bernoulli distribution, which takes the value $0$ with probability $\frac{1}{n}$. Then, for a sufficiently large $n$, the vector $\bz$ is an encoding of a hyperedge with probability at least $\frac{1}{\log(n)}$.
\end{lemma}
\begin{proof}
	The vector $\bz$ represents a hyperedge iff in each of the $k$ size-$n$ slices in $\bz$ there is exactly one $0$-bit and each two of the $k$ slices in $\bz$ encode different indices. Hence, 
	\begin{align*}
		 \Pr\left[\bz \text{ represents a hyperedge} \right]
		&=n \cdot (n-1) \cdot \ldots \cdot(n-k+1) \cdot \left(\frac{1}{n}\right)^k \left(\frac{n-1}{n}\right)^{nk-k}
		\\
		&\geq \left(\frac{n-k}{n}\right)^k \left(\frac{n-1}{n}\right)^{k(n-1)}
		\\
		&=\left(1-\frac{k}{n}\right)^k \left(1-\frac{1}{n}\right)^{k(n-1)}~.
	\end{align*}
	Since for every $x \in (0,1)$ we have $e^{-x} < 1 - \frac{x}{2}$, then for a sufficiently large $n$ the above is at least
	\begin{equation*} \label{eq:prob represents hyperedge}
		\exp\left(-\frac{2k^2}{n}\right) \cdot \exp\left(-\frac{2k(n-1)}{n} \right)
		\geq \exp\left(-1\right) \cdot \exp\left(-2k\right)
		\geq \frac{1}{\log(n)}~.
	\end{equation*}
\end{proof}

\begin{lemma} \label{lem:prob z good}
	Let $\tilde{\bz} \in \reals^{n^2}$ be the vector returned by the oracle. We have
	\[
		\Pr\left[\tilde{\bz} \in \tilde{\cz}\right] \geq \frac{1}{2\log(n)}~.
	\]
\end{lemma}
\begin{proof}
	Let $\bz' = \tilde{\bz}_{[kn]}$. We have
	\begin{equation} \label{eq:prob z good}
		\Pr\left[\tilde{\bz} \not \in \tilde{\cz}\right] 
		\leq \Pr\left[ \exists j \in [kn] \text{ s.t. } z'_j \in \left(c-\frac{1}{n^2},c+\frac{2}{n^2}\right) \right] +  \Pr\left[\Psi(\bz') \text{ does not represent a hyperedge} \right]~.
	\end{equation}
	
	We now bound the terms in the above RHS.
	First, since $\bz'$ has the Gaussian distribution, then its components are drawn i.i.d. from a density function bounded by $\frac{1}{2\pi}$. Hence, for a sufficiently large $n$ we have
	\begin{equation} \label{eq:prob good components}
		\Pr\left[ \exists j \in [kn] \text{ s.t. } z'_j \in \left(c-\frac{1}{n^2},c+\frac{2}{n^2}\right) \right] 
		\leq kn \cdot \frac{1}{2\pi} \cdot \frac{3}{n^2}
		= \frac{3k}{2 \pi n}
		\leq \frac{\log(n)}{n}~.
	\end{equation}
	
	Let $\bz = \Psi(\bz')$. Note that $\bz$ is a random vector whose components are drawn i.i.d. from a Bernoulli distribution, where the probability to get $0$ is $\frac{1}{n}$. By \lemref{lem:prob z good discrete}, $\bz$ is an encoding of a hyperedge with probability at least $\frac{1}{\log(n)}$. Combining it with \eqref{eq:prob z good} and~(\ref{eq:prob good components}), , we get for a sufficiently large $n$ that
	\[
		\Pr\left[\tilde{\bz} \not \in \tilde{\cz}\right] 
		\leq \frac{\log(n)}{n} + \left( 1 - \frac{1}{\log(n)} \right)
		\leq 1 - \frac{1}{2 \log(n)}~,
	\]
	as required.
\end{proof}

\begin{lemma} \label{lem:random large loss}
	 If $\cs$ is random, then for a sufficiently large $n$ with probability larger than $\frac{2}{3}$ we have
	 \[
    		\ell_I(h') > \frac{2}{n}~.
    	\]
\end{lemma}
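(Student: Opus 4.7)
The plan is to lower bound $\ell_I(h')$ by exploiting the fact that on a non-negligible subset of inputs, the random labels $\tilde{y}_i$ are information-theoretically unpredictable, so any hypothesis must incur constant loss on them.

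First I would fix $\tilde{\cz} \subseteq \reals^{n^2}$ as defined in the text: $\tilde{\bz} \in \tilde{\cz}$ iff $\Psi(\tilde{\bz}_{[kn]}) = \bz^S$ for some hyperedge $S$ and no component of $\tilde{\bz}_{[kn]}$ lies in $(c - 1/n^2,\, c + 2/n^2)$. By construction of the oracle, whenever $\tilde{\bz}_i \in \tilde{\cz}$ and $\cs$ is random, the label $\tilde{y}_i$ equals $\hat{b}$ or $0$ each with probability $1/2$, independently of $\tilde{\bz}_i$ (the oracle's choice is driven by the random bit $y_i$, which is independent of $S_i$ and of the real-valued vector chosen to encode $\bz^{S_i}$). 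Consequently, for any measurable $h'$ and any $i$ with $\tilde{\bz}_i \in \tilde{\cz}$, the conditional expectation satisfies
\[
\E\left[(h'(\tilde{\bz}_i) - \tilde{y}_i)^2 \mid \tilde{\bz}_i\right] \geq \left(\tfrac{\hat{b}}{2}\right)^2,
\]
because the Bayes-optimal predictor is $\hat{b}/2$.

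Next I would combine this with \lemref{lem:prob z good}, which yields $\Pr[\tilde{\bz}_i \in \tilde{\cz}] \geq 1/(2\log n)$, and with the fact that $\hat{b} \geq 9/10$ with probability at least $1 - 1/n$ (which follows from the choice of $\tau$ ensuring $|\xi_j| \leq 1/10$ for the bias of the output neuron, whose unperturbed value is $1$). Conditioning on $\hat{b} \geq 9/10$, the per-sample expected loss is at least $\tfrac{1}{2}\cdot(9/20)^2 \cdot \tfrac{1}{2\log n} \geq \tfrac{1}{20 \log n}$, and therefore
\[
\E_{\tilde{\cs}_I}[\ell_I(h')] \geq \frac{1}{20 \log n}
\]
holds with probability at least $1 - 1/n$ over $\bxi$.

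Finally I would apply Hoeffding's inequality to move from the expectation to the empirical average $\ell_I(h')$. Since each summand lies in $[0,\hat{b}^2] \subseteq [0, (11/10)^2]$ conditional on $\hat{b} \leq 11/10$ (again a $1-1/n$ event by the $\tau$-analysis), and $|I| = n^3$, I get
\[
\Pr\left[\left|\ell_I(h') - \E_{\tilde{\cs}_I}[\ell_I(h')]\right| \geq \tfrac{1}{n}\right] \leq 2 \exp\!\left(-\tfrac{2 n^3}{n^2 (11/10)^4}\right) + \tfrac{1}{n},
\]
which is below $1/8$ for large $n$, following the same conditioning trick used in \lemref{lem:pseudorandom small loss}. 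Combining via a union bound over these $O(1/n)$-probability bad events, with probability larger than $2/3$ we have simultaneously $\E_{\tilde{\cs}_I}[\ell_I(h')] \geq 1/(20\log n)$ and $|\ell_I(h') - \E_{\tilde{\cs}_I}[\ell_I(h')]| \leq 1/n$, whence $\ell_I(h') \geq 1/(20\log n) - 1/n > 2/n$ for large $n$.

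The only delicate step is justifying that $\tilde{y}_i$ is genuinely independent of $\tilde{\bz}_i$ conditional on $\tilde{\bz}_i \in \tilde{\cz}$: one must check that the oracle's rewriting (replacing an encoded hyperedge $\bz$ by $\bz^{S_i}$ and then sampling each coordinate from $\mu_\pm$) makes the real-valued vector $\tilde{\bz}_i$ a function only of the underlying random hyperedge and of the auxiliary Gaussian noise, with no further dependence on $y_i$. This is immediate from the oracle's definition but worth stating explicitly; everything else is a Hoeffding-plus-union-bound calculation analogous to \lemref{lem:pseudorandom small loss}.
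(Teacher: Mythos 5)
Your proof is correct and follows essentially the same approach as the paper's: the same set $\tilde{\cz}$, the same use of \lemref{lem:prob z good} to lower-bound $\Pr[\tilde{\bz}_i \in \tilde{\cz}]$, the same conditioning on the events $\hat{b} \ge 9/10$ and $\hat{b} \le 11/10$, and the same Hoeffding-plus-union-bound finish. The only (cosmetic) difference is in deriving the per-sample bound: the paper shows that $(h'(\tilde{\bz}_i)-\tilde{y}_i)^2 \ge (\hat{b}/2)^2 > 1/5$ with probability at least $\frac{1}{2}\Pr[\tilde{\bz}_i\in\tilde{\cz}]$ and then multiplies, whereas you directly bound the conditional expectation by $(\hat{b}/2)^2$ via Bayes-optimality of the midpoint $\hat{b}/2$ for a clipped predictor against a uniform $\{0,\hat{b}\}$ label — a slightly cleaner phrasing of the same fact.
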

\begin{proof}
        Let $\tilde{\cz} \subseteq \reals^{n^2}$ be such that $\tilde{\bz} \in \tilde{\cz}$ iff $\tilde{\bz}_{[kn]}$ does not have components in the interval $(c-\frac{1}{n^2},c+\frac{2}{n^2})$, and $\Psi(\tilde{\bz}_{[kn]})=\bz^{S}$ for a hyperedge $S$.
        If $\cs$ is random, then by the definition of our examples oracle, for every $i \in [m(n)+n^3]$ such that $\tilde{\bz}_i \in \tilde{\cz}$, we have $\tilde{y}_i=\hat{b}$ with probability $\frac{1}{2}$ and $\tilde{y}_i=0$ otherwise. Also, by the definition of the oracle, $\tilde{y}_i$ is independent of $S_i$ and independent of the choice of the vector $\tilde{\bz}_i$ that corresponds to $\bz^{S_i}$.
        If $\hat{b} \geq \frac{9}{10}$ then for a sufficiently large $n$ the hypothesis $h'$ satisfies for each random example $(\tilde{\bz}_i,\tilde{y}_i) \in \tilde{\cs}_I$ the following
        \begin{align*} \label{eq:nn-large error}
        	\Pr_{(\tilde{\bz}_i,\tilde{y}_i)}&\left[(h'(\tilde{\bz}_i)-\tilde{y}_i)^2 \geq \frac{1}{5}\right]
        	\\
        	&\geq \Pr_{(\tilde{\bz}_i,\tilde{y}_i)} \left[\left.(h'(\tilde{\bz}_i)-\tilde{y}_i)^2 \geq \frac{1}{5} \; \right| \; \tilde{\bz}_i \in \tilde{\cz} \right] \cdot \Pr_{\tilde{\bz}_i} \left[\tilde{\bz}_i \in \tilde{\cz} \right]
        	\\
        	&\geq  \Pr_{(\tilde{\bz}_i,\tilde{y}_i)} \left[\left.(h'(\tilde{\bz}_i)-\tilde{y}_i)^2 \geq \left(\frac{\hat{b}}{2}\right)^2 \; \right| \; \tilde{\bz}_i \in \tilde{\cz} \right] \cdot \Pr_{\tilde{\bz}_i} \left[\tilde{\bz}_i \in \tilde{\cz}\right] 
        	\\
        	&\geq \frac{1}{2}  \cdot \Pr_{\tilde{\bz}_i} \left[\tilde{\bz}_i \in \tilde{\cz}\right]~.
        \end{align*}
        In \lemref{lem:prob z good}, we show that $\Pr_{\tilde{\bz}_i} \left[\tilde{\bz}_i \in \tilde{\cz}\right] \geq \frac{1}{2\log(n)}$.
        Hence,
        \begin{align*}
        	\Pr_{(\tilde{\bz}_i,\tilde{y}_i)}  \left[(h'(\tilde{\bz}_i)-\tilde{y}_i)^2 \geq \frac{1}{5}\right]
        	\geq  \frac{1}{2}  \cdot \frac{1}{2\log(n)}
        	\geq \frac{1}{4\log(n)}~.
        \end{align*}
        Thus, if $\hat{b} \geq \frac{9}{10}$ then we have
        \[
        	\E_{\tilde{\cs}_I}\left[ \ell_I(h') \right] \geq \frac{1}{5} \cdot \frac{1}{4\log(n)} = \frac{1}{20\log(n)}~.
        \]
        Therefore, for large $n$ we have
        \[
        	\Pr\left[ \E_{\tilde{\cs}_I}\left[ \ell_I(h') \right] \geq \frac{1}{20\log(n)} \right] \geq 1-\frac{1}{n} \geq \frac{7}{8}~.
        \]
        
        Since, $(h'(\tilde{\bz})-\tilde{y})^2 \in [0,\hat{b}^2]$ for all $\tilde{\bz},\tilde{y}$ returned by the examples oracle, and the examples $\tilde{\bz}_i$ for $i \in I$ are i.i.d., then by Hoeffding's inequality, we have for a sufficiently large $n$ that
        \begin{align*}
        	\Pr\left[\left|\ell_{I}(h') -  \E_{\tilde{\cs}_I}\ell_{I}(h')\right| \geq \frac{1}{n}\right]
        	&= \Pr\left[\left|\ell_{I}(h') -  \E_{\tilde{\cs}_I}\ell_{I}(h')\middle| \geq \frac{1}{n} \right| \hat{b} \leq \frac{11}{10}\right] \cdot \Pr\left[ \hat{b} \leq \frac{11}{10} \right]
        	\\
        	&\;\;\;\; + \Pr\left[\left|\ell_{I}(h') -  \E_{\tilde{\cs}_I}\ell_{I}(h')\middle| \geq \frac{1}{n} \right| \hat{b} > \frac{11}{10}\right] \cdot \Pr\left[ \hat{b} > \frac{11}{10} \right]
        	\\
        	&\leq 2 \exp\left( -\frac{2n^3}{n^2 (11/10)^4} \right) \cdot 1 + 1 \cdot \frac{1}{n}
        	\\
        	&\leq \frac{1}{8}~.
        \end{align*}
        
        Hence, for large enough $n$, with probability at least $1 - \frac{1}{8} - \frac{1}{8} = \frac{3}{4} > \frac{2}{3}$ we have both $ \E_{\tilde{\cs}_I}\left[ \ell_I(h') \right] \geq \frac{1}{20\log(n)}$ and $\left|\ell_{I}(h') -  \E_{\tilde{\cs}_I}\ell_{I}(h')\right| \leq \frac{1}{n}$, and thus
        \[
        	\ell_I(h') \geq \frac{1}{20\log(n)} - \frac{1}{n} > \frac{2}{n}~.
        \]
\end{proof}

\ignore{
\section{Proof of \thmref{thm:hard smoothed}} \label{app:proof of hard smoothed}

For a sufficiently large $n$, let $\cd$ be the standard Gaussian distribution on $\reals^{n^2}$. Assume that there is a $\poly(n)$-time algorithm $\cl$ that learns depth-$3$ neural networks 
with at most $n^2$ hidden neurons
and parameter magnitudes
bounded by $n^3$, 
with smoothed parameters,
under the distribution $\cd$, 
with $\epsilon=\frac{1}{n}$ and $\tau=1/\poly(n)$ that we will specify later. Let $m(n) \leq \poly(n)$ be the sample complexity of $\cl$, namely, $\cl$ uses a sample of size at most $m(n)$ and returns with probability at least $\frac34$ a hypothesis $h$ with $\E_{\bz \sim \cd} \left[ \left(h(\bz) - N_{\hat{\btheta}}(\bz) \right)^2 \right] \leq \epsilon = \frac{1}{n}$. Let $s>1$ be a constant such that $n^s \geq m(n) + n^3$ for every sufficiently large $n$. By \assref{ass:localPRG}, there exists a constant $k$ and a predicate $P:\{0,1\}^k \to \{0,1\}$, such that $\cf_{P,n,n^s}$ is $\frac{1}{3}$-PRG. We will show an efficient algorithm $\ca$ with distinguishing advantage greater than $\frac13$ and thus reach a contradiction.

Throughout this proof, we will use the following notations.
For a hyperedge $S = (i_1,\ldots,i_k)$ we denote by $\bz^S \in \{0,1\}^{kn}$ the following encoding of $S$: the vector $\bz^S$ is a concatenation of $k$ vectors in $\{0,1\}^n$, such that the $j$-th vector has $0$ in the $i_j$-th coordinate and $1$ elsewhere. Thus, $\bz^S$ consists of $k$ size-$n$ slices, each encoding a member of $S$. For $\bz \in \{0,1\}^{kn}$, $i \in [k]$ and $j \in [n]$, we denote $z_{i,j} = z_{(i-1)n + j}$. That is, $z_{i,j}$ is the $j$-th component in the $i$-th slice in $\bz$. For $\bx \in \{0,1\}^n$, let $P_\bx: \{0,1\}^{kn} \to \{0,1\}$ be such that for every hyperedge $S$ we have $P_\bx(\bz^S) = P(\bx_S)$. Let $c$ be such that $\Pr_{t \sim \cn(0,1)}[t \leq c] = \frac{1}{n}$. Let $\mu$ be the density of $\cn(0,1)$, let $\mu_-(t) = n \cdot \onefunc[t \leq c] \cdot \mu(t)$, and let $\mu_+ = \frac{n}{n-1} \cdot \onefunc[t \geq c] \cdot \mu(t)$. Note that $\mu_-,\mu_+$ are the densities of the restriction of $\mu$ to the intervals $t \leq c$ and $t \geq c$ respectively. 
Let $\Psi: \reals^{kn} \to \{0,1\}^{kn}$ be a mapping such that for every $\bz' \in \reals^{kn}$ and $i \in [kn]$ we have $\Psi(\bz')_i = \onefunc[z'_i \geq c]$. For $\tilde{\bz} \in \reals^{n^2}$ we denote $\tilde{\bz}_{[kn]} = (\tilde{z}_1,\ldots,\tilde{z}_{kn})$, namely, the first $kn$ components of $\tilde{\bz}$ (assuming $n^2 \geq kn$).

\subsection{Defining the target network for $\cl$}

Since our goal is to use the algorithm $\cl$ for breaking PRGs, in this subsection we define a neural network $\tilde{N}:\reals^{n^2} \to \reals$ that we will later use as a target network for $\cl$. The network $\tilde{N}$ contains the subnetworks $N_1,N_2,N_3$ which we define below.

Let $N_1$ be a depth-$2$ neural network with input dimension $kn$, at most $n \log(n)$ hidden neurons, at most $\log(n)$ output neurons (with activations in the output neurons), and parameter magnitudes bounded by $n^3$ (all bounds are for a sufficiently large $n$), which satisfies the following. We denote the set of output neurons of $N_1$ by $\ce_1$.
Let $\bz' \in \reals^{kn}$ be an input to $N_1$ such that $\Psi(\bz') = \bz^S$ for some hyperedge $S$, and assume that for every $i \in [kn]$ we have $z'_i \not \in \left( c, c + \frac{1}{n^2} \right)$. Fix some $\bx \in \{0,1\}^n$. Then, for $S$ with $P_\bx(\bz^S) = 0$ the inputs to all output neurons $\ce_1$ are at most $-1$, and for $S$ with $P_\bx(\bz^S) = 1$ there exists a neuron in $\ce_1$ with input at least $2$. 
Recall that our definition of a neuron's input includes the addition of the bias term.
The construction of the network $N_1$ is given in \lemref{lem:network N1}. 
Note that the network $N_1$ depends on $\bx$. However, as we show in \lemref{lem:network N1}, only the second layer depnds on $\bx$, and thus given an input we may compute the first layer even without knowing $\bx$.
Let $N'_1: \reals^{kn} \to \reals$ be a depth-$3$ neural network with no activation function in the output neuron, obtained from $N_1$ by summing the outputs from all neurons $\ce_1$.

Let $N_2$ be a depth-$2$ neural network with input dimension $kn$, at most $n \log(n)$ hidden neurons, at most $2n$ output neurons, and parameter magnitudes bounded by $n^3$ (for a sufficiently large $n$), which satisfies the following. We denote the set of output neurons of $N_2$ by $\ce_2$.
Let $\bz' \in \reals^{kn}$ be an input to $N_2$ such that for every $i \in [kn]$ we have $z'_i \not \in \left(c, c + \frac{1}{n^2} \right)$. If $\Psi(\bz')$ is an encoding of a hyperedge then the inputs to all output neurons $\ce_2$ are at most $-1$, and otherwise there exists a neuron in $\ce_2$ with input at least $2$.
The construction of the network $N_2$ is given in \lemref{lem:network N2}. 
Let $N'_2: \reals^{kn} \to \reals$ be a depth-$3$ neural network with no activation function in the output neuron, obtained from $N_2$ by summing the outputs from all neurons $\ce_2$.

Let $N_3$ be a depth-$2$ neural network with input dimension $kn$, at most $n \log(n)$ hidden neurons, $kn \leq n \log(n)$ output neurons, and parameter magnitudes bounded by $n^3$ (for a sufficiently large $n$), which satisfies the following. We denote the set of output neurons of $N_3$ by $\ce_3$. 
Let $\bz' \in \reals^{kn}$ be an input to $N_3$. If there exists $i \in [kn]$ such that $z'_i \in \left(c, c + \frac{1}{n^2} \right)$ then there exists a neuron in $\ce_3$ with input at least $2$. Moreover, if for all $i \in [kn]$ we have $z'_i \not \in \left(c - \frac{1}{n^2}, c + \frac{2}{n^2} \right)$ then the inputs to all neurons in $\ce_3$ are at most $-1$.
The construction of the network $N_3$ is given in \lemref{lem:network N3}.  
Let $N'_3: \reals^{kn} \to \reals$ be a depth-$3$ neural network with no activation function in the output neuron, obtained from $N_3$ by summing the outputs from all neurons $\ce_3$.

Let $N': \reals^{kn} \to \reals$ be a depth-$3$ network obtained from $N'_1,N'_2,N'_3$ as follows. For $\bz' \in \reals^{kn}$ we have $N'(\bz') = \left[ 1 - N'_1(\bz') - N'_2(\bz') - N'_3(\bz')  \right]_+$. The network $N'$ has at most $n^2$ neurons, and parameter magnitudes bounded by $n^3$ (all bounds are for a sufficiently large $n$).
Finally, let $\tilde{N}:\reals^{n^2} \rightarrow \reals$ be a depth-$3$ neural network such that $\tilde{N}(\tilde{\bz}) = N'(\tilde{\bz}_{[kn]})$.

\subsection{Defining the noise magnitude $\tau$ and analyzing the perturbed network}

In order to use the algorithm $\cl$ w.r.t. some neural network with parameters $\btheta$, we need to implement an examples oracle, such that the examples are labeled according to a neural network with parameters $\btheta+\bxi$, where $\bxi$ is a random perturbation.
Specifically, we use $\cl$ with an examples oracle where the labels correspond to a network $\hat{N}:\reals^{n^2} \to \reals$, obtained from $\tilde{N}$ (w.r.t. an appropriate $\bx \in \{0,1\}^n$ in the construction of $N_1$) by adding a small perturbation to the parameters. The perturbation is such that we add i.i.d. noise to each parameter in $\tilde{N}$, where the noise is distributed according to $\cn(0,\tau^2)$, and $\tau=1/\poly(n)$ is small enough such that the following holds. 
Let $f_\btheta:\reals^{n^2} \to \reals$ be any depth-$3$ neural network parameterized by $\btheta \in \reals^r$ for some $r>0$ with at most $n^2$ neurons, and parameter magnitudes bounded by $n^3$ (note that $r$ is polynomial in $n$). 
Then with probability at least $1-\frac{1}{n}$ over $\bxi \sim \cn(\zero, \tau^2 I_r)$, we have $| \xi_i | \leq \frac{1}{10}$ for all $i \in [r]$, and the network $f_{\btheta + \bxi}$ is such that for every input $\tilde{\bz}  \in \reals^{n^2}$ with $\norm{\tilde{\bz}} \leq 2n$ and every neuron we have: Let $a,b$ be the inputs to the neuron in the computations $f_\btheta(\tilde{\bz})$ and $f_{\btheta + \bxi}(\tilde{\bz})$ (respectively), then $|a-b| \leq \frac12$.  Thus, $\tau$ is sufficiently small, such that w.h.p. adding i.i.d. noise $\cn(0,\tau^2)$ to each parameter does not change the inputs to the neurons by more than $\frac12$. Note that such an inverse-polynomial $\tau$ exists, since when the network size, parameter magnitudes, and input size are bounded by some $\poly(n)$, then the input to each neuron in $f_\btheta(\tilde{\bz})$ is $\poly(n)$-Lipschitz as a function of $\btheta$, and thus it suffices to choose $\tau$ that implies with probability at least $1 - \frac{1}{n}$ that $\norm{\bxi} \leq \frac{1}{q(n)}$ for a sufficiently large polynomial $q(n)$ (see \lemref{lem:tau exists} for details). 

Let $\tilde{\btheta} \in \reals^p$ be the parameters of the network $\tilde{N}$. Recall that the parameters vector $\tilde{\btheta}$ is the concatenation of all weight matrices and bias terms. Let $\hat{\btheta} \in \reals^p$ be the parameters of $\hat{N}$, namely, $\hat{\btheta} = \tilde{\btheta} + \bxi$ where $\bxi \sim \cn(\zero, \tau^2 I_p)$. 
By our choice of $\tau$ and the construction of the networks $N_1,N_2,N_3$, with probability at least $1-\frac{1}{n}$ over $\bxi$, for every $\tilde{\bz}$ with $\norm{\tilde{\bz}} \leq 2n$, the inputs to the neurons $\ce_1,\ce_2,\ce_3$ in the computation $\hat{N}(\tilde{\bz})$ satisfy the following properties, where we denote $\bz' = \tilde{\bz}_{[kn]}$: 
\begin{enumerate}[label=(P\arabic*)]
	\item If $\Psi(\bz') = \bz^S$ for some hyperedge $S$, and for every $i \in [kn]$ we have $z'_i \not \in \left( c, c + \frac{1}{n^2} \right)$, then the inputs to $\ce_1$ satisfy: \label{prop:first}
	\begin{itemize}
		\item If $P_\bx(\bz^S) = 0$ the inputs to all neurons in $\ce_1$ are at most $-\frac12$. 
		\item If $P_\bx(\bz^S) = 1$ there exists a neuron in $\ce_1$ with input at least $\frac32$.
	\end{itemize}
	\item  If for every $i \in [kn]$ we have $z'_i \not \in \left(c, c + \frac{1}{n^2} \right)$, then the inputs to $\ce_2$ satisfy:
	\begin{itemize}
		\item If $\Psi(\bz')$ is an encoding of a hyperedge then the inputs to all neurons $\ce_2$ are at most $-\frac12$.
		\item Otherwise, there exists a neuron in $\ce_2$ with input at least $\frac32$.
	\end{itemize}
	\item The inputs to $\ce_3$ satisfy: \label{prop:last}
	\begin{itemize}
		\item  If there exists $i \in [kn]$ such that $z'_i \in \left(c, c + \frac{1}{n^2} \right)$ then there exists a neuron in $\ce_3$ with input at least $\frac32$. 
		\item If for all $i \in [kn]$ we have $z'_i \not \in \left(c - \frac{1}{n^2}, c + \frac{2}{n^2} \right)$ then the inputs to all neurons in $\ce_3$ are at most $-\frac12$.
	\end{itemize}
\end{enumerate}

\subsection{Stating the algorithm $\ca$}

Given a sequence $(S_1,y_1),\ldots,(S_{n^s},y_{n^s})$, where $S_1,\ldots,S_{n^s}$ are i.i.d. random hyperedges, the algorithm $\ca$ needs to distinguish whether $\by = (y_1,\ldots,y_{n^s})$ is random or that $\by = (P(\bx_{S_1}),\ldots,P(\bx_{S_{n^s}})) = (P_\bx(\bz^{S_1}),\ldots,P_\bx(\bz^{S_{n^s}}))$ for a random $\bx \in \{0,1\}^n$.
Let $\cs = ((\bz^{S_1},y_1),\ldots,(\bz^{S_{n^s}},y_{n^s}))$.

We use the efficient algorithm $\cl$ in order to obtain distinguishing advantage greater than $\frac{1}{3}$ as follows.
Let $\bxi$ be a random perturbation, and let $\hat{N}$ be the perturbed network as defined above, w.r.t. the unknown $\bx \in \{0,1\}^n$. Note that given a perturbation $\bxi$, only the weights in the second layer of the subnetwork $N_1$ in $\hat{N}$ are unknown, since all other parameters do not depend on $\bx$.
The algorithm $\ca$ runs $\cl$ with the following examples oracle.
In the $i$-th call, the oracle first draws $\bz \in \{0,1\}^{kn}$ such that each component is drawn i.i.d. from a Bernoulli distribution which takes the value $0$ with probability $\frac{1}{n}$. If $\bz$ is an encoding of a hyperedge then the oracle replaces $\bz$ with $\bz^{S_i}$. Then, the oracle chooses $\bz' \in \reals^{kn}$ such that for each component $j$, if $z_j = 1$ then $z'_j$ is drawn from $\mu_+$, and otherwise $z'_j$ is drawn from $\mu_-$.
Let $\tilde{\bz} \in \reals^{n^2}$ be such that $\tilde{\bz}_{[kn]}=\bz'$, and the other $n^2-kn$ components of $\tilde{\bz}$ are drawn i.i.d. from $\cn(0,1)$.
Note that the vector $\tilde{\bz}$ has the distribution $\cd$, due to the definitions of the densities $\mu_+$ and $\mu_-$, and since replacing an encoding of a random hyperedge by an encoding of another random hyperedge does not change the distribution of $\bz$.
Let $\hat{b} \in \reals$ be the bias term of the output neuron of $\hat{N}$.
The oracle returns $(\tilde{\bz},\tilde{y})$, where the labels $\tilde{y}$ are chosen as follows:
\begin{itemize}
	\item If $\Psi(\bz')$ is not an encoding of a hyperedge, then $\tilde{y}=0$.
	\item If $\Psi(\bz')$ is an encoding of a hyperedge:
    	\begin{itemize}
    		\item If $\bz'$ does not have components in the interval $(c-\frac{1}{n^2},c+\frac{2}{n^2})$, then if $y_i=0$ we set $\tilde{y} = \hat{b}$, and if $y_i=1$ we set $\tilde{y} = 0$.
    		\item If $\bz'$ has a component in the interval $(c,c+\frac{1}{n^2})$, then $\tilde{y} = 0$.
    		\item If $\bz'$ does not have components in the interval $(c,c+\frac{1}{n^2})$, but has a component in the interval $(c-\frac{1}{n^2},c+\frac{2}{n^2})$, 
		then the label $\tilde{y}$ is determined as follows: 
		\begin{itemize}
			\item If $y_i=1$ then $\tilde{y} = 0$.
			\item If $y_i=0$: Let $\hat{N}_3$ be the network $\hat{N}$ after omitting the neurons $\ce_1,\ce_2$ and their incoming and outgoing weights. Then, we set $\tilde{y} = [\hat{b} - \hat{N}_3(\tilde{\bz})]_+$. Note that since only the second layer of $N_1$ depends on $\bx$, then we can compute $\hat{N}_3(\tilde{\bz})$ without knowing $\bx$.
		\end{itemize}
    	\end{itemize}
\end{itemize}

Let $h$ be the hypothesis returned by $\cl$.
Recall that $\cl$ uses at most $m(n)$ examples, and hence $\cs$ contains at least $n^3$ examples that $\cl$ cannot view. We denote the indices of these examples by $I = \{m(n)+1,\ldots,m(n)+n^3\}$, and the examples by $\cs_I = \{(\bz^{S_i},y_i)\}_{i \in I}$. By $n^3$ additional calls to the oracle, the algorithm $\ca$ obtains the examples $\tilde{\cs}_I = \{(\tilde{\bz}_i,\tilde{y}_i)\}_{i \in I}$ that correspond to $\cs_I$.
Let $h'$ be a hypothesis such that for all $\tilde{\bz} \in \reals^{n^2}$ we have $h'(\tilde{\bz}) = \max\{0,\min\{\hat{b},h(\tilde{\bz})\}\}$, thus, for $\hat{b} \geq 0$ the hypothesis $h'$ is obtained from $h$ by clipping the output to the interval $[0,\hat{b}]$.
Let $\ell_{I}(h')=\frac{1}{|I|}\sum_{i \in I}(h'(\tilde{\bz}_i)-\tilde{y}_i)^2$. 
Now, if $\ell_I(h') \leq \frac{2}{n}$, then $\ca$ returns $1$, and otherwise it returns $0$.
\ignore{
Clearly, the algorithm $\ca$ runs in polynomial time.
In the next subsection we 
show that if $\cs$ is pseudorandom then $\ca$ returns $1$ with probability greater than $\frac{2}{3}$, and if $\cs$ is random then $\ca$ returns $1$ with probability less than $\frac{1}{3}$.
}

\subsection{Analyzing the algorithm $\ca$}

Note that the algorithm $\ca$ runs in $\poly(n)$ time.
We now show that if $\cs$ is pseudorandom then $\ca$ returns $1$ with probability greater than $\frac{2}{3}$, and if $\cs$ is random then $\ca$ returns $1$ with probability less than $\frac{1}{3}$.

By \lemref{lem:realizable}, 
if $\cs$ is pseudorandom then with probability at least $\frac{39}{40}$ (over $\bxi \sim \cn(\zero, \tau^2 I_p)$ and the i.i.d. inputs $\tilde{\bz}_i \sim \cd$) the examples $(\tilde{\bz}_1, \tilde{y}_1),\ldots,(\tilde{\bz}_{m(n)},\tilde{y}_{m(n)})$ returned by the oracle are realized by $\hat{N}$. 
Recall that the algorithm $\cl$ is such that with probability at least $\frac{3}{4}$ (over $\bxi \sim \cn(\zero, \tau^2 I_p)$, the i.i.d. inputs $\tilde{\bz}_i \sim \cd$, and possibly its internal randomness), given a size-$m(n)$ dataset labeled by $\hat{N}$, it returns a hypothesis $h$ such that $\E_{\tilde{\bz} \sim \cd} \left[(h(\tilde{\bz})-\hat{N}(\tilde{\bz}))^2 \right] \leq \frac{1}{n}$.
Hence, with probability at least $\frac{3}{4} - \frac{1}{40}$ the algorithm $\cl$ returns such a good hypothesis $h$, given $m(n)$ examples labeled by our examples oracle. 
Indeed, note that $\cl$ can return a bad hypothesis only if the random choices are either bad for $\cl$ (when used with realizable examples) or bad for the realizability of the examples returned by our oracle.
By the definition of $h'$ and the construction of $\hat{N}$, if $h$ has small error then $h'$ also has small error, namely, 
\[
	 \E_{\tilde{\bz} \sim \cd} \left[(h'(\tilde{\bz})-\hat{N}(\tilde{\bz}))^2 \right] 
	 \leq \E_{\tilde{\bz} \sim \cd} \left[(h(\tilde{\bz})-\hat{N}(\tilde{\bz}))^2 \right] 
	 \leq \frac{1}{n}~.
\]


Let $\hat{\ell}_{I}(h')=\frac{1}{|I|}\sum_{i \in I}(h'(\tilde{\bz}_i)-\hat{N}(\tilde{\bz}_i))^2$.
Recall that by our choice of $\tau$ we have $\Pr[\hat{b} > \frac{11}{10}] \leq \frac{1}{n}$.
Since, $(h'(\tilde{\bz})-\hat{N}(\tilde{\bz}))^2 \in [0,\hat{b}^2]$ for all $\tilde{\bz} \in \reals^{n^2}$,
by Hoeffding's inequality, we have for a sufficiently large $n$ that
\begin{align*}
	\Pr\left[\left|\hat{\ell}_{I}(h') -  \E_{\tilde{\cs}_I}\hat{\ell}_{I}(h')\right| \geq \frac{1}{n}\right]
	&= \Pr\left[\left|\hat{\ell}_{I}(h') -  \E_{\tilde{\cs}_I}\hat{\ell}_{I}(h')\middle| \geq \frac{1}{n} \right| \hat{b} \leq \frac{11}{10}\right] \cdot \Pr\left[ \hat{b} \leq \frac{11}{10} \right]
	\\
	&\;\;\;\; + \Pr\left[\left|\hat{\ell}_{I}(h') -  \E_{\tilde{\cs}_I}\hat{\ell}_{I}(h')\middle| \geq \frac{1}{n} \right| \hat{b} > \frac{11}{10}\right] \cdot \Pr\left[ \hat{b} > \frac{11}{10} \right]
	\\
	&\leq 2 \exp\left( -\frac{2n^3}{n^2 (11/10)^4} \right) \cdot 1 + 1 \cdot \frac{1}{n}
	\\
	&\leq \frac{1}{40}~.
\end{align*}
Moreover, by \lemref{lem:realizable},
\[
	\Pr \left[  \ell_I(h') \neq \hat{\ell}_I(h') \right]
	\leq \Pr \left[ \exists i \in I \text{ s.t. } \tilde{y}_i \neq \hat{N}(\tilde{\bz}_i) \right]
	\leq \frac{1}{40}~. 
\]

Overall, by the union bound we have with probability at least $1-\left( \frac{1}{4} + \frac{1}{40} + \frac{1}{40} + \frac{1}{40}\right)  > \frac{2}{3}$ for sufficiently large $n$ that:
\begin{itemize}
	\item $\E_{\tilde{\cs}_I}\hat{\ell}_{I}(h') = \E_{\tilde{\bz} \sim \cd} \left[(h'(\tilde{\bz})-\hat{N}(\tilde{\bz}))^2 \right] \leq \frac{1}{n}$.
	\item $\left|\hat{\ell}_{I}(h') -  \E_{\tilde{\cs}_I}\hat{\ell}_{I}(h')\right| \leq \frac{1}{n}$.
	\item $\ell_I(h') - \hat{\ell}_I(h') = 0$.
\end{itemize}
Combining the above, we get that if $\cs$ is pseudorandom, then with probability greater than $\frac{2}{3}$ we have
\[
	\ell_I(h')
	= \left( \ell_I(h') - \hat{\ell}_I(h') \right) + \left( \hat{\ell}_I(h') - \E_{\tilde{\cs}_I}\hat{\ell}_{I}(h') \right) +\E_{\tilde{\cs}_I}\hat{\ell}_{I}(h')  
	\leq 0 + \frac{1}{n} + \frac{1}{n} 
	= \frac{2}{n}~.
\]

We now consider the case where $\cs$ is random.
Let $\tilde{\cz} \subseteq \reals^{n^2}$ be such that $\tilde{\bz} \in \tilde{\cz}$ iff $\tilde{\bz}_{[kn]}$ does not have components in the interval $(c-\frac{1}{n^2},c+\frac{2}{n^2})$, and $\Psi(\tilde{\bz}_{[kn]})=\bz^{S}$ for a hyperedge $S$.
If $\cs$ is random, then by the definition of our examples oracle, for every $i \in [m(n)+n^3]$ such that $\tilde{\bz}_i \in \tilde{\cz}$, we have $\tilde{y}_i=\hat{b}$ with probability $\frac{1}{2}$ and $\tilde{y}_i=0$ otherwise. Also, by the definition of the oracle, $\tilde{y}_i$ is independent of $S_i$ and independent of the choice of the vector $\tilde{\bz}_i$ that corresponds to $\bz^{S_i}$.
\ignore{
Hence, for the hypothesis $h'$, $i \in I$ and a sufficiently large $n$ we have
\begin{align*} \label{eq:nn-large error}
	\Pr&\left[(h'(\tilde{\bz}_i)-\tilde{y}_i)^2 \geq \frac{1}{5}\right]
	\\
	&\geq \Pr\left[\left.(h'(\tilde{\bz}_i)-\tilde{y}_i)^2 \geq \frac{1}{5} \; \right| \; \tilde{\bz}_i \in \tilde{\cz},\,\hat{b}\geq \frac{9}{10} \right] \cdot \Pr\left[\tilde{\bz}_i \in \tilde{\cz},\,\hat{b}\geq \frac{9}{10}\right]
	\\
	&\geq  \Pr\left[\left.(h'(\tilde{\bz}_i)-\tilde{y}_i)^2 \geq \left(\frac{\hat{b}}{2}\right)^2 \; \right| \; \tilde{\bz}_i \in \tilde{\cz},\,\hat{b}\geq \frac{9}{10} \right] \cdot \left(\Pr\left[\tilde{\bz}_i \in \tilde{\cz}\right] - \Pr\left[ \hat{b} < \frac{9}{10}\right] \right) 
	\\
	&\geq \frac{1}{2}  \cdot \left( \Pr\left[\tilde{\bz}_i \in \tilde{\cz}\right] - \frac{1}{n} \right)~.
\end{align*}
In \lemref{lem:prob z good}, we show that $\Pr\left[\tilde{\bz}_i \in \tilde{\cz}\right] \geq \frac{1}{2\log(n)}$.
Hence,
\begin{align*}
	\Pr\left[(h'(\tilde{\bz}_i)-\tilde{y}_i)^2 \geq \frac{1}{5}\right]
	\geq  \frac{1}{2}  \cdot \left(\frac{1}{2\log(n)} - \frac{1}{n} \right)
	\geq \frac{1}{8\log(n)}~.
\end{align*}
Thus,
\[
	\E_{\tilde{\cs}_I}\left[ \ell_I(h') \right] \geq \frac{1}{5} \cdot \frac{1}{8\log(n)} = \frac{1}{40\log(n)}~.
\]
}
If $\hat{b} \geq \frac{9}{10}$ then for a sufficiently large $n$ the hypothesis $h'$ satisfies for each random example $(\tilde{\bz}_i,\tilde{y}_i) \in \tilde{\cs}_I$ the following
\begin{align*} \label{eq:nn-large error}
	\Pr_{(\tilde{\bz}_i,\tilde{y}_i)}&\left[(h'(\tilde{\bz}_i)-\tilde{y}_i)^2 \geq \frac{1}{5}\right]
	\\
	&\geq \Pr_{(\tilde{\bz}_i,\tilde{y}_i)} \left[\left.(h'(\tilde{\bz}_i)-\tilde{y}_i)^2 \geq \frac{1}{5} \; \right| \; \tilde{\bz}_i \in \tilde{\cz} \right] \cdot \Pr_{\tilde{\bz}_i} \left[\tilde{\bz}_i \in \tilde{\cz} \right]
	\\
	&\geq  \Pr_{(\tilde{\bz}_i,\tilde{y}_i)} \left[\left.(h'(\tilde{\bz}_i)-\tilde{y}_i)^2 \geq \left(\frac{\hat{b}}{2}\right)^2 \; \right| \; \tilde{\bz}_i \in \tilde{\cz} \right] \cdot \Pr_{\tilde{\bz}_i} \left[\tilde{\bz}_i \in \tilde{\cz}\right] 
	\\
	&\geq \frac{1}{2}  \cdot \Pr_{\tilde{\bz}_i} \left[\tilde{\bz}_i \in \tilde{\cz}\right]~.
\end{align*}
In \lemref{lem:prob z good}, we show that $\Pr_{\tilde{\bz}_i} \left[\tilde{\bz}_i \in \tilde{\cz}\right] \geq \frac{1}{2\log(n)}$.
Hence,
\begin{align*}
	\Pr_{(\tilde{\bz}_i,\tilde{y}_i)}  \left[(h'(\tilde{\bz}_i)-\tilde{y}_i)^2 \geq \frac{1}{5}\right]
	\geq  \frac{1}{2}  \cdot \frac{1}{2\log(n)}
	\geq \frac{1}{4\log(n)}~.
\end{align*}
Thus, if $\hat{b} \geq \frac{9}{10}$ then we have
\[
	\E_{\tilde{\cs}_I}\left[ \ell_I(h') \right] \geq \frac{1}{5} \cdot \frac{1}{4\log(n)} = \frac{1}{20\log(n)}~.
\]
Therefore, for large $n$ we have
\[
	\Pr\left[ \E_{\tilde{\cs}_I}\left[ \ell_I(h') \right] \geq \frac{1}{20\log(n)} \right] \geq 1-\frac{1}{n} \geq \frac{7}{8}~.
\]

Since, $(h'(\tilde{\bz})-\tilde{y})^2 \in [0,\hat{b}^2]$ for all $\tilde{\bz},\tilde{y}$ returned by the examples oracle, and the examples $\tilde{\bz}_i$ for $i \in I$ are i.i.d., then by Hoeffding's inequality, we have for a sufficiently large $n$ that
\begin{align*}
	\Pr\left[\left|\ell_{I}(h') -  \E_{\tilde{\cs}_I}\ell_{I}(h')\right| \geq \frac{1}{n}\right]
	&= \Pr\left[\left|\ell_{I}(h') -  \E_{\tilde{\cs}_I}\ell_{I}(h')\middle| \geq \frac{1}{n} \right| \hat{b} \leq \frac{11}{10}\right] \cdot \Pr\left[ \hat{b} \leq \frac{11}{10} \right]
	\\
	&\;\;\;\; + \Pr\left[\left|\ell_{I}(h') -  \E_{\tilde{\cs}_I}\ell_{I}(h')\middle| \geq \frac{1}{n} \right| \hat{b} > \frac{11}{10}\right] \cdot \Pr\left[ \hat{b} > \frac{11}{10} \right]
	\\
	&\leq 2 \exp\left( -\frac{2n^3}{n^2 (11/10)^4} \right) \cdot 1 + 1 \cdot \frac{1}{n}
	\\
	&\leq \frac{1}{8}~.
\end{align*}

\ignore{
If $\hat{b} \leq \frac{11}{10}$, then $(h'(\tilde{\bz_i}) - \tilde{y}_i)^2 \in \left[0, \left(\frac{11}{10}\right)^2 \right]$.
Hence, by Hoeffding's inequality we have for a sufficiently large $n$ that
\[
	\Pr_{\tilde{\cs}_I}\left[\left|\ell_{I}(h') -  \E_{\tilde{\cs}_I}\ell_{I}(h')\right| \geq \frac{1}{n}\right]
	\leq 2 \exp\left( -\frac{2n^3}{n^2(11/10)^4} \right)
	\leq \frac{1}{40}~.
\]
}

\ignore{
By Hoeffding's inequality we have for a sufficiently large $n$ that
\[
	 \Pr\left[\left|\ell_{I}(h') -  \E_{\tilde{\cs}_I}\ell_{I}(h')\middle| \geq \frac{1}{n} \right| \hat{b} \leq \frac{11}{10}\right]
	 \leq 2 \exp\left( -\frac{2n^3}{n^2 (11/10)^4} \right)
	 \leq \frac{1}{40}~.
\]
}

Hence, for large enough $n$, with probability at least $1 - \frac{1}{8} - \frac{1}{8} = \frac{3}{4} > \frac{2}{3}$ we have both $ \E_{\tilde{\cs}_I}\left[ \ell_I(h') \right] \geq \frac{1}{20\log(n)}$ and $\left|\ell_{I}(h') -  \E_{\tilde{\cs}_I}\ell_{I}(h')\right| \leq \frac{1}{n}$, and thus
\[
	\ell_I(h') \geq \frac{1}{20\log(n)} - \frac{1}{n} > \frac{2}{n}~.
\]

\ignore{
Therefore, if $\cs$ is pseudorandom, then for a sufficiently large $n$, we have with probability at least $1-\left(\frac{1}{4} + \frac{1}{40} + \frac{1}{40}\right) = \frac{7}{10} > \frac{2}{3}$ that $\E_{\tilde{\cs}_I} \left[ \ell_{I}(h) \right] \leq \frac{1}{n}$ and $\left|\ell_{I}(h) -  \E_{\tilde{\cs}_I} \left[ \ell_{I}(h) \right] \right| < \frac{1}{n}$, and hence $\ell_{I}(h) \leq \frac{2}{n}$. Thus, the algorithm $\ca$ returns $1$ with probability greater than $\frac{2}{3}$.
If $\cs$ is random then $\E_{\tilde{\cs}} \left[ \ell_{I}(h) \right] \geq \frac{\hat{b}^2}{16\log(n)}$ and for a sufficiently large $n$ we have with probability at least $\frac{39}{40}$ that $\left|\ell_{I}(h) - \E_{\tilde{\cs}_I} \left[ \ell_{I}(h) \right] \right| < \frac{1}{n}$. Hence, with probability greater than $\frac{2}{3}$ we have $\ell_{I}(h) > \frac{\hat{b}^2}{16\log(n)} - \frac{1}{n} > \frac{2}{n}$ and the algorithm $\ca$ returns $0$ \note{TODO: handle the probability over $\hat{b}$}.
}

Overall, if $\cs$ is pseudorandom then with probability greater than $\frac{2}{3}$ the algorithm $\ca$ returns $1$, and if $\cs$ is random then with probability greater than $\frac{2}{3}$ the algorithm $\ca$ returns $0$. Thus, the distinguishing advantage is greater than $\frac13$.
This concludes the proof of the theorem. In the next subsection we provide the missing lemmas.



\subsection{Missing lemmas}

The following lemma is from \cite{daniely2021local}, and is required for the construction of $N_1$. For completeness, we give here both the lemma and its proof. 

\begin{lemma}[\cite{daniely2021local}] \label{lem:from P to DNF}
	For every predicate $P:\{0,1\}^k \rightarrow \{0,1\}$ and $\bx \in \{0,1\}^n$, there is a DNF formula $\psi$ over $\{0,1\}^{kn}$ with at most $2^k$ terms, such that for every hyperedge $S$ we have $P_\bx(\bz^S)=\psi(\bz^S)$. Moreover, each term in $\psi$ is a conjunction of positive literals.
\end{lemma}
\begin{proof}
	We denote by $\cb \subseteq \{0,1\}^{k}$ the set of satisfying assignments of $P$. Note that the size of $\cb$ is at most $2^k$.
	Consider the following DNF formula over $\{0,1\}^{kn}$:
	\[
		\psi(\bz)
		= \bigvee_{\bb \in \cb} \bigwedge_{j \in [k]} \bigwedge_{\{l:x_l \neq b_j \}} z_{j,l}~.
	\]
	For a hyperedge $S=(i_1,\ldots,i_k)$, we have
	\begin{align*}
		\psi(\bz^S)=1
		&\iff \exists \bb \in \cb \; \forall j \in [k] \; \forall x_l \neq b_j, \; z^S_{j,l}=1
		\\
		&\iff \exists \bb \in \cb \; \forall j \in [k] \; \forall x_l \neq b_j, \; i_j \neq l
		\\
		&\iff \exists \bb \in \cb \; \forall j \in [k], \; x_{i_j} = b_j
		\\
		&\iff \exists \bb \in \cb, \; \bx_S = \bb
		\\
		&\iff P(\bx_S)=1
		\\
		&\iff P_\bx(\bz^S)=1~.
	\end{align*}
\end{proof}

\begin{lemma} \label{lem:network N1 second layer}
	Let $\bx \in \{0,1\}^n$.
	There exists an affine layer with at most $2^k$ outputs, weights bounded by a constant and bias terms bounded by $n \log(n)$ (for a sufficiently large $n$), such that given an input $\bz^S \in \{0,1\}^{kn}$ for some hyperedge $S$, it satisfies the following: For $S$ with $P_\bx(\bz^S) = 0$ all outputs are at most $-1$, and for $S$ with $P_\bx(\bz^S) = 1$ there exists an output greater or equal to $2$.
\end{lemma}
\begin{proof}
	By \lemref{lem:from P to DNF}, there exists a DNF formula $\varphi_\bx$ over $\{0,1\}^{kn}$ with at most $2^k$ terms, such that $\varphi_\bx(\bz^S) = P_\bx(\bz^S)$. Thus, if $P_\bx(\bz^S) = 0$ then all terms in $\varphi_\bx$ are not satisfied for the input $\bz^S$, and if $P_\bx(\bz^S) = 1$ then there is at least one term in $\varphi_\bx$ which is satisfied for the input $\bz^S$. Therefore, it suffices to construct an affine layer such that for an input $\bz^S$, the $j$-th output will be at most $-1$ if the $j$-th term of $\varphi_\bx$ is not satisfied, and at least $2$ otherwise.
	Each term $C_j$ in $\varphi_\bx$ is a conjunction of positive literals. Let $I_j \subseteq [kn]$ be the indices of these literals. The $j$-th output of the affine layer will be 
	\[
		\left(\sum_{l \in I_j} 3 z^{S}_l\right) - 3 |I_j| + 2~. 
	\]
	Note that if the conjunction $C_j$ holds, then this expression is exactly $3 |I_j| -  3 |I_j| + 2 = 2$, and otherwise it is at most $3 (|I_j| - 1) - 3 |I_j| + 2 = -1$.
	Finally, note that all weights are bounded by $3$ and all bias terms are bounded by $n \log(n)$ (for large enough $n$).
\end{proof}

\begin{lemma} \label{lem:network N1}
	Let $\bx \in \{0,1\}^n$.
	There exists a depth-$2$ neural network $N_1$ with input dimension $kn$, $2kn$ hidden neurons, at most $2^k$ output neurons, and parameter magnitudes bounded by $n^3$ (for a sufficiently large $n$), which satisfies the following. We denote the set of output neurons of $N_1$ by $\ce_1$. Let $\bz' \in \reals^{kn}$ be such that $\Psi(\bz') = \bz^S$ for some hyperedge $S$, and assume that for every $i \in [kn]$ we have $z'_i \not \in \left( c, c + \frac{1}{n^2} \right)$. Then, for $S$ with $P_\bx(\bz^S) = 0$ the inputs to all neurons $\ce_1$ are at most $-1$, and for $S$ with $P_\bx(\bz^S) = 1$ there exists a neuron in $\ce_1$ with input at least $2$. Moreover, only the second layer of $N_1$ depends on $\bx$.
\end{lemma}
\begin{proof}
	First, we construct a depth-$2$ neural network $N_\Psi:\reals^{kn} \rightarrow [0,1]^{kn}$ with a single layer of non-linearity, such that for every $\bz' \in \reals^{kn}$ with $z'_i \not \in (c,c+\frac{1}{n^2})$ for every $i \in [kn]$, we have $N_\Psi(\bz') = \Psi(\bz')$. The network $N_\Psi$ has $2kn$ hidden neurons, and computes $N_\Psi(\bz') = (f(z'_1),\ldots,f(z'_{kn}))$, where $f:\reals \rightarrow [0,1]$ is such that
	\[
		f(t) = n^2 \cdot \left(\left[t - c\right]_+ - \left[t - \left(c + \frac{1}{n^2}\right)\right]_+ \right)~.
	\]
	Note that if $t \leq c$ then $f(t)=0$, if $t \geq c+\frac{1}{n^2}$ then $f(t)=1$, and if $c < t <  c+\frac{1}{n^2}$ then $f(t) \in (0,1)$.
	Also, note that all weights and bias terms can be bounded by $n^2$ (for large enough $n$). Moreover, the network $N_\Psi$ does not depend on $\bx$.

	Let $\bz' \in \reals^{kn}$ such that $\Psi(\bz') = \bz^S$ for some hyperedge $S$, and assume that for every $i \in [kn]$ we have $z'_i \not \in \left( c, c + \frac{1}{n^2} \right)$. For such $\bz'$, we have $N_\Psi(\bz') = \Psi(\bz') = \bz^S$. Hence, it suffices to show that we can construct an affine layer with at most $2^k$ outputs, weights bounded by a constant and bias terms bounded by $n^3$, such that given an input $\bz^S$ it satisfies the following: For $S$ with $P_\bx(\bz^S) = 0$ all outputs are at most $-1$, and for $S$ with $P_\bx(\bz^S) = 1$ there exists an output greater or equal to $2$. We construct such an affine layer in \lemref{lem:network N1 second layer}.
\ignore{
	By \lemref{lem:from P to DNF}, there exists a DNF formula $\varphi_\bx$ over $\{0,1\}^{kn}$ with at most $2^k$ terms, such that $\varphi_\bx(\bz^S) = P_\bx(\bz^S)$. Thus, if $P_\bx(\bz^S) = 0$ then all terms in $\varphi_\bx$ are not satisfied for the input $\bz^S$, and if $P_\bx(\bz^S) = 1$ then there is at least one term in $\varphi_\bx$ which is satisfied for the input $\bz^S$. Therefore, it suffices to construct an affine layer such that for an input $\bz^S$, the $j$-th output will be at most $-1$ if the $j$-th term of $\varphi_\bx$ is not satisfied, and at least $2$ otherwise.
	Each term $C_j$ in $\varphi_\bx$ is a conjunction of positive literals. Let $I_j \subseteq [kn]$ be the indices of these literals.
The $j$-th output of the affine layer will be 
	\[
		\left(\sum_{l \in I_j} 3 z^{S}_l\right) - 3 |I_j| + 2~. 
	\]
	Note that if the conjunction $C_j$ holds, then this expression is exactly $3 |I_j| -  3 |I_j| + 2 = 2$, and otherwise it is at most $3 (|I_j| - 1) - 3 |I_j| + 2 = -1$.
	Finally, note that all weights and bias terms in the network $N_1$ can be bounded by $n^3$ (for large enough $n$).
}
\end{proof}

\ignore{
The following lemma is from \cite{daniely2021local}, but we give it here fore completeness. 

\begin{lemma}[Rephrased from \cite{daniely2021local}] \label{lem:DNF for encoding check}
	 There exists a DNF formula $\varphi$ over $\{0,1\}^{kn}$ with $k \cdot \frac{n(n-1)}{2} + k + n \cdot \frac{k(k-1)}{2}$ terms such that $\varphi(\bz)=1$ iff $\bz$ is not an encoding of a hyperedge.
\end{lemma}
\begin{proof}
\end{proof}
}

\begin{lemma}  \label{lem:network N2 second layer}
	There exists an affine layer with $2k + n$ outputs, weights bounded by a constant and bias terms bounded by $n\log(n)$ (for a sufficiently large $n$), such that given an input $\bz \in \{0,1\}^{kn}$, if it is an encoding of a hyperedge then all outputs are at most $-1$, and otherwise there exists an output greater or equal to $2$.
\end{lemma}
\begin{proof}
	Note that $\bz \in \{0,1\}^{kn}$ is not an encoding of a hyperedge iff at least one of the following holds: 
	\begin{enumerate}
		\item At least one of the $k$ size-$n$ slices in $\bz$ contains $0$ more than once. 
		\item At least one of the $k$ size-$n$ slices in $\bz$ does not contain $0$. 
		\item There are two size-$n$ slices in $\bz$ that encode the same index. 
	\end{enumerate}
	We define the outputs of our affine layer as follows. 
	First, we have $k$ outputs that correspond to (1). In order to check whether slice $i \in [k]$ contains $0$ more than once, the output will be $3n - 4 - (\sum_{j \in [n]} 3 z_{i,j})$.
	Second, we have $k$ outputs that correspond to (2): in order to check whether slice $i \in [k]$ does not contain $0$, the output will be $(\sum_{j \in [n]} 3 z_{i,j}) - 3n + 2$. 
	Finally, we have $n$ outputs that correspond to (3): in order to check whether there are two slices that encode the same index $j \in [n]$, the output will be $3k - 4 - (\sum_{i \in [k]} 3 z_{i,j})$.
	Note that all weights are bounded by $3$ and all bias terms are bounded by $n \log(n)$ for large enought $n$.
\end{proof}

\begin{lemma}  \label{lem:network N2}
	There exists a depth-$2$ neural network $N_2$ with input dimension $kn$, at most $2kn$ hidden neurons, $2k  + n$ output neurons, and parameter magnitudes bounded by $n^3$ (for a sufficiently large $n$), which satisfies the following. We denote the set of output neurons of $N_2$ by $\ce_2$. Let $\bz' \in \reals^{kn}$ be such that for every $i \in [kn]$ we have $z'_i \not \in \left(c, c + \frac{1}{n^2} \right)$. If $\Psi(\bz')$ is an encoding of a hyperedge then the inputs to all neurons $\ce_2$ are at most $-1$, and otherwise there exists a neuron in $\ce_2$ with input at least $2$.
\end{lemma}
\begin{proof}
	Let $N_\Psi:\reals^{kn} \rightarrow [0,1]^{kn}$ be the depth-$2$ neural network from the proof of \lemref{lem:network N1}, with a single layer of non-linearity with $2kn$ hidden neurons, and parameter magnitudes bounded by $n^2$, such that for every $\bz' \in \reals^{kn}$ with $z'_i \not \in (c,c+\frac{1}{n^2})$ for every $i \in [kn]$, we have $N_\Psi(\bz') = \Psi(\bz')$.

	Let $\bz' \in \reals^{kn}$ be such that for every $i \in [kn]$ we have $z'_i \not \in \left(c, c + \frac{1}{n^2} \right)$. For such $\bz'$ we have $N_\Psi(\bz') = \Psi(\bz')$. Hence, it suffices to show that we can construct an affine layer with $2k + n$ outputs,  weights bounded by a constant and bias terms bounded by $n^3$, such that given an input $\bz \in \{0,1\}^{kn}$, if it is an encoding of a hyperedge then all outputs are at most $-1$, and otherwise there exists an output greater or equal to $2$. We construct such an affine layer in \lemref{lem:network N2 second layer}.
\ignore{
	Note that $\bz$ is not an encoding of a hyperedge iff at least one of the following holds: 
	\begin{enumerate}
		\item At least one of the $k$ size-$n$ slices in $\bz$ contains $0$ more than once. 
		\item At least one of the $k$ size-$n$ slices in $\bz$ does not contain $0$. 
		\item There are two size-$n$ slices in $\bz$ that encode the same index. 
	\end{enumerate}
	We define the outputs of our affine layer as follows. 
	First, we have 
	$k$ outputs that correspond to (1). In order to check whether slice $i \in [k]$ contains $0$ more than once, the output will be $3n - 4 - (\sum_{j \in [n]} 3 z_{i,j})$.
	Second, we have $k$ outputs that correspond to (2): in order to check whether slice $i \in [k]$ does not contain $0$, the output will be $(\sum_{j \in [n]} 3 z_{i,j}) - 3n + 2$. 
	Finally, we have 
	$n$ outputs that correspond to (3): in order to check whether there are two slices that encode the same index $j \in [n]$, the output will be $3k - 4 - (\sum_{i \in [k]} 3 z_{i,j})$.

	Moreover, note that all weights and bias terms in the network $N_2$ can be bounded by $n^3$ (for large enough $n$).
}
\end{proof}

\begin{lemma}  \label{lem:network N3}
	There exists a depth-$2$ neural network $N_3$ with input dimension $kn$, at most $n \log(n)$ hidden neurons, $kn \leq n \log(n)$ output neurons, and parameter magnitudes bounded by $n^3$ (for a sufficiently large $n$), which satisfies the following. We denote the set of output neurons of $N_3$ by $\ce_3$. Let $\bz' \in \reals^{kn}$. If there exists $i \in [kn]$ such that $z'_i \in \left(c, c + \frac{1}{n^2} \right)$ then there exists a neuron in $\ce_3$ with input at least $2$. If for all $i \in [kn]$ we have $z'_i \not \in \left(c - \frac{1}{n^2}, c + \frac{2}{n^2} \right)$ then the inputs to all neurons in $\ce_3$ are at most $-1$.
\end{lemma}
\begin{proof}
	It suffices to construct a univariate depth-$2$ network $f:\reals \to \reals$ with one non-linear layer and a constant number of hidden neurons, such that for every input $z'_i \in (c,c+\frac{1}{n^2})$ we have $f(z'_i) = 2$, and for every $z'_i \not \in (c-\frac{1}{n^2},c+\frac{2}{n^2})$ we have $f(z'_i) = -1$.

	We construct $f$ as follows:
	\begin{align*}
		f(z'_i) = &(3 n^2)\left( \left[z'_i - \left(c-\frac{1}{n^2}\right)\right]_+ - \left[z'_i-c\right]_+ \right) -
		\\
		&(3 n^2)\left( \left[z'_i-\left(c+\frac{1}{n^2}\right)\right]_+ - \left[z'_i-\left(c+\frac{2}{n^2}\right)\right]_+ \right) - 1~.
	\end{align*}
	
	Note that all weights and bias terms are bounded by $n^3$ (for large enough $n$).
\end{proof}

\begin{lemma} \label{lem:tau exists}
	Let $q = \poly(n)$ and $r = \poly(n)$. Then, there exists $\tau=\frac{1}{\poly(n)}$ such that for a sufficiently large $n$, with probability at least 
	$1 - \exp(-n/2)$
	a vector $\bxi \sim \cn(\zero, \tau^2 I_r)$ satisfies $\norm{\bxi} \leq \frac{1}{q}$.
\end{lemma}
\begin{proof}
	Let $\tau = \frac{1}{q \sqrt{2 r n}}$.
	Every component $\xi_i$ in $\bxi$ has the distribution $\cn(0, \tau^2)$.
	By a standard tail bound of the Gaussian distribution, we have for every $i \in [r]$ and $t \geq 0$ that $\Pr[ \xi_i \geq t ] \leq 2 \exp\left( -\frac{t^2}{2 \tau^2}\right)$. Hence, for $t=\frac{1}{q \sqrt{r}}$, we get
	\[
		\Pr\left[ \xi_i \geq \frac{1}{q\sqrt{r}} \right]
		\leq 2 \exp\left( -\frac{1}{2 \tau^2 q^2 r}\right)
		= 2 \exp\left( -\frac{2 r n q^2}{2 q^2 r}\right)
		= 2 \exp\left( -n\right)~.
	\]
	By the union bound, with probability at least $1 - r \cdot 2 e^{-n}$, we have 
	\[
		\norm{\bxi}^2
		 \leq r \cdot \frac{1}{r q^2}
		 = \frac{1}{q^2}~.
	\]
	Thus, for a sufficiently large $n$, with probability at least 
	$1 - \exp(-n/2)$
	we have $\norm{\bxi} \leq \frac{1}{q}$.
\ignore{
	Note that $\norm{\frac{\bxi}{\tau}}^2$ has the Chi-squared distribution. A concentration bound by Laurent and Massart \citep[Lemma~1]{laurent2000adaptive} implies that for all $t > 0$ we have
	\[
	 	\Pr\left[ \norm{\frac{\bxi}{\tau}}^2 - r \geq 2 \sqrt{rt} + 2t \right] \leq e^{-t}~.
	\]
	Plugging-in $t=\frac{r}{4}$ we get 
	\begin{align*}
		\Pr\left[ \norm{\frac{\bxi}{\tau}}^2 \geq 4r \right]
		&= \Pr\left[ \norm{\frac{\bxi}{\tau}}^2 - r \geq 3r \right]
		\\
		&\leq \Pr\left[ \norm{\frac{\bxi}{\tau}}^2 - r \geq \frac{3}{2}r \right]
		\\
		&= \Pr\left[ \norm{\frac{\bxi}{\tau}}^2 - r \geq  2 \sqrt{r \cdot \frac{r}{4}} + 2 \cdot \frac{r}{4} \right] 
		\\
		&\leq \exp\left( - \frac{r}{4} \right)~.
	\end{align*}
}
\end{proof}

\begin{lemma} \label{lem:realizable}
	If $\cs$ is pseudorandom then with probability at least $\frac{39}{40}$ (over $\bxi \sim \cn(\zero, \tau^2 I_p)$ and the i.i.d. inputs $\tilde{\bz}_i \sim \cd$) the examples $(\tilde{\bz}_1, \tilde{y}_1),\ldots,(\tilde{\bz}_{m(n)+n^3},\tilde{y}_{m(n)+n^3})$ returned by the oracle are realized by $\hat{N}$.
\end{lemma}
\begin{proof}
	By our choice of $\tau$, with probability at least $1-\frac{1}{n}$ over $\bxi \sim \cn(\zero, \tau^2 I_p)$, we have $|\xi_j| \leq \frac{1}{10}$ for all $j \in [p]$, and for every $\tilde{\bz}$ with $\norm{\tilde{\bz}} \leq 2n$  
the inputs to the neurons $\ce_1,\ce_2,\ce_3$ in the computation $\hat{N}(\tilde{\bz})$ satisfy Properties~\ref{prop:first} through \ref{prop:last}. We first show that with probability at least $1 - \frac{1}{n}$ all examples $\tilde{\bz}_1, \ldots, \tilde{\bz}_{m(n)+n^3}$ satisfy $\norm{\tilde{\bz}_i} \leq 2n$. Hence, with probability at least $1 - \frac{2}{n}$, Properties~\ref{prop:first} through \ref{prop:last} hold for the computations $\hat{N}(\tilde{\bz}_i)$ for all $i \in [m(n)+n^3]$.
	
	
	Note that $\norm{\tilde{\bz}_i}^2$ has the Chi-squared distribution. Since $\tilde{\bz}_i$ is of dimension $n^2$, a concentration bound by Laurent and Massart \citep[Lemma~1] {laurent2000adaptive} implies that for all $t > 0$ we have
	\[
	 	\Pr\left[ \norm{\tilde{\bz}_i}^2 - n^2 \geq 2n\sqrt{t} + 2t \right] \leq e^{-t}~.
	\]
	Plugging-in $t=\frac{n^2}{4}$, we get
	\begin{align*}
        		\Pr\left[ \norm{\tilde{\bz}_i}^2  \geq  4n^2 \right]
		&= \Pr\left[ \norm{\tilde{\bz}_i}^2 - n^2  \geq  3n^2 \right] 
		\\
		&\leq \Pr\left[ \norm{\tilde{\bz}_i}^2 - n^2  \geq  \frac{3n^2}{2} \right] 
		\\
		&= \Pr\left[ \norm{\tilde{\bz}_i}^2 - n^2 \geq  2n\sqrt{\frac{n^2}{4}} + 2 \cdot \frac{n^2}{4} \right] 
		\\
		&\leq \exp\left(-\frac{n^2}{4}\right)~.
	\end{align*}
	Thus, we have
	$\Pr\left[ \norm{\tilde{\bz}_i}  \geq 2n \right] \leq \exp\left(-\frac{n^2}{4}\right)$.
\ignore{
	\begin{equation*} 
		\Pr\left[ \norm{\tilde{\bz}_i}  \geq 2n \right] 
		\leq \exp\left(-\frac{n^2}{4}\right)~.
	\end{equation*}
}
	By the union bound, with probability at least 
	\[	
		1 - \left( m(n)+n^3  \right)  \exp\left(-\frac{n^2}{4}\right) \geq 1 - \frac{1}{n}
	\]
	(for a sufficiently large $n$), all examples $(\tilde{\bz}_i, \tilde{y}_i)$ satisfy $\norm{\tilde{\bz}_i} \leq 2n$.
	
	Thus, we showed that with probability at least $1 - \frac{2}{n} \geq \frac{39}{40}$ (for a sufficiently large $n$), we have $|\xi_j| \leq \frac{1}{10}$ for all $j \in [p]$, and Properties~\ref{prop:first} through \ref{prop:last} hold for the computations $\hat{N}(\tilde{\bz}_i)$ for all $i \in [m(n)+n^3]$. It remains to show that if these properties hold, then the examples $(\tilde{\bz}_1, \tilde{y}_1),\ldots,(\tilde{\bz}_{m(n)+n^3},\tilde{y}_{m(n)+n^3})$ are realized by $\hat{N}$.
	
	Let $i \in [m(n)+n^3]$. For brevity, we denote $\tilde{\bz} = \tilde{\bz}_i$, $\tilde{y} = \tilde{y}_i$, and $\bz' = \tilde{\bz}_{[kn]}$.
	Since $|\xi_j| \leq \frac{1}{10}$ for all $j \in [p]$, and all incoming weights to the output neuron in $\tilde{N}$ are $-1$, then in $\hat{N}$ all incoming weights to the output neuron are in $\left[ - \frac{11}{10}, -\frac{9}{10} \right]$, and the bias term in the output neuron, denoted by $\hat{b}$, is in $\left[ \frac{9}{10}, \frac{11}{10} \right]$. 
	Consider the following cases:
	\begin{itemize}
		\item If $\Psi(\bz')$ is not an encoding of a hyperedge then $\tilde{y} = 0$, and $\hat{N}(\tilde{\bz})$ satisfies:
			\begin{enumerate}
				\item If $\bz'$ does not have components in $\left( c, c+\frac{1}{n} \right)$, then there exists a neuron in $\ce_2$ with output at least $\frac{3}{2}$.
				\item If $\bz'$ has a component in $\left( c, c+\frac{1}{n} \right)$, then there exists a neuron in $\ce_3$ with output at least $\frac{3}{2}$. 
			\end{enumerate}
			In both cases, since all incoming weights to the output neuron in $\hat{N}$ are in $\left[ - \frac{11}{10}, -\frac{9}{10} \right]$, and $\hat{b} \in \left[ \frac{9}{10}, \frac{11}{10} \right]$, then the input to the output neuron (including the bias term) is at most $\frac{11}{10} - \frac{3}{2} \cdot \frac{9}{10} < 0$, and thus its output is $0$.
\ignore{
		\begin{itemize}
			\item If $\bz'$ does not have components in $\left( c, c+\frac{1}{n} \right)$, then there exists a neuron in $\ce_2$ with output at least $\frac{3}{2}$. Since all incoming weights to the output neuron in $\hat{N}$ are in $\left[ - \frac{11}{10}, -\frac{9}{10} \right]$, and $\hat{b} \in \left[ \frac{9}{10}, \frac{11}{10} \right]$, then the input to output neuron (including the bias term but without the activation) is at most $\frac{11}{10} - \frac{3}{2} \cdot \frac{9}{10} < 0$, and thus its output is $0$.
			\item If $\bz'$ has a component in $\left( c, c+\frac{1}{n} \right)$, then there exists a neuron in $\ce_3$ with input at least $\frac{3}{2}$.  Since all incoming weights to the output neuron in $\hat{N}$ are in $\left[ - \frac{11}{10}, -\frac{9}{10} \right]$, and $\hat{b} \in \left[ \frac{9}{10}, \frac{11}{10} \right]$, then the input to output neuron (including the bias term but without the activation) is at most $\frac{11}{10} - \frac{3}{2} \cdot \frac{9}{10} < 0$, and thus its output is $0$.
		\end{itemize} 	
}
	\item If $\Psi(\bz')$ is an encoding of a hyperedge $S$, then by the definition of the examples oracle we have $S=S_i$. Hence:
		\begin{itemize}
			
			\item If $\bz'$ does not have components in $\left( c - \frac{1}{n^2}, c + \frac{2}{n^2} \right)$, then:
			
			\begin{itemize}
				
				\item	If $y_i = 0$ then the oracle sets $\tilde{y}=\hat{b}$. Since $\cs$ is pseudorandom, we have $P_\bx(\bz^S) = P_\bx(\bz^{S_i}) = y_i = 0$. 
				Hence,
				in the computation $\hat{N}(\tilde{\bz})$ the inputs to all neurons in $\ce_1,\ce_2,\ce_3$ are at most $-\frac{1}{2}$, and hence their outputs are $0$. Therefore, $\hat{N}(\tilde{\bz}) = \hat{b}$.
				
				\item If $y_i = 1$ then the oracle sets $\tilde{y}=0$. Since $\cs$ is pseudorandom, we have $P_\bx(\bz^S) = P_\bx(\bz^{S_i}) = y_i = 1$. 
				Hence,
				in the computation $\hat{N}(\tilde{\bz})$ there exists a neuron in $\ce_1$ with output at least $\frac{3}{2}$. Since all incoming weights to the output neuron in $\hat{N}$ are in $\left[ - \frac{11}{10}, -\frac{9}{10} \right]$, and $\hat{b} \in \left[ \frac{9}{10}, \frac{11}{10} \right]$, then the input to output neuron (including the bias term) is at most $\frac{11}{10} - \frac{3}{2} \cdot \frac{9}{10} < 0$, and thus its output is $0$.
			
			\end{itemize} 
			
			\item If $\bz'$ has a component in $\left( c , c + \frac{1}{n^2} \right)$, then $\tilde{y}=0$. Also, in the computation $\hat{N}(\tilde{\bz})$ there exists a neuron in $\ce_3$ with output at least $\frac{3}{2}$. Since all incoming weights to the output neuron in $\hat{N}$ are in $\left[ - \frac{11}{10}, -\frac{9}{10} \right]$, and $\hat{b} \in \left[ \frac{9}{10}, \frac{11}{10} \right]$, then the input to output neuron (including the bias term) is at most $\frac{11}{10} - \frac{3}{2} \cdot \frac{9}{10} < 0$, and thus its output is $0$.
			
			\item  If $\bz'$ does not have components in the interval $(c,c+\frac{1}{n^2})$, but has a component in the interval $(c-\frac{1}{n^2},c+\frac{2}{n^2})$, then:
			
			\begin{itemize}
				
				\item If $y_i = 1$ the oracle sets $\tilde{y}=0$. Since $\cs$ is pseudorandom, we have $P_\bx(\bz^S) = P_\bx(\bz^{S_i}) = y_i = 1$. 
				Hence,
				in the computation $\hat{N}(\tilde{\bz})$ there exists a neuron in $\ce_1$ with output at least $\frac{3}{2}$. Since all incoming weights to the output neuron in $\hat{N}$ are in $\left[ - \frac{11}{10}, -\frac{9}{10} \right]$, and $\hat{b} \in \left[ \frac{9}{10}, \frac{11}{10} \right]$, then the input to output neuron (including the bias term) is at most $\frac{11}{10} - \frac{3}{2} \cdot \frac{9}{10} < 0$, and thus its output is $0$. 
				
				\item  If $y_i = 0$ the oracle sets $\tilde{y}=[\hat{b} - \hat{N}_3(\tilde{\bz})]_+$. Since $\cs$ is pseudorandom, we have $P_\bx(\bz^S) = P_\bx(\bz^{S_i}) = y_i = 0$. 
				Therefore,
				in the computation $\hat{N}(\tilde{\bz})$ we have: All neurons in $\ce_1,\ce_2$ have output $0$, hence their contribution to the output of $\hat{N}$ is $0$. Thus, by the definition of $\hat{N}_3$, we have $\hat{N}(\tilde{\bz}) = [\hat{b} - \hat{N}_3(\tilde{\bz})]_+$.
			
			\end{itemize}			
		
		\end{itemize}
	
	\end{itemize}
\end{proof}

\begin{lemma} \label{lem:prob z good discrete}
	Let $\bz \in \{0,1\}^{kn}$ be a random vector whose components are drawn i.i.d. from a Bernoulli distribution, which takes the value $0$ with probability $\frac{1}{n}$. Then, for a sufficiently large $n$, the vector $\bz$ is an encoding of a hyperedge with probability at least $\frac{1}{\log(n)}$.
\end{lemma}
\begin{proof}
	The vector $\bz$ represents a hyperedge iff in each of the $k$ size-$n$ slices in $\bz$ there is exactly one $0$-bit and each two of the $k$ slices in $\bz$ encode different indices. Hence, 
	\begin{align*}
		 \Pr\left[\bz \text{ represents a hyperedge} \right]
		&=n \cdot (n-1) \cdot \ldots \cdot(n-k+1) \cdot \left(\frac{1}{n}\right)^k \left(\frac{n-1}{n}\right)^{nk-k}
		\\
		&\geq \left(\frac{n-k}{n}\right)^k \left(\frac{n-1}{n}\right)^{k(n-1)}
		\\
		&=\left(1-\frac{k}{n}\right)^k \left(1-\frac{1}{n}\right)^{k(n-1)}~.
	\end{align*}
	Since for every $x \in (0,1)$ we have $e^{-x} < 1 - \frac{x}{2}$, then for a sufficiently large $n$ the above is at least
	\begin{equation*} \label{eq:prob represents hyperedge}
		\exp\left(-\frac{2k^2}{n}\right) \cdot \exp\left(-\frac{2k(n-1)}{n} \right)
		\geq \exp\left(-1\right) \cdot \exp\left(-2k\right)
		\geq \frac{1}{\log(n)}~.
	\end{equation*}
\end{proof}

\begin{lemma} \label{lem:prob z good}
	Let $\tilde{\bz} \in \reals^{n^2}$ be the vector returned by the oracle. We have
	\[
		\Pr\left[\tilde{\bz} \in \tilde{\cz}\right] \geq \frac{1}{2\log(n)}~.
	\]
\end{lemma}
\begin{proof}
	Let $\bz' = \tilde{\bz}_{[kn]}$. We have
	\begin{equation} \label{eq:prob z good}
		\Pr\left[\tilde{\bz} \not \in \tilde{\cz}\right] 
		\leq \Pr\left[ \exists j \in [kn] \text{ s.t. } z'_j \in \left(c-\frac{1}{n^2},c+\frac{2}{n^2}\right) \right] +  \Pr\left[\Psi(\bz') \text{ does not represent a hyperedge} \right]~.
	\end{equation}
	
	We now bound the terms in the above RHS.
	First, since $\bz'$ has the Gaussian distribution, then its components are drawn i.i.d. from a density function bounded by $\frac{1}{2\pi}$. Hence, for a sufficiently large $n$ we have
	\begin{equation} \label{eq:prob good components}
		\Pr\left[ \exists j \in [kn] \text{ s.t. } z'_j \in \left(c-\frac{1}{n^2},c+\frac{2}{n^2}\right) \right] 
		\leq kn \cdot \frac{1}{2\pi} \cdot \frac{3}{n^2}
		= \frac{3k}{2 \pi n}
		\leq \frac{\log(n)}{n}~.
	\end{equation}
	
	Let $\bz = \Psi(\bz')$. Note that $\bz$ is a random vector whose components are drawn i.i.d. from a Bernoulli distribution, where the probability to get $0$ is $\frac{1}{n}$. By \lemref{lem:prob z good discrete}, $\bz$ is an encoding of a hyperedge with probability at least $\frac{1}{\log(n)}$. Combining it with \eqref{eq:prob z good} and~(\ref{eq:prob good components}), , we get for a sufficiently large $n$ that
	\[
		\Pr\left[\tilde{\bz} \not \in \tilde{\cz}\right] 
		\leq \frac{\log(n)}{n} + \left( 1 - \frac{1}{\log(n)} \right)
		\leq 1 - \frac{1}{2 \log(n)}~,
	\]
	as required.
\ignore{
The vector $\bz$ represents a hyperedge iff in each of the $k$ size-$n$ slices in $\bz$ there is exactly one $0$-bit and each two of the $k$ slices in $\bz$ encode different indices. Hence, 
	\begin{align*}
		 \Pr\left[\bz \text{ represents a hyperedge} \right]
		&=n \cdot (n-1) \cdot \ldots \cdot(n-k+1) \cdot \left(\frac{1}{n}\right)^k \left(\frac{n-1}{n}\right)^{nk-k}
		\\
		&\geq \left(\frac{n-k}{n}\right)^k \left(\frac{n-1}{n}\right)^{k(n-1)}
		\\
		&=\left(1-\frac{k}{n}\right)^k \left(1-\frac{1}{n}\right)^{k(n-1)}~.
	\end{align*}
	Since for every $x \in (0,1)$ we have $e^{-x} < 1 - \frac{x}{2}$, then for a sufficiently large $n$ the above is at least
	\begin{equation} \label{eq:prob represents hyperedge}
		\exp\left(-\frac{2k^2}{n}\right) \cdot \exp\left(-\frac{2k(n-1)}{n} \right)
		\geq \exp\left(-1\right) \cdot \exp\left(-2k\right)
		\geq \frac{1}{\log(n)}~.
	\end{equation}

	Combining \eqref{eq:prob z good},~(\ref{eq:prob good components}), and~(\ref{eq:prob represents hyperedge}), we get for a sufficiently large $n$ that
	\[
		\Pr\left[\tilde{\bz} \not \in \tilde{\cz}\right] 
		\leq \frac{\log(n)}{n} + \left( 1 - \frac{1}{\log(n)} \right)
		\leq 1 - \frac{1}{2 \log(n)}~,
	\]
	as required.
}
\end{proof}

}

\section{Proof of \corollaryref{cor:non-degenerate}} \label{app:proof of cor}

By the proof of  \thmref{thm:hard smoothed}, under \assref{ass:localPRG}, there is no $\poly(d)$-time algorithm $\cl_s$ that satisfies the following: Let $\btheta \in \reals^p$ be $B$-bounded parameters of a depth-$3$ network $N_\btheta:\reals^d \to \reals$, and let $\tau,\epsilon > 0$. Assume that $p,B,1/\epsilon,1/\tau \leq \poly(d)$, and that the widths of the hidden layers in $\cn_\btheta$ are $d$ (i.e., the weight matrices are square). Let $\bxi \in \cn(\zero, \tau^2 I_p)$ and let $\hat{\btheta} = \btheta + \bxi$. Then, with probability at least $\frac{3}{4} - \frac{1}{1000}$, given access to an examples oracle for $\cn_{\hat{\btheta}}$, the algorithm $\cl_s$ returns a hypothesis $h$ with $\E_{\bx} \left[ (h(\bx) - N_{\hat{\btheta}})^2 \right] \leq \epsilon$. 

Note that in the above, the requirements from $\cl_s$ are somewhat weaker than in our original definition of learning with smoothed parameters. Indeed, we assume that the widths of the hidden layers are $d$ and the required success probability is only $\frac{3}{4} - \frac{1}{1000}$ (rather than $\frac{3}{4}$). We now explain why the hardness result holds already under these conditions:
\begin{itemize}
	\item Note that if we change the assumption on the learning algorithm in proof of  \thmref{thm:hard smoothed} such that it succeeds with probability at least $\frac{3}{4} - \frac{1}{1000}$ (rather than $\frac{3}{4}$), then in the case where $\cs$ is pseudorandom we get that the algorithm $\ca$ returns $1$ with probability at least $1 - \left(\frac{1}{4} + \frac{1}{1000} + \frac{1}{40} + \frac{1}{40} + \frac{1}{40}\right)$ (see the proof of \lemref{lem:pseudorandom small loss}), which is still greater than $\frac{2}{3}$. Also, the analysis of the case where $\cs$ is random does not change, and thus in this case $\ca$ returns $0$ with probability greater than $\frac{2}{3}$. Consequently, we still get distinguishing advantage greater than $\frac13$.
	\item Regarding the requirement on the widths, we note that in the proof of \thmref{thm:hard smoothed} the layers satisfy the following. The input dimension is $d=n^2$, the width of the first hidden layer is at most $3n\log(n) \leq d$, and the width of the second hidden layer is at most $\log(n) + 2n + n \log(n) \leq d$ (all bounds are for a sufficiently large $d$). In order to get a network where all layers are of width $d$, we add new neurons to the hidden layers, with incoming weights $0$, outgoing weights $0$, and bias terms $-1$. Then, for an appropriate choice of $\tau=1/\poly(n)$, even in the perturbed network the outputs of these new neurons will be $0$ w.h.p. for every input $\tilde{\bz}_1,\ldots,\tilde{\bz}_{m(n)+n^3}$, and thus they will not affect the network's output. Thus, using the same argument as in the proof of \thmref{thm:hard smoothed}, we conclude that the hardness results holds already for network with square weight matrices.
\end{itemize}

Suppose that there exists an efficient algorithm $\cl_p$ that learns in the standard PAC framework depth-$3$ neural networks where the minimal singular value of each weight matrix is lower bounded by $1/q(d)$ for any polynomial $q(d)$. We will use $\cl_p$ to obtain an efficient algorithm $\cl_s$ that learns depth-$3$ networks with smoothed parameters as described above, and thus reach a contradiction.

Let $\btheta \in \reals^p$ be $B$-bounded parameters of a depth-$3$ network $N_\btheta:\reals^d \to \reals$, and let $\tau,\epsilon > 0$. Assume that $p,B,1/\epsilon,1/\tau \leq \poly(d)$, and that the widths of the hidden layers in $\cn_\btheta$ are $d$.
For random $\bxi \sim \cn(\zero, \tau^2 I_p)$ and $\hat{\btheta} = \btheta + \bxi$, the algorithm $\cl_s$ has access to examples labeled by $N_{\hat{\btheta}}$. 
Using \lemref{lem:min singular} below with $t = \frac{\tau}{d}$ and the union bound over the two weight matrices in $N_\btheta$, we get that with probability at least $1 - \frac{2 \cdot 2.35}{\sqrt{d}} \geq 1 - \frac{1}{1000}$ (for large enough $d$), the minimal singular values of all weight matrices in $\hat{\btheta}$ are at least $\frac{\tau}{d} \geq \frac{1}{q(d)}$ for some sufficiently large polynomial $q(d)$.
Our algorithm $\cl_s$ will simply run $\cl_p$. Given that the minimal singular values of the weight matrices are at least $\frac{1}{q(d)}$, the algorithm $\cl_p$ runs in time $\poly(d)$ and returns with probability at least $\frac{3}{4}$ 
a hypothesis $h$ with $\E_{\bx} \left[ (h(\bx) - N_{\hat{\btheta}}(\bx))^2 \right] \leq \epsilon$. Overall, the algorithm $\cl_s$ runs in $\poly(d)$ time, and with probability at least $\frac{3}{4} - \frac{1}{1000}$ (over both $\bxi$ and the internal randomness) returns a hypothesis $h$ with loss at most $\epsilon$.

\begin{lemma}[\cite{sankar2006smoothed}, Theorem~3.3] \label{lem:min singular}
	Let $W$ be an arbitrary square matrix in $\reals^{d \times d}$, and let $P \in \reals^{d \times d}$ be a random matrix, where each entry is drawn i.i.d. from $\cn(0, \tau^2)$ for some $\tau > 0$. Let $\sigma_d$ be the minimal singular value of the matrix $W+P$. Then, for every $t > 0$ we have
	\[
		\Pr_P\left[ \sigma_d \leq t  \right] \leq 2.35 \cdot \frac{t\sqrt{d}}{\tau}~.
	\]
\end{lemma}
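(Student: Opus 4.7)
The plan is a Gaussian anti-concentration argument based on the distance-to-subspace characterization of the columns of $(W+P)^{-1}$. Write $A := W+P$ with rows $r_1,\ldots,r_d \in \reals^d$, and for each $i$ let $\pi_i$ denote the distance from $r_i$ to $\mathrm{span}(r_j : j \neq i)$. The starting observation is the algebraic identity $\norm{A^{-1} e_i} = 1/\pi_i$: the $i$-th column of $A^{-1}$ is the unique vector orthogonal to every $r_j$ with $j \neq i$ and having inner product $1$ with $r_i$; writing it as a scalar multiple of the unit normal $u_i$ to $\mathrm{span}(r_j : j \neq i)$ and solving for the scalar forces the norm to be exactly $1/\pi_i$.

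Next I would prove Gaussian anti-concentration for each $\pi_i$ by conditioning on all rows except $r_i$. Under this conditioning the normal direction $u_i$ is fixed, and
\[
\pi_i \;=\; \lvert u_i^{\top} r_i \rvert \;=\; \lvert u_i^{\top} w_i + u_i^{\top} p_i \rvert,
\]
where $u_i^{\top} p_i \sim \cn(0,\tau^2)$, since $p_i \sim \cn(\zero,\tau^2 I_d)$ is independent of the conditioning. Because the density of a $\cn(\mu,\tau^2)$ variable is bounded everywhere by $(\tau\sqrt{2\pi})^{-1}$, we obtain
\[
\Pr\!\left[\pi_i \leq s \,\middle|\, \{r_j\}_{j \neq i}\right] \;\leq\; \sqrt{\tfrac{2}{\pi}}\, \frac{s}{\tau},
\]
and the same bound then holds unconditionally by averaging.

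Third, I would convert these row-distance bounds into a bound on $\sigma_d(A)$. The elementary route uses $\norm{A^{-1}}_2 \leq \norm{A^{-1}}_F \leq \sqrt{d}\, \max_i \norm{A^{-1} e_i}$, which yields $\sigma_d(A) \geq \min_i \pi_i/\sqrt{d}$, and then a union bound:
\[
\Pr[\sigma_d(W+P) \leq t] \;\leq\; \sum_{i=1}^d \Pr\!\left[\pi_i \leq t\sqrt{d}\,\right] \;\leq\; \sqrt{\tfrac{2}{\pi}}\, \frac{t\, d^{3/2}}{\tau}.
\]
This reproduces the qualitative form of the lemma but loses a factor of $d$ over the stated $t\sqrt{d}/\tau$ rate. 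The main obstacle is closing exactly this gap: one must avoid the lossy union bound and instead control $\norm{A^{-1}}_2$ directly, rather than bounding it by $\norm{A^{-1}}_F$. The way this is done in \cite{sankar2006smoothed} is to exploit the orthogonal invariance of the Gaussian perturbation---writing the SVD $W = U\Sigma V^{\top}$ and noting that $U^{\top}(W+P)V$ has the same law as $\Sigma + P$ since $U^{\top} P V$ is again an i.i.d.\ Gaussian matrix---which reduces the problem to the case of a diagonal mean $\Sigma$, where a refined one-dimensional Gaussian density argument applied to the extremal singular vector of $\Sigma+P$ delivers the sharp $t\sqrt{d}/\tau$ dependence with constant $2.35$. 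Since the statement is invoked verbatim as Theorem~3.3 of \cite{sankar2006smoothed}, the lemma follows from their refined argument.
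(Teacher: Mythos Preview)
The paper does not prove this lemma at all: it is stated with attribution to \cite{sankar2006smoothed}, Theorem~3.3, and used as a black box. Your proposal likewise ultimately defers to that citation for the sharp $t\sqrt{d}/\tau$ rate, so in that sense the two ``proofs'' coincide. The additional content you supply---the row-distance identity $\norm{A^{-1}e_i}=1/\pi_i$, the conditional Gaussian anti-concentration for $\pi_i$, and the union-bound argument yielding the weaker $t\,d^{3/2}/\tau$ bound---is correct and gives useful intuition, but it is not part of the paper and is not needed there, since the lemma is invoked purely as a cited result.
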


\section{Proof of \thmref{thm:smoothed weights and inputs}} \label{app:proof of smoothed weights and inputs}

The proof follows similar ideas to the proof of \thmref{thm:hard smoothed}. The main difference is that we need to handle here a smoothed discrete input distribution rather than the standard Gaussian distribution.

For a sufficiently large $n$, let $\cd$ be a distribution on $\{0,1\}^{n^2}$, where each component is drawn i.i.d. from a Bernoulli distribution which takes the value $0$ with probability $\frac{1}{n}$. Assume that there is a $\poly(n)$-time algorithm $\cl$ that learns depth-$3$ neural networks with at most $n^2$ hidden neurons and parameter magnitudes bounded by $n^3$, with smoothed parameters and inputs, under the distribution $\cd$, with $\epsilon=\frac{1}{n}$ and $\tau,\omega=1/\poly(n)$ that we will specify later. Let $m(n) \leq \poly(n)$ be the sample complexity of $\cl$, namely, $\cl$ uses a sample of size at most $m(n)$ and returns with probability at least $\frac34$ a hypothesis $h$ with $\E_{\bz \sim \hat{\cd}} \left[ \left(h(\bz) - N_{\hat{\btheta}}(\bz) \right)^2 \right] \leq \epsilon = \frac{1}{n}$. Note that $\hat{\cd}$ is the distribution $\cd$ after smoothing with parameter $\omega$, and the vector $\hat{\btheta}$ is the parameters of the target network after smoothing with parameter $\tau$. Let $s>1$ be a constant such that $n^s \geq m(n) + n^3$ for every sufficiently large $n$. By \assref{ass:localPRG}, there exists a constant $k$ and a predicate $P:\{0,1\}^k \to \{0,1\}$, such that $\cf_{P,n,n^s}$ is $\frac{1}{3}$-PRG. We will show an efficient algorithm $\ca$ with distinguishing advantage greater than $\frac13$ and thus reach a contradiction.

Throughout this proof, we will use some notations from the proof of \thmref{thm:hard smoothed}. We repeat it here for convenience.
For a hyperedge $S = (i_1,\ldots,i_k)$ we denote by $\bz^S \in \{0,1\}^{kn}$ the following encoding of $S$: the vector $\bz^S$ is a concatenation of $k$ vectors in $\{0,1\}^n$, such that the $j$-th vector has $0$ in the $i_j$-th coordinate and $1$ elsewhere. Thus, $\bz^S$ consists of $k$ size-$n$ slices, each encoding a member of $S$. For $\bz \in \{0,1\}^{kn}$, $i \in [k]$ and $j \in [n]$, we denote $z_{i,j} = z_{(i-1)n + j}$. That is, $z_{i,j}$ is the $j$-th component in the $i$-th slice in $\bz$. For $\bx \in \{0,1\}^n$, let $P_\bx: \{0,1\}^{kn} \to \{0,1\}$ be such that for every hyperedge $S$ we have $P_\bx(\bz^S) = P(\bx_S)$. 
For $\tilde{\bz} \in \reals^{n^2}$ we denote $\tilde{\bz}_{[kn]} = (\tilde{z}_1,\ldots,\tilde{z}_{kn})$, namely, the first $kn$ components of $\tilde{\bz}$ (assuming $n^2 \geq kn$).

\subsection{Defining the target network for $\cl$}

Since our goal is to use the algorithm $\cl$ for breaking PRGs, in this subsection we define a neural network $\tilde{N}:\reals^{n^2} \to \reals$ that we will later use as a target network for $\cl$. The network $\tilde{N}$ contains the subnetworks $N_1,N_2$ that we define below.

Let $N_1$ be a depth-$1$ neural network (i.e., one layer, with activations in the output neurons) with input dimension $kn$, at most $\log(n)$ output neurons, and parameter magnitudes bounded by $n^3$ (all bounds are for a sufficiently large $n$), which satisfies the following. We denote the set of output neurons of $N_1$ by $\ce_1$.
Let $\bz' \in \{0,1\}^{kn}$ be an input to $N_1$ such that $\bz' = \bz^S$ for some hyperedge $S$. Thus, even though $N_1$ takes inputs in $\reals^{kn}$, we consider now its behavior for an input $\bz'$ with discrete components in $\{0,1\}$. Fix some $\bx \in \{0,1\}^n$. Then, for $S$ with $P_\bx(\bz^S) = 0$ the inputs to all output neurons $\ce_1$ are at most $-1$, and for $S$ with $P_\bx(\bz^S) = 1$ there exists a neuron in $\ce_1$ with input at least $2$. Recall that our definition of a neuron's input includes the addition of the bias term. The construction of the network $N_1$ is given in \lemref{lem:network N1 second layer}. Note that the network $N_1$ depends on $\bx$. Let $N'_1: \reals^{kn} \to \reals$ be a depth-$2$ neural network with no activation function in the output neuron, obtained from $N_1$ by summing the outputs from all neurons $\ce_1$.

Let $N_2$ be a depth-$1$ neural network (i.e., one layer, with activations in the output neurons) with input dimension $kn$, at most $2n$ output neurons, and parameter magnitudes bounded by $n^3$ (for a sufficiently large $n$), which satisfies the following. We denote the set of output neurons of $N_2$ by $\ce_2$. Let $\bz' \in \{0,1\}^{kn}$ be an input to $N_2$ (note that it has components only in $\{0,1\}$) . If $\bz'$ is an encoding of a hyperedge then the inputs to all output neurons $\ce_2$ are at most $-1$, and otherwise there exists a neuron in $\ce_2$ with input at least $2$. The construction of the network $N_2$ is given in \lemref{lem:network N2 second layer}. 
Let $N'_2: \reals^{kn} \to \reals$ be a depth-$2$ neural network with no activation function in the output neuron, obtained from $N_2$ by summing the outputs from all neurons $\ce_2$.

Let $N': \reals^{kn} \to \reals$ be a depth-$2$ network obtained from $N'_1,N'_2$ as follows. For $\bz' \in \reals^{kn}$ we have $N'(\bz') = \left[ 1 - N'_1(\bz') - N'_2(\bz')  \right]_+$. The network $N'$ has at most $n^2$ neurons, and parameter magnitudes bounded by $n^3$ (all bounds are for a sufficiently large $n$).
Finally, let $\tilde{N}:\reals^{n^2} \rightarrow \reals$ be a depth-$2$ neural network such that $\tilde{N}(\tilde{\bz}) = N'\left(\tilde{\bz}_{[kn]}\right)$.

\subsection{Defining the noise magnitudes $\tau,\omega$ and analyzing the perturbed network under perturbed inputs}

In order to use the algorithm $\cl$ w.r.t. some neural network with parameters $\btheta$ and a certain input distribution, we need to implement an examples oracle, such that the examples are drawn from a smoothed input distribution, and labeled according to a neural network with parameters $\btheta+\bxi$, where $\bxi$ is a random perturbation. Specifically, we use $\cl$ with an examples oracle where the input distribution $\hat{\cd}$ is obtained from $\cd$ by smoothing, and the labels correspond to a network $\hat{N}:\reals^{n^2} \to \reals$ obtained from $\tilde{N}$ (w.r.t. an appropriate $\bx \in \{0,1\}^n$ in the construction of $N_1$) by adding a small perturbation to the parameters. The smoothing magnitudes $\omega,\tau$ of the inputs and the network's parameters (respectively) are  
such that the following hold. 

We first choose the parameter $\tau = 1/\poly(n)$ as follows. 
Let $f_\btheta:\reals^{n^2} \to \reals$ be any depth-$2$ neural network parameterized by $\btheta \in \reals^r$ for some $r>0$ with at most $n^2$ neurons, and parameter magnitudes bounded by $n^3$ (note that $r$ is polynomial in $n$). 
Then, $\tau$ is such that with probability at least $1-\frac{1}{n}$ over $\bxi \sim \cn(\zero, \tau^2 I_r)$, we have $| \xi_i | \leq \frac{1}{10}$ for all $i \in [r]$, and the network $f_{\btheta + \bxi}$ is such that for every input $\tilde{\bz}  \in \reals^{n^2}$ with $\norm{\tilde{\bz}} \leq n$ and every neuron we have: Let $a,b$ be the inputs to the neuron in the computations $f_\btheta(\tilde{\bz})$ and $f_{\btheta + \bxi}(\tilde{\bz})$ (respectively), then $|a-b| \leq \frac14$.  Thus, $\tau$ is sufficiently small, such that w.h.p. adding i.i.d. noise $\cn(0,\tau^2)$ to each parameter does not change the inputs to the neurons by more than $\frac14$. Note that such an inverse-polynomial $\tau$ exists, since when the network size, parameter magnitudes, and input size are bounded by some $\poly(n)$, then the input to each neuron in $f_\btheta(\tilde{\bz})$ is $\poly(n)$-Lipschitz as a function of $\btheta$, and thus it suffices to choose $\tau$ that implies with probability at least $1 - \frac{1}{n}$ that $\norm{\bxi} \leq \frac{1}{q(n)}$ for a sufficiently large polynomial $q(n)$ (see \lemref{lem:tau exists} for details). 

Next, we choose the parameter $\omega =1/\poly(n)$ as follows. Let $f_\btheta:\reals^{n^2} \to \reals$ be any depth-$2$ neural network parameterized by $\btheta$ with at most $n^2$ neurons, and parameter magnitudes bounded by $n^3 + \frac{1}{10}$. Then, $\omega$ is such that for every $\bz \in \{0,1\}^{n^2}$, with probability at least $1-\exp(-n/2)$ over $\bzeta \sim \cn(\zero, \omega^2 I_{n^2})$ 
the following holds for every neuron in the $f_\btheta$: Let $a,b$ be the inputs to the neuron in the computations $f_\btheta(\bz)$ and $f_\btheta(\bz + \bzeta)$ (respectively), then $|a-b| \leq \frac14$.  Thus, $\omega$ is sufficiently small, such that w.h.p. adding noise $\cn(\zero,\omega^2 I_{n^2})$ to the input $\bz$ does not change the inputs to the neurons by more than $\frac14$. Note that such an inverse-polynomial $\omega$ exists, since when the network size and parameter magnitudes are bounded by some $\poly(n)$, then the input to each neuron in $f_\btheta(\bz)$ is $\poly(n)$-Lipschitz as a function of $\bz$, and thus it suffices to choose $\omega$ that implies with probability at least $1 - \exp(-n/2)$ that $\norm{\bzeta} \leq \frac{1}{q(n)}$ for a sufficiently large polynomial $q(n)$ (see \lemref{lem:tau exists} for details).

Let $\tilde{\btheta} \in \reals^p$ be the parameters of the network $\tilde{N}$. Recall that the parameters vector $\tilde{\btheta}$ is the concatenation of all weight matrices and bias terms. Let $\hat{\btheta} \in \reals^p$ be the parameters of $\hat{N}$, namely, $\hat{\btheta} = \tilde{\btheta} + \bxi$ where $\bxi \sim \cn(\zero, \tau^2 I_p)$. By our choice of $\tau$ and the construction of the networks $N_1,N_2$, with probability at least $1-\frac{1}{n}$ over $\bxi$, for every $\bz \in \{0,1\}^{n^2}$ the following holds: Let $\bzeta \sim \cn(\zero,\omega^2 I_{n^2})$ and let $\hat{\bz} = \bz + \bzeta$. Then with probability at least $1 - \exp(-n/2)$ over $\bzeta$ the differences between inputs to all neurons in the computations $\hat{N}(\hat{\bz})$ and $\tilde{N}(\bz)$ are at most $\frac{1}{2}$. Indeed, w.h.p. for all $\bz \in \{0,1\}^{n^2}$ the computations $\tilde{N}(\bz)$ and $\hat{N}(\bz)$ are roughly similar (up to change of  $1/4$ in the input to each neuron), and w.h.p. the computations $\hat{N}(\bz)$ and $\hat{N}(\hat{\bz})$ are roughly similar (up to change of $1/4$ in the input to each neuron). Thus, with probability at least $1-\frac{1}{n}$ over $\bxi$, the network $\hat{N}$ is such that for every $\bz \in \{0,1\}^{n^2}$, we have with probability at least $1 - \exp(-n/2)$ over $\bzeta$ that the computation $\hat{N}(\hat{\bz})$ satisfies the following properties, where $\bz' := \bz_{[kn]}$: 
\begin{enumerate}[label=(Q\arabic*)]
	\item If $\bz' = \bz^S$ for some hyperedge $S$, then the inputs to $\ce_1$ satisfy: \label{prop:first 2}
	\begin{itemize}
		\item If $P_\bx(\bz^S) = 0$ the inputs to all neurons in $\ce_1$ are at most $-\frac12$. 
		\item If $P_\bx(\bz^S) = 1$ there exists a neuron in $\ce_1$ with input at least $\frac32$.
	\end{itemize}
	\item  The inputs to $\ce_2$ satisfy:\label{prop:last 2}
	\begin{itemize}
		\item If $\bz'$ is an encoding of a hyperedge then the inputs to all neurons $\ce_2$ are at most $-\frac12$.
		\item Otherwise, there exists a neuron in $\ce_2$ with input at least $\frac32$.
	\end{itemize}
\end{enumerate}

\subsection{Stating the algorithm $\ca$}

Given a sequence $(S_1,y_1),\ldots,(S_{n^s},y_{n^s})$, where $S_1,\ldots,S_{n^s}$ are i.i.d. random hyperedges, the algorithm $\ca$ needs to distinguish whether $\by = (y_1,\ldots,y_{n^s})$ is random or that $\by = (P(\bx_{S_1}),\ldots,P(\bx_{S_{n^s}})) = (P_\bx(\bz^{S_1}),\ldots,P_\bx(\bz^{S_{n^s}}))$ for a random $\bx \in \{0,1\}^n$.
Let $\cs = ((\bz^{S_1},y_1),\ldots,(\bz^{S_{n^s}},y_{n^s}))$.

We use the efficient algorithm $\cl$ in order to obtain distinguishing advantage greater than $\frac{1}{3}$ as follows.
Let $\bxi$ be a random perturbation, and let $\hat{N}$ be the perturbed network as defined above, w.r.t. the unknown $\bx \in \{0,1\}^n$. Note that given a perturbation $\bxi$, only the weights in the second layer of the subnetwork $N_1$ in $\hat{N}$ are unknown, since all other parameters do not depend on $\bx$.
The algorithm $\ca$ runs $\cl$ with the following examples oracle.
In the $i$-th call, the oracle first draws $\bz' \in \{0,1\}^{kn}$ such that each component is drawn i.i.d. from a Bernoulli distribution which takes the value $0$ with probability $\frac{1}{n}$. If $\bz'$ is an encoding of a hyperedge then the oracle replaces $\bz'$ with $\bz^{S_i}$. Let $\bz \in \{0,1\}^{n^2}$ be such that $\bz_{[kn]}=\bz'$, and the other $n^2-kn$ components of $\bz$ are drawn i.i.d. from a Bernoulli distribution which takes the value $0$ with probability $\frac{1}{n}$. 
Note that the vector $\bz$ has the distribution $\cd$, since replacing an encoding of a random hyperedge by an encoding of another random hyperedge does not change the distribution of $\bz'$. Let $\hat{\bz} = \bz + \bzeta$, where $\bzeta \sim \cn(\zero, \omega^2 I_{n^2})$. Note that $\hat{\bz}$ has the distribution $\hat{\cd}$.
Let $\hat{b} \in \reals$ be the bias term of the output neuron of $\hat{N}$.
The oracle returns $(\hat{\bz},\hat{y})$, where the labels $\hat{y}$ are chosen as follows:
\begin{itemize}
	\item If $\bz'$ is not an encoding of a hyperedge, then $\hat{y}=0$.
	\item If $\bz'$ is an encoding of a hyperedge:
    	\begin{itemize}
    		\item If $y_i=0$ we set $\hat{y} = \hat{b}$.
		\item If $y_i=1$ we set $\hat{y} = 0$.
    	\end{itemize}
\end{itemize}

Let $h$ be the hypothesis returned by $\cl$.
Recall that $\cl$ uses at most $m(n)$ examples, and hence $\cs$ contains at least $n^3$ examples that $\cl$ cannot view. We denote the indices of these examples by $I = \{m(n)+1,\ldots,m(n)+n^3\}$, and the examples by $\cs_I = \{(\bz^{S_i},y_i)\}_{i \in I}$. By $n^3$ additional calls to the oracle, the algorithm $\ca$ obtains the examples $\hat{\cs}_I = \{(\hat{\bz}_i,\hat{y}_i)\}_{i \in I}$ that correspond to $\cs_I$.
Let $h'$ be a hypothesis such that for all $\tilde{\bz} \in \reals^{n^2}$ we have $h'(\tilde{\bz}) = \max\{0,\min\{\hat{b},h(\tilde{\bz})\}\}$, thus, for $\hat{b} \geq 0$ the hypothesis $h'$ is obtained from $h$ by clipping the output to the interval $[0,\hat{b}]$.
Let $\ell_{I}(h')=\frac{1}{|I|}\sum_{i \in I}(h'(\hat{\bz}_i)-\hat{y}_i)^2$. 
Now, if $\ell_I(h') \leq \frac{2}{n}$, then $\ca$ returns $1$, and otherwise it returns $0$.
We remark that the decision of our algorithm is based on $h'$ (rather than $h$) since we need the outputs to be bounded, in order to allow using Hoeffding's inequality in our analysis, which we discuss in the next subsection.

\subsection{Analyzing the algorithm $\ca$}

Note that the algorithm $\ca$ runs in $\poly(n)$ time.
We now show that if $\cs$ is pseudorandom then $\ca$ returns $1$ with probability greater than $\frac{2}{3}$, and if $\cs$ is random then $\ca$ returns $1$ with probability less than $\frac{1}{3}$. To that end, we use similar arguments to the proof of  \thmref{thm:hard smoothed}.

In \lemref{lem:realizable2}, we show that if $\cs$ is pseudorandom then with probability at least $\frac{39}{40}$ (over $\bxi \sim \cn(\zero, \tau^2 I_p)$ and $\bzeta_i \sim \cn(\zero,\omega^2 I_{n^2})$ for all $i \in [m(n)]$) the examples $(\hat{\bz}_1, \hat{y}_1),\ldots,(\hat{\bz}_{m(n)},\hat{y}_{m(n)})$ returned by the oracle are realized by $\hat{N}$. 
Recall that the algorithm $\cl$ is such that with probability at least $\frac{3}{4}$ (over $\bxi \sim \cn(\zero, \tau^2 I_p)$, the i.i.d. inputs $\hat{\bz}_i \sim \hat{\cd}$, and possibly its internal randomness), given a size-$m(n)$ dataset labeled by $\hat{N}$, it returns a hypothesis $h$ such that $\E_{\hat{\bz} \sim \hat{\cd}} \left[(h(\hat{\bz})-\hat{N}(\hat{\bz}))^2 \right] \leq \frac{1}{n}$.
Hence, with probability at least $\frac{3}{4} - \frac{1}{40}$ the algorithm $\cl$ returns such a good hypothesis $h$, given $m(n)$ examples labeled by our examples oracle. 
Indeed, note that $\cl$ can return a bad hypothesis only if the random choices are either bad for $\cl$ (when used with realizable examples) or bad for the realizability of the examples returned by our oracle.
By the definition of $h'$ and the construction of $\hat{N}$, if $h$ has small error then $h'$ also has small error, namely, 
\[
	 \E_{\hat{\bz} \sim \hat{\cd}} \left[(h'(\hat{\bz})-\hat{N}(\hat{\bz}))^2 \right] 
	 \leq \E_{\tilde{\bz} \sim \hat{\cd}} \left[(h(\hat{\bz})-\hat{N}(\hat{\bz}))^2 \right] 
	 \leq \frac{1}{n}~.
\]

Let $\hat{\ell}_{I}(h')=\frac{1}{|I|}\sum_{i \in I}(h'(\hat{\bz}_i)-\hat{N}(\hat{\bz}_i))^2$.
Recall that by our choice of $\tau$ we have $\Pr[\hat{b} > \frac{11}{10}] \leq \frac{1}{n}$.
Since, $(h'(\hat{\bz})-\hat{N}(\hat{\bz}))^2 \in [0,\hat{b}^2]$ for all $\hat{\bz} \in \reals^{n^2}$,
by Hoeffding's inequality, we have for a sufficiently large $n$ that
\begin{align*}
	\Pr\left[\left|\hat{\ell}_{I}(h') -  \E_{\hat{\cs}_I}\hat{\ell}_{I}(h')\right| \geq \frac{1}{n}\right]
	&= \Pr\left[\left|\hat{\ell}_{I}(h') -  \E_{\hat{\cs}_I}\hat{\ell}_{I}(h')\middle| \geq \frac{1}{n} \right| \hat{b} \leq \frac{11}{10}\right] \cdot \Pr\left[ \hat{b} \leq \frac{11}{10} \right]
	\\
	&\;\;\;\; + \Pr\left[\left|\hat{\ell}_{I}(h') -  \E_{\hat{\cs}_I}\hat{\ell}_{I}(h')\middle| \geq \frac{1}{n} \right| \hat{b} > \frac{11}{10}\right] \cdot \Pr\left[ \hat{b} > \frac{11}{10} \right]
	\\
	&\leq 2 \exp\left( -\frac{2n^3}{n^2 (11/10)^4} \right) \cdot 1 + 1 \cdot \frac{1}{n}
	\\
	&\leq \frac{1}{40}~.
\end{align*}
Moreover, by \lemref{lem:realizable2},
\[
	\Pr \left[  \ell_I(h') \neq \hat{\ell}_I(h') \right]
	\leq \Pr \left[ \exists i \in I \text{ s.t. } \hat{y}_i \neq \hat{N}(\hat{\bz}_i) \right]
	\leq \frac{1}{40}~. 
\]

Overall, by the union bound we have with probability at least $1-\left( \frac{1}{4} + \frac{1}{40} + \frac{1}{40} + \frac{1}{40}\right)  > \frac{2}{3}$ for sufficiently large $n$ that:
\begin{itemize}
	\item $\E_{\hat{\cs}_I}\hat{\ell}_{I}(h') = \E_{\hat{\bz} \sim \hat{\cd}} \left[(h'(\hat{\bz})-\hat{N}(\hat{\bz}))^2 \right] \leq \frac{1}{n}$.
	\item $\left|\hat{\ell}_{I}(h') -  \E_{\hat{\cs}_I}\hat{\ell}_{I}(h')\right| \leq \frac{1}{n}$.
	\item $\ell_I(h') - \hat{\ell}_I(h') = 0$.
\end{itemize}
Combining the above, we get that if $\cs$ is pseudorandom, then with probability greater than $\frac{2}{3}$ we have
\[
	\ell_I(h')
	= \left( \ell_I(h') - \hat{\ell}_I(h') \right) + \left( \hat{\ell}_I(h') - \E_{\hat{\cs}_I}\hat{\ell}_{I}(h') \right) +\E_{\hat{\cs}_I}\hat{\ell}_{I}(h')  
	\leq 0 + \frac{1}{n} + \frac{1}{n} 
	= \frac{2}{n}~.
\]

We now consider the case where $\cs$ is random.
For an example $\hat{\bz}_i = \bz_i + \bzeta_i$ returned by the oracle, we denote $\bz'_i = (\bz_i)_{[kn]} \in \{0,1\}^{kn}$. Thus, $\bz'_i$ is the input that the oracle used before adding the $n^2-kn$ additional components and adding noise $\bzeta_i$.
Let $\cz' \subseteq \{0,1\}^{kn}$ be such that $\bz' \in \cz'$ iff $\bz'=\bz^{S}$ for some hyperedge $S$.
If $\cs$ is random, then by the definition of our examples oracle, for every $i \in [m(n) + n^3]$ such that $\bz'_i \in \cz'$, we have $\hat{y}_i=\hat{b}$ with probability $\frac{1}{2}$ and $\hat{y}_i=0$ otherwise. Also, by the definition of the oracle, $\hat{y}_i$ is independent of $S_i$, independent of the $n^2-kn$ additional components that where added, and independent of the noise $\bzeta_i \sim \cn(\zero, \omega^2 I_{n^2})$ that corresponds to $\hat{\bz}_i$.

If $\hat{b} \geq \frac{9}{10}$ then for a sufficiently large $n$ the hypothesis $h'$ satisfies for each random example $(\hat{\bz}_i,\hat{y}_i) \in \hat{\cs}_I$ the following:
\begin{align*} 
	\Pr_{(\hat{\bz}_i,\hat{y}_i)}&\left[(h'(\hat{\bz}_i)-\hat{y}_i)^2 \geq \frac{1}{5}\right]
	\\
	&\geq \Pr_{(\hat{\bz}_i,\hat{y}_i)} \left[\left.(h'(\hat{\bz}_i)-\hat{y}_i)^2 \geq \frac{1}{5} \; \right| \; \bz'_i \in \cz' \right] \cdot \Pr \left[\bz'_i \in \cz' \right]
	\\
	&\geq  \Pr_{(\hat{\bz}_i,\hat{y}_i)} \left[\left.(h'(\hat{\bz}_i)-\hat{y}_i)^2 \geq \left(\frac{\hat{b}}{2}\right)^2 \; \right| \; \bz'_i \in \cz' \right] \cdot \Pr \left[\bz'_i \in \cz' \right] 
	\\
	&\geq \frac{1}{2}  \cdot \Pr \left[\bz'_i \in \cz'\right]~.
\end{align*}
In \lemref{lem:prob z good discrete}, we show that for a sufficiently large $n$ we have $\Pr \left[\bz'_i \in \cz' \right] \geq \frac{1}{\log(n)}$.
Hence,
\begin{align*}
	\Pr_{(\hat{\bz}_i,\hat{y}_i)}  \left[(h'(\hat{\bz}_i)-\hat{y}_i)^2 \geq \frac{1}{5}\right]
	\geq  \frac{1}{2}  \cdot \frac{1}{\log(n)}
	\geq \frac{1}{2\log(n)}~.
\end{align*}
Thus, if $\hat{b} \geq \frac{9}{10}$ then we have
\[
	\E_{\hat{\cs}_I}\left[ \ell_I(h') \right] \geq \frac{1}{5} \cdot \frac{1}{2\log(n)} = \frac{1}{10\log(n)}~.
\]
Therefore, for large $n$ we have
\[
	\Pr\left[ \E_{\hat{\cs}_I}\left[ \ell_I(h') \right] \geq \frac{1}{10\log(n)} \right] \geq 1-\frac{1}{n} \geq \frac{7}{8}~.
\]

Since, $(h'(\hat{\bz})-\hat{y})^2 \in [0,\hat{b}^2]$ for all $\hat{\bz},\hat{y}$ returned by the examples oracle, and the examples $\hat{\bz}_i$ for $i \in I$ are i.i.d., then by Hoeffding's inequality, we have for a sufficiently large $n$ that
\begin{align*}
	\Pr\left[\left|\ell_{I}(h') -  \E_{\hat{\cs}_I}\ell_{I}(h')\right| \geq \frac{1}{n}\right]
	&= \Pr\left[\left|\ell_{I}(h') -  \E_{\hat{\cs}_I}\ell_{I}(h')\middle| \geq \frac{1}{n} \right| \hat{b} \leq \frac{11}{10}\right] \cdot \Pr\left[ \hat{b} \leq \frac{11}{10} \right]
	\\
	&\;\;\;\; + \Pr\left[\left|\ell_{I}(h') -  \E_{\hat{\cs}_I}\ell_{I}(h')\middle| \geq \frac{1}{n} \right| \hat{b} > \frac{11}{10}\right] \cdot \Pr\left[ \hat{b} > \frac{11}{10} \right]
	\\
	&\leq 2 \exp\left( -\frac{2n^3}{n^2 (11/10)^4} \right) \cdot 1 + 1 \cdot \frac{1}{n}
	\\
	&\leq \frac{1}{8}~.
\end{align*}

Hence, for large enough $n$, with probability at least $1 - \frac{1}{8} - \frac{1}{8} = \frac{3}{4} > \frac{2}{3}$ we have both $ \E_{\hat{\cs}_I}\left[ \ell_I(h') \right] \geq \frac{1}{10\log(n)}$ and $\left|\ell_{I}(h') -  \E_{\hat{\cs}_I}\ell_{I}(h')\right| \leq \frac{1}{n}$, and thus
\[
	\ell_I(h') \geq \frac{1}{10\log(n)} - \frac{1}{n} > \frac{2}{n}~.
\]

Overall, if $\cs$ is pseudorandom then with probability greater than $\frac{2}{3}$ the algorithm $\ca$ returns $1$, and if $\cs$ is random then with probability greater than $\frac{2}{3}$ the algorithm $\ca$ returns $0$. Thus, the distinguishing advantage is greater than $\frac13$.
This concludes the proof of the theorem. 
It remains to prove the deffered lemma on the realizability of the examples returned by the examples oracle:


\begin{lemma} \label{lem:realizable2}
	If $\cs$ is pseudorandom then with probability at least $\frac{39}{40}$ over $\bxi \sim \cn(\zero, \tau^2 I_p)$ and $\bzeta_i \sim \cn(\zero,\omega^2 I_{n^2})$ for $i \in [m(n)+n^3]$, the examples $(\hat{\bz}_1, \hat{y}_1),\ldots,(\hat{\bz}_{m(n)+n^3},\hat{y}_{m(n)+n^3})$ returned by the oracle are realized by $\hat{N}$.
\end{lemma}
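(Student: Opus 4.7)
The plan is to mimic the structure of \lemref{lem:realizable}, but using the joint properties \ref{prop:first 2}--\ref{prop:last 2} that were established for the perturbed network $\hat{N}$ evaluated on a smoothed input $\hat{\bz}=\bz+\bzeta$, rather than the Gaussian-thresholding properties used in \thmref{thm:hard smoothed}. The depth-$2$ setting actually simplifies things: there is no $\ce_3$ subnetwork to handle, and there are no ``bad interval'' cases around the threshold $c$, because inputs $\bz_i \in \{0,1\}^{n^2}$ and the only randomness near the neurons comes from the Gaussian perturbation $\bzeta_i$ of the input.

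First, I would fix a good event for $\bxi$. By the choice of $\tau$ made in the preceding subsection, with probability at least $1-\frac{1}{n}$ over $\bxi \sim \cn(\zero,\tau^2 I_p)$ the coordinates satisfy $|\xi_j|\leq \tfrac{1}{10}$ and, for every fixed $\bz \in \{0,1\}^{n^2}$, the inputs to the neurons of $\tilde N(\bz)$ and of $\hat N(\bz)$ differ by at most $\tfrac{1}{4}$. Conditioning on this event, I would then fix any specific example index $i \in [m(n)+n^3]$, consider the (random) clean input $\bz_i \in \{0,1\}^{n^2}$ produced by the oracle, and apply the choice of $\omega$: with probability at least $1-\exp(-n/2)$ over $\bzeta_i \sim \cn(\zero,\omega^2 I_{n^2})$ the inputs to all neurons of $\hat N$ change by at most another $\tfrac{1}{4}$ when the input moves from $\bz_i$ to $\hat{\bz}_i=\bz_i+\bzeta_i$. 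Combining the two, the inputs to every neuron of $\hat N(\hat{\bz}_i)$ differ from the inputs of $\tilde N(\bz_i)$ by at most $\tfrac12$, so the exact guarantees on $N_1,N_2$ (Lemmas \ref{lem:network N1 second layer} and \ref{lem:network N2 second layer}, applied at the discrete point $\bz'_i=(\bz_i)_{[kn]}$) translate into properties \ref{prop:first 2} and \ref{prop:last 2} for $\hat N(\hat{\bz}_i)$. A union bound over the $m(n)+n^3 \leq \poly(n)$ examples gives failure probability $\leq (m(n)+n^3)\exp(-n/2) \leq \tfrac{1}{n}$ on the $\bzeta_i$ side, so the overall good event has probability at least $1-\tfrac{2}{n} \geq \tfrac{39}{40}$ for large $n$.

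Assuming this good event, I would verify $\hat{y}_i=\hat{N}(\hat{\bz}_i)$ in each of the cases of the oracle. Let $\hat{b} \in [\tfrac{9}{10},\tfrac{11}{10}]$ denote the perturbed bias of the output neuron, and note that since all clean outgoing weights from $\ce_1 \cup \ce_2$ to the output are $-1$, their perturbed versions lie in $[-\tfrac{11}{10},-\tfrac{9}{10}]$. If $\bz'_i$ is not a hyperedge encoding, the oracle sets $\hat{y}_i=0$; by \ref{prop:last 2} some neuron in $\ce_2$ has input $\geq \tfrac{3}{2}$, hence output $\geq \tfrac{3}{2}$, so the pre-activation at the output is at most $\tfrac{11}{10}-\tfrac{9}{10}\cdot \tfrac{3}{2}<0$, giving $\hat{N}(\hat{\bz}_i)=0$. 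If $\bz'_i$ is a hyperedge encoding then $\bz'_i=\bz^{S_i}$, and by \ref{prop:last 2} all $\ce_2$ neurons have output $0$; the pseudorandomness of $\cs$ then gives $P_\bx(\bz^{S_i})=y_i$. When $y_i=1$, \ref{prop:first 2} yields a $\ce_1$ neuron with output $\geq \tfrac{3}{2}$, and the same pre-activation computation shows $\hat{N}(\hat{\bz}_i)=0=\hat{y}_i$. When $y_i=0$, \ref{prop:first 2} makes all $\ce_1$ neurons output $0$ as well, the pre-activation at the output reduces to $\hat{b}>0$, and $\hat{N}(\hat{\bz}_i)=\hat{b}=\hat{y}_i$.

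The main obstacle is really only bookkeeping: one must make sure that the single parameter perturbation $\bxi$ is ``simultaneously good'' for every possible discrete input $\bz_i$ (which the choice of $\tau$ already provides because the Lipschitz argument is uniform in the input of bounded norm), and that the $\bzeta_i$'s — which are drawn independently across examples — admit a union bound strong enough to absorb the polynomial number of examples, which is exactly why $\omega$ was chosen with sub-exponential tail $\exp(-n/2)$ rather than merely $1/n$. No additional probabilistic estimate beyond this union bound is needed; everything else is a deterministic case analysis driven by \ref{prop:first 2}, \ref{prop:last 2}, and the pseudorandomness identity $P_\bx(\bz^{S_i})=y_i$.
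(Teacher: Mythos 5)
Your proof is correct and follows essentially the same route as the paper's: the same decomposition into a good event for $\bxi$ (bounding the $\tilde N$-to-$\hat N$ drift at discrete inputs by $\tfrac14$), a per-example good event for $\bzeta_i$ (bounding the $\hat N(\bz_i)$-to-$\hat N(\hat{\bz}_i)$ drift by another $\tfrac14$), the $(m(n)+n^3)\exp(-n/2)$ union bound that motivates the sub-exponential tail on $\omega$, and the identical deterministic case analysis using properties \ref{prop:first 2}--\ref{prop:last 2}, the perturbed output bias $\hat b\in[\tfrac{9}{10},\tfrac{11}{10}]$, and the perturbed outgoing weights in $[-\tfrac{11}{10},-\tfrac{9}{10}]$.
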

\begin{proof}
	By our choice of $\tau$ and $\omega$ and the construction of $N_1,N_2$, with probability at least $1-\frac{1}{n}$ over $\bxi \sim \cn(\zero, \tau^2 I_p)$, we have $|\xi_j| \leq \frac{1}{10}$ for all $j \in [p]$, and for every $\bz \in \{0,1\}^{n^2}$ the following holds: Let $\bzeta \sim \cn(\zero,\omega^2 I_{n^2})$ and let $\hat{\bz} = \bz + \bzeta$. Then with probability at least $1 - \exp(-n/2)$ over $\bzeta$ the inputs to the neurons $\ce_1,\ce_2$ in the computation $\hat{N}(\hat{\bz})$ satisfy Properties~\ref{prop:first 2} and~\ref{prop:last 2}. Hence, with probability at least $1 - \frac{1}{n} - (m(n)+n^3) \exp(-n/2) \geq 1 - \frac{2}{n}$ (for a sufficiently large $n$), $|\xi_j| \leq \frac{1}{10}$ for all $j \in [p]$, and Properties~\ref{prop:first 2} and~\ref{prop:last 2} hold for the computations $\hat{N}(\hat{\bz}_i)$ for all $i \in [m(n)+n^3]$.
	 It remains to show that if $|\xi_j| \leq \frac{1}{10}$ for all $j \in [p]$ and Properties~\ref{prop:first 2} and~\ref{prop:last 2} hold, then the examples $(\hat{\bz}_1, \hat{y}_1),\ldots,(\hat{\bz}_{m(n)+n^3},\hat{y}_{m(n)+n^3})$ are realized by $\hat{N}$.
	
	Let $i \in [m(n)+n^3]$. 
	We denote $\hat{\bz}_i = \bz_i + \bzeta_i$, namely, the $i$-th example returned by the oracle was obtained by adding noise $ \bzeta_i$ to $\bz_i \in \{0,1\}^{n^2}$. We also denote $\bz'_i = (\bz_i)_{[kn]} \in \{0,1\}^{kn}$. 
	Since $|\xi_j| \leq \frac{1}{10}$ for all $j \in [p]$, and all incoming weights to the output neuron in $\tilde{N}$ are $-1$, then in $\hat{N}$ all incoming weights to the output neuron are in $\left[ - \frac{11}{10}, -\frac{9}{10} \right]$, and the bias term in the output neuron, denoted by $\hat{b}$, is in $\left[ \frac{9}{10}, \frac{11}{10} \right]$. 
	Consider the following cases:
	\begin{itemize}
		\item If $\bz'_i$ is not an encoding of a hyperedge then $\hat{y}_i = 0$. Moreover, in the computation $\hat{N}(\hat{\bz}_i)$, there exists a neuron in $\ce_2$ with output at least $\frac{3}{2}$ (by Property~\ref{prop:last 2}) . Since all incoming weights to the output neuron in $\hat{N}$ are in $\left[ - \frac{11}{10}, -\frac{9}{10} \right]$, and $\hat{b} \in \left[ \frac{9}{10}, \frac{11}{10} \right]$, then the input to the output neuron (including the bias term) is at most $\frac{11}{10} - \frac{3}{2} \cdot \frac{9}{10} < 0$, and thus its output is $0$.
	
	\item If $\bz'$ is an encoding of a hyperedge $S$, then by the definition of the examples oracle we have $S=S_i$. Hence:
		\begin{itemize}
			
			\item	If $y_i = 0$ then the oracle sets $\hat{y}_i=\hat{b}$. Since $\cs$ is pseudorandom, we have $P_\bx(\bz^S) = P_\bx(\bz^{S_i}) = y_i = 0$. Hence, in the computation $\hat{N}(\hat{\bz}_i)$ the inputs to all neurons in $\ce_1,\ce_2$ are at most $-\frac{1}{2}$ (by Properties~\ref{prop:first 2} and~\ref{prop:last 2}), and thus their outputs are $0$. Therefore, $\hat{N}(\hat{\bz}_i) = \hat{b}$.
				
			\item If $y_i = 1$ then the oracle sets $\hat{y}_i=0$. Since $\cs$ is pseudorandom, we have $P_\bx(\bz^S) = P_\bx(\bz^{S_i}) = y_i = 1$. Hence, in the computation $\hat{N}(\hat{\bz}_i)$ there exists a neuron in $\ce_1$ with output at least $\frac{3}{2}$ (by Property~\ref{prop:first 2}). Since all incoming weights to the output neuron in $\hat{N}$ are in $\left[ - \frac{11}{10}, -\frac{9}{10} \right]$, and $\hat{b} \in \left[ \frac{9}{10}, \frac{11}{10} \right]$, then the input to output neuron (including the bias term) is at most $\frac{11}{10} - \frac{3}{2} \cdot \frac{9}{10} < 0$, and thus its output is $0$.
			
		\end{itemize}
	
	\end{itemize}
\end{proof}

\end{document}